\crefname{equation}{}{}
\newcommand{\ubar}[1]{\underaccent{\bar}{#1}}
\newtheorem{theorem}{Theorem}[section] 
\newtheorem{lemma}[theorem]{Lemma}
\newtheorem{claim}[theorem]{Claim}
\theoremstyle{definition}
\newtheorem{assumption}[theorem]{Assumption}
\newcommand{\floor}[1]{\left\lfloor #1 \right\rfloor}
\newcommand{\ceil}[1]{\left\lceil #1 \right\rceil}
\newcommand{\R}{\mathbb {R}}  % The real numbers.
\newcommand{\abs}[1]{\left\lvert#1\right\rvert}
\newcommand{\norm}[1]{\left\lVert#1\right\rVert}
\newcommand{\sang}[1]{\left\langle #1 \right\rangle}
\newcommand{\paren}[1]{\left( #1 \right)}
\newcommand{\sqb}[1]{\left[ #1 \right]}
\newcommand{\set}[1]{\left\{ #1 \right\}}
\newcommand {\F } {{\mathrm{F}}}
\DeclareMathOperator*{\argmin}{argmin}
\newcommand{\Diam}{\text{Diam}}
\definecolor{codered}{rgb}{.8,.1,.1}
\definecolor{codegreen}{rgb}{0,0.6,0}
\definecolor{codeblue}{rgb}{0,0.1,1}
\definecolor{codegray}{rgb}{0.5,0.5,0.5}
\definecolor{codepurple}{rgb}{0.58,0,0.82}
\title{
% Estimating 
Heterogeneous Treatment Effects in Panel Data}
\author{%
  Retsef Levi\\
  Sloan School of Management\\
  Massachusetts Institute of Technology\\
  \texttt{retsef@mit.edu} 
  \And
  Elisabeth Paulson \\
  Harvard Business School \\
  \texttt{epaulson@hbs.edu} \\
  \AND
  Georgia Perakis \\
  Sloan School of Management\\
  Massachusetts Institute of Technology \\
  \texttt{georgiap@mit.edu} \\
  \And
  Emily Zhang \\
  Operations Research Center \\
  Massachusetts Institute of Technology \\
  \texttt{eyzhang@mit.edu} \\
}
\begin{document}

\maketitle

\begin{abstract}
We address a core problem in causal inference: estimating heterogeneous treatment effects using panel data with general treatment patterns. Many existing methods either do not utilize the potential underlying structure in panel data or have limitations in the allowable treatment patterns. In this work, we propose and evaluate a new method that first partitions observations into disjoint clusters with similar treatment effects using a regression tree, and then leverages the (assumed) low-rank structure of the panel data to estimate the average treatment effect for each cluster. Our theoretical results establish the convergence of the resulting estimates to the true treatment effects. Computation experiments with semi-synthetic data show that our method achieves superior accuracy compared to alternative approaches, using a regression tree with no more than 40 leaves. Hence, our method provides more accurate and interpretable estimates than alternative methods.
\end{abstract}

\section{Introduction}

Suppose we observe an outcome of interest across $n$ distinct units over $T$ time periods; such data is commonly referred to as panel data. Each unit may have been subject to an intervention during certain time periods that influences the outcome. As a concrete example, a unit could be a geographic region affected by a new economic policy, or an individual consumer or store influenced by a marketing promotion. Our goal is to understand the impact of this intervention on the outcome. Since the effect of the intervention might vary across individual units and time periods as a function of the unit-level and time-varying covariates, we aim to estimate the heterogeneous treatment effects. This is a key problem in econometrics and causal inference, enabling policymakers or business owners to make more informed decisions about which units to target for future interventions.

% We delay a formal development to later. 

% \subsection{Contributions}

We develop a new method, the Panel Clustering Estimator (PaCE), for estimating heterogeneous treatment effects in panel data under general treatment patterns. The causal effects are modeled as a non-parametric function of the covariates of the units, which may vary over time. To estimate heterogeneous effects, PaCE splits the observations into disjoint clusters using a regression tree and estimates the average treatment effect of each cluster.
%using an extension of the methodology from \cite{farias2021learning}. 
%The splitting process is an iterative, greedy process, that progressively splits the regression tree, partitioning the data into more granular clusters that better represent the true heterogeneous treatment effects. \epcomment{Do we really need this last sentence?}
% Direct evaluation of causal inference methods on real-world test sets, as commonly done in prediction tasks, is not possible because true treatment effects are not observed. Instead, asymptotic theory and evaluation using semi-synthetic data, where the treatment effect is known, are used to validate methods. We show that our method converges to the true heterogeneous treatment effect estimates asymptotically, and validate our method empirically using semi-synthetic data.
In addition to the development of this novel method, we make the following theoretical and empirical contributions. 
% Our contributions are formally summarized as follows.

\paragraph{Regression tree with bounded bias (\cref{thm: tree})} We show that, subject to mild assumptions on the regression tree and a density assumption on the covariates, the bias introduced by approximating the treatment effect function with a piece-wise constant function, produced by the regression tree, decreases polynomially as the number of leaves in the tree increases.

\paragraph{Optimal rate for average treatment effect estimation (\cref{thm: convergence})}
We obtain guarantees for the recovery of average treatment effects for each cluster of observations that matches the optimal rate, up to logarithmic factors. This extends the convergence guarantee of \cite{farias2021learning} to a setting with \textit{multiple} treatments. Similar to \cite{farias2021learning}, we allow for general intervention patterns. %\epdelete{Essentially, all our assumptions are the extension of the (provably minimal) assumptions made in \cite{farias2021learning} to multiple treatments.}

\paragraph{Enhanced empirical accuracy with simple estimator}
Using semi-synthetic data, we demonstrate that PaCE achieves empirical performance competitive with, and often superior to, alternative methods for heterogeneous treatment effect estimation, including double machine learning, causal forest, and matrix completion-based methods. Furthermore, PaCE offers a much simpler and more interpretable estimate, as we achieve this enhanced performance using a regression tree with at most 40 leaves. 

\subsection{Related literature}

Causal inference using panel data is often approached through the synthetic control framework, where treated units are compared to a synthetic control group constructed from untreated units \cite{abadie2010synthetic, abadie2003economic}. However, for these methods, the treatment is restricted to a block. 
Matrix completion approaches offer a more flexible alternative, allowing for general treatment patterns \cite{matrix_completion, bai2021matrix, xiong2023large}. The treated entries are viewed as missing entries, and low-rank matrix completion is utilized to estimate counterfactual outcomes for the treated entries. 
In most of the matrix completion literature, the set of missing or treated entries are assumed to be generated randomly \cite{chen2019inference}.
However, this assumption is not realistic in our panel data setting. Other works explore matrix completion under deterministic sampling \cite{chatterjee2020deterministic, farias2021learning, foucart2020weighted, klopp2017robust, liu2017new}, and our work extends the methodology from \cite{farias2021learning}.
% Furthermore, as both synthetic control and matrix completion methods focus on estimating counterfactual outcomes for the treated observations, they do not readily provide heterogeneous treatment effect estimates for untreated observations.

Several methods for estimating heterogeneous treatment effects use supervised learning techniques to fit models for the outcome or treatment assignment mechanisms \cite{chernozhukov2017double, chernozhukov2017orthogonal, DR, XLearner, nie2021quasi}.
Although these methods are intended for cases where observations are drawn i.i.d.~from one or multiple predefined distributions, they could still be applied to panel data. In contrast, our approach is specifically designed for panel data with underlying structure. By avoiding complex machine learning methods, our approach not only provides accurate results for panel data but also offers interpretable estimates. 

The first step of our method involves grouping data into clusters that vary in their treatment effects using a regression tree. As in \cite{breiman2001random}, we greedily search for good splits, iteratively improving the tree's representation of the true heterogeneous treatment effects. However, unlike supervised learning where the splitting criterion is directly computable, treatment effects are not readily observed. Therefore, we estimate the splitting criterion, an approach used in \cite{athey2016recursive, athey2019generalized, su2009subgroup, wager2018estimation}. While methods like causal forest only use information within a given leaf to estimate the treatment effect for observations in that leaf \cite{wager2018estimation}, our method leverages global information from the panel structure to determine the average treatment effect of each cluster, enabling us to achieve high accuracy with just one tree.

% , which focused on the case where observations are drawn i.i.d.~from a given distribution

The second step involves estimating the average treatment effect in each cluster. For this step, we utilize the de-biased convex estimator introduced by \cite{farias2021learning}. This estimator is applicable to panel data with general treatment patterns but is specifically designed for estimating average, rather than heterogeneous, treatment effects. 
It uses a convex estimator, similar to those used in \cite{gobillon2016regional, xiong2023optimal}, and introduces a novel de-biasing technique that improves upon the initial estimates. Additionally, 
\cite{farias2021learning} builds on previous works \cite{chen2020noisy, chen2021bridging} to provide theoretical guarantees for recovering the average treatment effect with provably minimal assumptions on the intervention patterns. The results of \cite{farias2021learning} are specific to the case where there is only one treatment. We build on their proof techniques to show convergence when the number of treatments is $O(\log n)$, and also nest this technique into the first step of our algorithm to estimate heterogeneous effects.

\subsection{General notation} For any matrix $A$, $\norm{A}$ denotes the spectral norm, $\norm{A}_\F$ denotes the Frobenius norm, $\norm{A}_\star$ denotes the nuclear norm, and $\|A\|_{\text{sum}}$ denotes the sum of the absolute values of the entries in $A$. For any vector $a$, $\norm{a}$ denotes its Euclidean norm, and $\norm{a}_{\infty}$ denotes the maximum absolute value of its components. We use $\circ$ to denote element-wise matrix multiplication. %\epcomment{This might be clearer if we say $a(t)\lesssim b(t)$ whenever $a(t) \leq c\cdot b(t)$ for some universal constant $c$ and all $t$.} 
% $\sqb{I_{\paren{A_{ij}\leq c}}}_{i,j}$. 
For a matrix subspace $\mathbf T$, let $P_{{\mathbf T}^{\perp}}(\cdot)$ denote the projection operator onto ${\mathbf T}^{\perp}$, the orthogonal space of $\mathbf T$. 
We define the sum of two subspaces $\mathbf T_1$ and $\mathbf T_2$ as $\mathbf T_1+ \mathbf T_2 = \set{T_1+T_2\mid T_1\in\mathbf T_1,T_2\in\mathbf T_2}$.
%$I$ is an indicator variable. %_C=1$ if condition $C$ is true and 0 otherwise. 
%For a matrix $A$ and a scalar $c$, $I_{\set{A \leq c}}$ denotes the matrix formed by applying the indicator condition element-wise. 
We write $a\lesssim b$ whenever $a \leq c\cdot b$, for some fixed constant $c$; $a\gtrsim b$ is defined similarly. 

\section{Model and algorithm}\label{section: model}
%\epedit{In this section we introduce our formal model of the observed data and treatment effects, and present our algorithm, the Panel Clustering Estimator (PaCE).}

PaCE is designed for panel data, where an outcome of interest is observed across $n$ distinct units for $T$ time periods. Let $O\in \mathbb  R^{n\times T}$ be the matrix of these observed outcomes. Our objective is to discern how these outcomes were influenced by various treatments under consideration. 

We assume that, in a hypothetical scenario where the treatments were not applied, the expected outcomes would be represented by an unknown low-rank matrix $M^{\star}\in \mathbb  R^{n\times T}$. 
To incorporate the influence of the external factors, we introduce the covariate tensor $\mathbf X:=[X_1,\dots,X_p] \in \mathbb  R^{p\times n\times T}$, where an element $X_{izt}$ signifies the $i$-th covariate of unit $z$ at time $t$. For convenience, we will write $X^{zt} := [X_{1zt},\dots,X_{pzt}]$ as the vector of relevant covariates for unit $z$ at time $t$.
We consider $q$ distinct treatments, each represented by a binary matrix $W_1,\dots,W_q$ that encodes which observations are subjected to that treatment. 
For each treatment $i\in[q]$ applied to unit $z$ at time period $t$, the treatment effect is modeled as a non-parametric Lipschitz function of the covariates $\mathcal T_i^{\star}(X^{zt})$. 
We consider the treatment effects of distinct treatments to be additive.\footnote{This is a flexible framework because, as we allow for up to $\log(n)$ treatments, the interactions of individual treatments can be modeled as additional treatments.}
Combining all of these elements, the observed outcome matrix $O$ can be expressed as the sum of $M^{\star}$, the combined effects of the treatments, and a noise matrix $E$. Mathematically,
\begin{align*}
O = M^{\star} + \sum_{i=1}^q \mathcal T_i^{\star}(\mathbf X)\circ W_i + E,
\end{align*}
where 
$\mathcal T_i^{\star}(\mathbf X)$ denotes the matrix 
where the element in $z$-th row and $t$-th column is 
$\mathcal T_i^{\star}(X^{zt})$. We note that our model implicitly assumes unconfoundedness, which is common in the literature \cite{imbens2015causal}.

Our goal is to estimate the treatment effect function $\mathcal T^{\star}_i$ for each $i\in[q]$, having observed $O$, $\mathbf X$, and $W_1,\ldots,W_q$. Our methodology PaCE consists of two main steps. First, we build a specialized tree for each treatment $i\in[q]$, designed to partition the observations into disjoint clusters that differ in their treatment effects. Next, we estimate the average treatment effect of each cluster, thus obtaining estimates for the treatment effect functions that are piece-wise constant functions of the covariates. The following subsections will delve into the specifics of these two steps.

\subsection{Clustering the observations}

Initially, the entire dataset with $n\times T$ entries is in one single cluster. This cluster serves as the starting point for the iterative partitioning process, where we split the data into $\ell$ more granular clusters for each treatment that better represent the true heterogeneous treatment effects. We present our method for clustering observations in \cref{alg:clustering}. Below, we discuss its two main steps.

\begin{algorithm}[!ht]%[H]
\caption{Clustering the observations}
\label{alg:clustering}
\begin{algorithmic}[1]
\State \textbf{Input:} outcome matrix $O$, covariate matrices $X_1,\dots, X_p$,  treatment matrices $W_1,\dots,W_q$, maximum number of leaves $\ell$, regularization parameter $\lambda$.
\State 
For each treatment $i\in [q]$, initialize $j$-th cluster matrix $C^i_j$ such that all observations are in one single cluster:
$C_1^i\leftarrow \mathbf{1}_{n\times T}$ and $C_j^i\leftarrow \mathbf{0}_{n\times T}$ for $j >1$. 
\For{$l = 1$ to $\ell-1$ }% \textbf{and} a valid split was found}
    \State \textbf{Step 1:} Estimate the average treatment effect for each cluster:
    \begin{align}\label{prob:convex_clustering}
    \paren{\hat{M}, \hat{\tau}, \hat{m}} \leftarrow \argmin_{\substack{M \in \mathbb{R}^{n \times T}, \\
    \tau \in \mathbb{R}^{q \times l}, 
    m \in \mathbb{R}^n}}  
    \frac{1}{2}\norm{O-M - m\mathbf{1}^\top -\sum_{i=1}^q \sum_{j=1}^l \tau_{i,j}W_i\circ C^i_j}^2_\F + \lambda\norm{M}_\star.
    \end{align}
    \State \textbf{Step 2:} The following is performed for each treatment $q'\in[q]$ to select the valid single-variable split that minimizes the estimated mean squared error: 
    \begin{itemize} 
        \item 
        % For $l' \in [l], p'\in[p], x\in[\min(X_{p'}),\max(X_{p'})]$, 
        % We define the new $j$-th cluster matrix for treatment $i$, formed by splitting the $l'$-th cluster for treatment $q'$ on $X_{p'}=x$, as follows:
        For $i\in \sqb{q}$ and $j\in \sqb{l}$, let the function $\mathcal C_{j}^{i}(l', p', x)$ encode the new cluster $C^i_j$ after splitting the $l'$-th cluster for treatment $q'$ based on the comparison of the $p'$-th covariate to $x$: %$X_{p'}=x$; that is, 
        \begin{align*}
        \mathcal C_{j}^{i}(l', p', x) :=
        \begin{cases}
            C_{l'}^{q'} \circ  I_{\{X_{p'}\leq x\}}, & \text{for } j = l'\text{ and }i=q' \\
            C_{l'}^{q'} \circ  I_{\{X_{p'}> x\}}, & \text{for } j = l + 1\text{ and }i=q'\\
            C_j^i, & \text{otherwise.}
        \end{cases}
        \end{align*}

        \item 
        Identify the split that minimizes the estimated mean squared error: $(l^{\star},p^{\star},x^{\star}) \leftarrow \argmin_{l', p', x}\widehat{MSE}(l', p', x)$, where
        \begin{align*} 
        \widehat{MSE}(l', p', x):= \min_{\tau\in\mathbb {R}^{q\times (l +1)}}  \norm{O - \hat M - \hat m \mathbf{1}^\top - \sum_{i=1}^q \sum_{j=1}^{l+1} \tau_{i,j}W_i\circ \mathcal C^{i}_j(l', p', x)}_\F^2.
        \end{align*}

        \item For $j\in \set{l^\star, l + 1}$, update the cluster $C_j^{q'} \leftarrow \mathcal C_{j}^{q'}\paren{l^{\star},p^{\star},x^{\star}}$.

        % \item For any lost function $L$ that measures the discrepancy between $O-\hat M-\hat m\mathbf 1^\top$ and the combined estimated treatment effect $\sum_{i=1}^q \sum_{j=1}^{l+1} \tau_{i,j}W_i\circ N^{i}_j(l', p', x)$, where $\tau_{i,j}$ is selected to minimize $L$, select the split $(l^{\star},p^{\star},x^{\star})$ that minimizes $L$.

    \end{itemize}
\EndFor
% \State \textbf{Final Step:} De-bias the estimate by applying the method given in \cref{subsection: debiasing}.
% Return 
    % $\hat\tau - D^{-1}\Delta_1$,
% where $D$ and $\Delta_1$ are defined in \cref{lem:tau-decomposition}. 
\end{algorithmic}
\end{algorithm}

\paragraph{Step 1} 
We solve (the convex) Problem \ref{prob:convex_clustering} to obtain a rough estimate of the counterfactual matrix, represented by $\hat M + \hat m\mathbf 1^\top$, for a given clustering of observations. The regularization term $\norm{M}_\star$ penalizes the rank of $M$. 
The term $m\mathbf 1^\top$ is introduced to center each row of matrix $\hat M$ without penalization from the nuclear norm term. The inclusion of this term deviates from the convex problem in \cite{farias2021learning}, and we find that its inclusion improves empirical performance.
% We suggest the following strategy to select the regularization parameter $\lambda$. Supposing the suggested rank $r$ of $M^{\star}$ is known, then we recommend binary searching for a $\lambda$ such that the resulting $\hat M$ will be a rank $r$ matrix as well.
The choice of $\lambda$ is discussed in our theoretical results.

\paragraph{Step 2}
% For each treatment $i \in [q]$ and every $j\in[\ell]$, $C_j^i$ is a binary $n\times T$ matrix that indicates membership of each entry in cluster $j$. 
In the second step, we greedily choose a \textit{valid split} to achieve a better approximation of the true heterogeneous treatment effects. A valid split is one that complies with predefined constraints, which will be discussed in our theoretical guarantees. While the provided algorithm selects splits that minimize the mean squared error (MSE), alternative objectives such as mean absolute error (MAE) may be used instead.

% \textbf{Final Step:} Our last step is to adjust our estimate $\hat\tau$ to correct for any bias introduced by the regularization term. This correction is described in the following subsection. The estimate for the treatment effect of a given treatment on a given observation is the de-biased $\tau$ value for that cluster. 

\subsection{Estimating the average treatment effect within each cluster}\label{subsection: debiasing}

After partitioning all observations into $\ell$ clusters for each of the $q$ treatments, the next step in our algorithm is to estimate the average treatment effect within each cluster. This is done using an extension of the de-biased convex estimator introduced in \cite{farias2021learning}. We first obtain an estimate of the average treatment effects by solving a convex optimization problem. This initial estimate is then refined through a de-biasing step.

For simplicity, we define separate treatment matrices for each cluster. Letting for $k := q\times\ell$, $\Tilde Z_1,\dots, \Tilde Z_k$ are defined as follows: $\Tilde Z_{(i-1)\times \ell + j} = W_i\circ C_j^i$ for $i\in[q], j\in[\ell]$. These are binary matrices that identify observations that are in a given cluster and receive a given treatment. We denote the true average treatment effect of the treated in each cluster $i\in[k]$ as $\tilde\tau^{\star}_i$, and let $\delta_i$ be the associated residual matrices, which represent the extent to which individual treatment effects differ from the average treatment effect. More precisely, for $i\in[k]$, 
\begin{align*}
\tilde\tau^{\star}_i:={\big\langle\mathcal T^{\star}_{\ceil{\frac{i}{\ell}}}(\mathbf X),\Tilde Z_i\big\rangle}\big/{\big\|\Tilde Z_{i}\big\|_{\text{sum}}} \qquad\text{and}\qquad \delta_i :=  \mathcal T^{\star}_{\ceil{\frac{i}{\ell}}}(\mathbf X) \circ\Tilde Z_i - \tilde\tau^{\star}_i\Tilde Z_i.
\end{align*}
Finally, we define $\delta$ as the combined residual error, where $\delta := \sum_{i=1}^k \delta_i$, and $\hat{E}:= E + \delta$ is the total error. 

We normalize all of these treatment matrices to have Frobenius norm 1. This standardization allows the subsequent de-biasing step to be less sensitive to the relative magnitudes of the treatment matrices. Define $Z_{i} := \Tilde Z_i/\|\Tilde Z_i\|_\F$. By scaling down the treatment matrices in this way, our estimates are proportionally scaled up. Similarly, we define a rescaled version of $\tilde \tau^{\star}$, given by $\tau^{\star}_i:= \tilde\tau^{\star}_i \|\Tilde Z_i\|_\F.$

First, we solve the following convex optimization problem with the normalized treatment matrices to obtain an estimate of $M^{\star}$ and $\tau^{\star}$:
\begin{align} \label{eq:convex-program}
\paren{\hat{M}, \hat{\tau}, \hat{m}} \in \argmin_{M\in \mathbb {R}^{n\times T}, \tau\in\mathbb {R}^k, m\in\mathbb {R}^n} 
\frac{1}{2}\norm{O-M - m\mathbf{1}^\top -\sum_{i=1}^k \tau_iZ_i }^2_\F + \lambda\norm{M}_\star.
\end{align}

Because we introduced the $m\mathbf 1^\top$ term in the convex formulation, which centers the rows of $\hat M$ without penalization from the nuclear norm term, $\hat M$ serves as our estimate of $M^{\star}$, projected to have zero-mean rows, while $\hat m$ serves as our estimate of the row means of $M^{\star}$. For notational convenience, let $P_{\mathbf 1}$ denote the projection onto $\set{\alpha \mathbf 1^\top \mid \alpha\in\mathbb  R^n}$. That is, for any matrix $A\in \mathbb{R}^{n\times T}$,
\begin{align*}
P_{\mathbf 1}(A) = A\frac{\mathbf1 \mathbf1^\top}{T}\qquad\text{and}\qquad
P_{{\mathbf 1}^\perp}(A) = A\paren{I-\frac{\mathbf1 \mathbf1^\top}{T}}. 
\end{align*}
Hence $P_{{\mathbf 1}^\perp}(M^{\star})$ is the projection of $M^{\star}$ with zero-mean rows. Let $m^{\star} := \frac{M^{\star}\mathbf 1}{T}$ be the vector representing the row means of $M^{\star}$.  Let $r$ denote the rank of $P_{{\mathbf 1}^\perp}(M^{\star}) $,\footnote{Note that we can show $r \leq \text{rank}(M^*)+1$, so $P_{{\mathbf 1}^\perp}(M^{\star})$ is a low-rank matrix as well.} and let $P_{{\mathbf 1}^\perp}(M^{\star}) = U^{\star}\Sigma^{\star}V^{\star\top}$ be its SVD, with $U^{\star} \in \mathbb  R^{n\times r}, \Sigma^{\star} \in \mathbb  R^{r\times r},V^{\star} \in \mathbb  R^{T\times r}$. We denote by $\mathbf T^{\star}$ the span of the space $\set{\alpha \mathbf 1^\top \mid \alpha \in \mathbb  R^n}$ and the tangent space of $P_{{\mathbf 1}^\perp}(M^{\star})$ in the manifold of matrices with rank $r$. That is,
\begin{align*}
\mathbf T^{\star} &= \set{U^{\star}A^\top + BV^{\star\top} + a\mathbf{1}^\top \mid A\in \mathbb{R}^{T \times r}, B \in \mathbb{R}^{n \times r}, a \in \mathbb{R}^{n}}.
\end{align*}
The closed form expression of the projection $P_{{\mathbf T}^{\star\perp}}(\cdot)$ is given in \cref{lemma: projection}.

The following lemma gives a decomposition of $\hat \tau - \tau^{\star}$ that suggests a method for de-biasing of $\hat \tau$. The difference between our error decomposition and the decomposition presented in \cite{farias2021learning} stems from the fact that we introduced the $m\mathbf 1^\top$ term in \eqref{eq:convex-program}. %Consequently, our convex estimator allows each row of the estimated counterfactual matrix $\hat M$ to be centered without penalization.

\begin{lemma}[Error decomposition]\label{lem:tau-decomposition}
%Suppose $(\tau, M)$ satisfies \cref{eq:convex-condition-tau} and \cref{eq:convex-condition-M}, then
Suppose $(\hat{M}, \hat{m}, \hat{\tau})$ is a minimizer of \eqref{eq:convex-program}. Let $\hat{M} = \hat{U}\hat{\Sigma} \hat{V}^{\top}$ be the SVD of $\hat{M}$, and let $\hat{\mathbf T}$ denote the span of the tangent space of $\hat{M}$ and $\set{\alpha \mathbf 1^\top \mid \alpha \in \mathbb  R^n}$. 
Then,
\begin{equation} \label{eq:tau-decomposition}
D \paren{\hat{\tau}-\tau^{\star}}
= \Delta^1 + \Delta^2 + \Delta^3,
\end{equation}
where 
$D \in \R^{k \times k}$ is the matrix with entries $D_{ij} = \langle
P_{\hat {\mathbf T}^{\perp}}(Z_i),P_{\hat {\mathbf T}^{\perp}}(Z_j)
\rangle
$
and $\Delta^1, \Delta^2, \Delta^3 \in \mathbb {R}^{k}$ are vectors with components
\begin{align*}
\Delta^1_i = 
\lambda 
\sang{
Z_i, \hat U \hat V^\top 
} ,\quad
\Delta^2_i = 
\sang{Z_i, P_{\hat {\mathbf T}^{\perp}}(\hat{E})},\quad
\Delta^3_i = 
\sang{Z_i, P_{\hat {\mathbf T}^{\perp}}(M^{\star})}.
\end{align*}
\end{lemma}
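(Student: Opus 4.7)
The plan is to extract the decomposition directly from the first-order (KKT) optimality conditions of the convex program \eqref{eq:convex-program}. Denote the residual at the optimum by $R := O - \hat M - \hat m \mathbf 1^\top - \sum_i \hat\tau_i Z_i$. Optimality in $M$ gives $R = \lambda(\hat U\hat V^\top + W)$ with $W \in \mathbf T_{\hat M}^\perp$, $\|W\| \le 1$ (the standard subdifferential of the nuclear norm at $\hat M = \hat U \hat\Sigma \hat V^\top$); optimality in each $\tau_j$ gives $\langle Z_j, R\rangle = 0$; and optimality in $m$ gives $R\mathbf 1 = 0$.

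Substituting $O = M^\star + \sum_i \tau^\star_i Z_i + \hat E$ into the $M$-optimality identity produces the master equation
\begin{equation*}
\sum_i (\hat\tau_i - \tau^\star_i)\, Z_i \;=\; \bigl(M^\star - \hat M - \hat m\mathbf 1^\top\bigr) + \hat E \;-\; \lambda \hat U \hat V^\top \;-\; \lambda W.
\end{equation*}
Taking the inner product of both sides with $P_{\hat{\mathbf T}^\perp}(Z_j)$ and using self-adjointness and idempotence of the projection, the LHS collapses to $\sum_i D_{ij}(\hat\tau_i - \tau^\star_i) = \bigl(D(\hat\tau - \tau^\star)\bigr)_j$. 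Three of the terms on the RHS vanish because they lie in $\hat{\mathbf T}$: $\hat M,\ \hat U\hat V^\top \in \mathbf T_{\hat M}$ and $\hat m\mathbf 1^\top \in \{\alpha\mathbf 1^\top\}$. What survives is $\langle Z_j, P_{\hat{\mathbf T}^\perp}(M^\star)\rangle = \Delta^3_j$, $\langle Z_j, P_{\hat{\mathbf T}^\perp}(\hat E)\rangle = \Delta^2_j$, and $-\lambda\langle P_{\hat{\mathbf T}^\perp}(Z_j), W\rangle$.

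To recognize the last term as $\Delta^1_j$, I would argue $W \in \hat{\mathbf T}^\perp$, which upgrades $\langle P_{\hat{\mathbf T}^\perp}(Z_j), W\rangle$ to $\langle Z_j, W\rangle$; then $\tau$-optimality ($\langle Z_j, \hat U\hat V^\top + W\rangle = 0$) rewrites $-\lambda\langle Z_j, W\rangle$ as $\lambda\langle Z_j, \hat U\hat V^\top\rangle = \Delta^1_j$, completing the decomposition. Given $W \in \mathbf T_{\hat M}^\perp$, the containment $W \in \hat{\mathbf T}^\perp$ reduces to showing $W\mathbf 1 = 0$: the point of the unpenalized term $\hat m\mathbf 1^\top$ is to force the optimal $\hat M$ to have zero-mean rows, so $\hat M\mathbf 1 = 0$ gives $\hat V^\top\mathbf 1 = 0$, hence $\hat U\hat V^\top\mathbf 1 = 0$, and combining with $R\mathbf 1 = 0$ yields $W\mathbf 1 = 0$.

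I expect the main obstacle is precisely this last step---verifying $W \in \hat{\mathbf T}^\perp$---which is where the modification to \cite{farias2021learning} (the addition of the unpenalized row-mean term) requires genuinely new work, since the ambient space $\hat{\mathbf T}$ has been enlarged beyond $\mathbf T_{\hat M}$ and one must carry the interaction between the rank-tangent structure and the constant-row directions through the KKT analysis. The remainder of the proof is clean algebra, driven by cancellations against $\hat{\mathbf T}$.
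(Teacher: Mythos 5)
Your route is essentially the paper's: the same first-order conditions for \eqref{eq:convex-program}, the same substitution of the model for $O$, and the same mechanism of killing every term lying in $\hat{\mathbf T}$ by pairing against $P_{\hat {\mathbf T}^{\perp}}(Z_j)$, with the $\tau$-stationarity condition converting the leftover $W$ term into $\Delta^1_j$. The paper organizes the algebra slightly differently (it applies $P_{\hat {\mathbf T}^{\perp}}$ to the full matrix identity, deduces $\lambda W = P_{\hat {\mathbf T}^{\perp}}(M^{\star}+\hat E) + \sum_i(\tau^{\star}_i-\hat\tau_i)P_{\hat {\mathbf T}^{\perp}}(Z_i)$, and substitutes back into $\langle Z_j,\lambda(\hat U\hat V^\top+W)\rangle=0$), but the two computations are equivalent, and both hinge on exactly the point you flag: that $W$ is orthogonal to the enlarged space $\hat{\mathbf T}$, which, given $\hat U^\top W=0$ and $W\hat V=0$, reduces to $W\mathbf 1=0$ and hence to $\hat V^\top\mathbf 1=0$.

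The gap is that you assert this last fact rather than prove it. "The unpenalized term $\hat m\mathbf 1^\top$ forces $\hat M$ to have zero-mean rows" is the right intuition, but it is not automatic: the easy inequality $\|\hat M(I-\mathbf 1\mathbf 1^\top/T)\|_\star \leq \|\hat M\|_\star$ only shows that \emph{some} minimizer can be taken with zero-mean rows, whereas your argument (and the lemma) concerns an \emph{arbitrary} minimizer $(\hat M,\hat m,\hat\tau)$, so you need a strict improvement whenever $\hat V^\top\mathbf 1\neq 0$. The paper isolates this as \cref{claim: hat V 1 = 0} and proves it by a perturbation: if $\hat V^\top\mathbf 1\neq 0$, replace $\hat V$ by the matrix $V'$ whose columns are the projections of the columns of $\hat V$ onto the orthogonal complement of $\mathbf 1$, and absorb the discarded part $\hat U\hat\Sigma(\hat V-V')^\top$ into the $m\mathbf 1^\top$ component; the Frobenius term is unchanged while $\|\hat U\hat\Sigma V'^{\top}\|_\star < \|\hat M\|_\star$ because at least one column of $V'$ has norm strictly less than one, contradicting optimality. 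Supplying this perturbation argument (or an equivalent one) is precisely the "genuinely new work" you correctly anticipated but did not carry out; with it, the rest of your derivation goes through as written.
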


As noted in \cite{farias2021learning}, since $D^{-1}\Delta^1$ depends solely on observed quantities, it can be subtracted from our estimate. Thus, we define our de-biased estimator $\tau^d$ as $\tau^d_i= \paren{\hat{\tau}- D^{-1}\Delta^1}_i/\|\tilde Z_i\|_\F$ for each $i\in[k]$. The rescaling by $\|\tilde Z_i\|_\F$ adjusts our estimate to approximate $\tilde\tau^{\star}$, rather than $\tau^{\star}.$

The de-biased $\tau^d$ serves as our final estimate. Specifically, if an observation's covariates $X^{zt}$ belong to the $j$-th cluster, then for each treatment $i\in[q]$, our estimate of $\mathcal T^{\star}_i(X^{zt})$ is given by $\hat{\mathcal T}_i(X^{zt})=\tau^d_{(i-1)\times \ell + j}$, which represents the average treatment effect of treatment $i$ associated with the cluster that $X^{zt}$ belongs to.

\section{Theoretical guarantees}\label{section: theoretical guarantees}

Our analysis determines the conditions for the convergence of our algorithm's estimates to the true treatment effect functions $\mathcal T_i^{\star}(\mathbf X)$ for each $i \in [q]$. Our proof incorporates ideas from \cite{farias2021learning, wager2018estimation}.

The final treatment effect estimate given by PaCE has two sources of error. First, there is the \textit{approximation error}, which stems from the approximation of the non-parametric functions $\mathcal T^{\star}_i$ by piece-wise constant functions. The second source of error, the \textit{estimation error}, arises from the estimation of the average treatment effect of each cluster. Note that the deviation between our estimate and the true treatment effect is, at most, the sum of these two errors. We will analyze and bound these two different sources of error separately to demonstrate the convergence of our method.

\subsection{Approximation error}
We start by bounding the bias introduced by approximating $\mathcal T^{\star}_i$ by a piece-wise constant function. To guarantee consistency, we establish constraints on valid splits. We adopt the \textit{$\alpha$-regularity} condition from Definition 4b of \cite{wager2018estimation}. This condition requires that each split retains at least a fraction $\alpha \in (0,\frac{1}{2})$ of the available training examples and at least one treated observation on each side of the split. We further require that the depth of each leaf be on the order $\log \ell$ and that the trees are \textit{fair-split trees}. That is, during the tree construction procedure in \cref{alg:clustering}, for any given node, if a covariate $j$ has not been used in the splits of its last $\pi p$ parent nodes, for some $\pi >1$, the next split must utilize covariate $j$. If multiple covariates meet this criterion, the choice between them can be made arbitrarily, based on any criterion.

Additionally, we assume that the proportion of treated observations with covariates within a given hyper-rectangle should be approximately proportional to the volume of the hyper-rectangle. This is a `coverage condition' that allows us to accurately estimate the heterogeneous treatment effect on the whole domain using the available observations.
% This is similar to the positivity assumption often made in the causal analysis of observational data \cite{cole2009consistency}. %\epcomment{can we give some intuition for why we need an assumption like this?} \enote{It's a coverage condition that allows us to accurately estimate the heterogeneous treatment effect on the whole domain using the available observations.} 
For the purposes of this analysis, we assume that all covariates are bounded and normalized to $[0,1]^p$.
\begin{assumption} \label{assumption: continuous covariates}
Suppose that all covariates belong to $[0,1]^p$. Let $x^{(1)}\leq x^{(2)}\in[0,1]^p$ be the lower and upper corners of any hyper-rectangle. We assume that the proportion of observations that have covariates inside this hyper-rectangle is proportional to the volume of the hyper-rectangle $V := \prod_{p'\in[p]} \paren{x_{p'}^{(2)}-x_{p'}^{(1)}}$, with a margin of error $M:= \sqrt{\frac{\ln(nT)(p+1)}{\min(n,T)}}$. More precisely,
\begin{equation*}
\ubar c  V -c_mM\leq 
\frac{\#\set{(z,t):  x^{(1)}\leq X^{zt}\leq x^{(2)}
}  }{nT} \leq \bar c  V+ c_mM,
\end{equation*}
for some fixed constants $\bar c \geq \ubar c >0$ and some $c_m>0$.  
\end{assumption}

\cref{assumption: continuous covariates} is a broader condition than the requirement for the covariates to be i.i.d.~across observations, as assumed in \cite{wager2018estimation}. Our assumption only requires that the covariates maintain an even density. Indeed, \cref{lemma: iid satisfies assumption} in \cref{appendix: approx error} illustrates that while i.i.d.~covariates satisfy the assumption with high probability, the assumption additionally accommodates (with high probability) scenarios where covariates are either constant over time and only vary across different units, or constant across different units and only vary over time.

In the following result, we demonstrate that, as the number of observations grows, a regression tree, subject to some regularity constraints, contains leaves that become increasingly homogeneous in terms of treatment effect.

\begin{theorem}\label{thm: tree}
Let $\mathbb  T$ be an $\alpha$-regular, fair-split tree, split into $\ell$ leaves, each of which has a depth of at least $c\log \ell$ for some constant $c$. Suppose that $\ell \leq \paren{\frac{\alpha}{2c_m M}}^{\frac{1}{c\log 1/\alpha}}$ and that \cref{assumption: continuous covariates} is satisfied. Then, for each treatment $i\in[q]$ and every leaf of $\mathbb  T$, the maximum difference in treatment effects between any two observations within the cluster $C$ of observations in that leaf is upper bounded as follows:
\begin{equation*}
     \max_{X_1,X_2\in C} \abs{\mathcal T_i^{\star}(X_1) - \mathcal T_i^{\star}(X_2)} \leq \frac{2L_i\sqrt{ p }}{\ell^s},
\end{equation*}
% \begin{equation*}
%      \max_{X\in C} \abs{\mathcal T_i^{\star}(X) - \frac{1}{\abs{C}}\sum_{X'\in C}\mathcal T_i^{\star}(X')} \leq \frac{2L_i\sqrt{ p }}{\ell^s},%,\quad i \in[q],j\in[l].
% \end{equation*}
where $s:=\frac{c}{(\pi+1)p}\frac{\alpha\ubar c}{4\bar c}$ and $L_i$ is the Lipschitz constant for the treatment effect function $\mathcal T_i^{\star}$.
\end{theorem}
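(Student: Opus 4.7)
The plan is to bound the Euclidean diameter of the hyperrectangle $R_L\subseteq[0,1]^p$ associated with each leaf, and then invoke Lipschitz continuity of $\mathcal T_i^{\star}$. If every side length $s_j$ of $R_L$ satisfies $s_j\leq 1/\ell^s$, then any two points $X_1,X_2\in C$ are at Euclidean distance at most $\sqrt{p}\,\max_j s_j\leq \sqrt{p}/\ell^s$, and Lipschitzness immediately yields the claim (with a factor of $2$ to spare).

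To bound a single side length $s_j$, I would first use the fair-split property along any root-to-leaf path: between two consecutive $j$-splits there can be at most $\pi p$ intervening non-$j$-splits (otherwise fair-split would force the next to be a $j$-split), so a path of depth $d\geq c\log\ell$ contains at least $k_j\geq (d-\pi p)/(\pi p+1)\gtrsim c\log\ell/((\pi+1)p)$ splits on coordinate $j$.

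Next, I would estimate the per-$j$-split multiplicative shrinkage of $s_j$. At a $j$-split from a parent $R$ of volume $V$, $\alpha$-regularity guarantees that the sibling subtree holds at least $\alpha N_R$ observations. The lower half of \cref{assumption: continuous covariates} then forces the sibling's volume to be at least $(\alpha N_R/(nT)-c_mM)/\bar c$, and the upper half applied to $R$ gives $N_R/(nT)\geq \ubar c V-c_mM$. Combining these two inequalities, the volume ratio of the \emph{chosen} child to its parent is at most $1-\alpha\ubar c/\bar c+O(c_mM/V)$; since only the $j$-th side length changes at a $j$-split, this is precisely the shrinkage factor for $s_j$.

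The main obstacle is controlling the additive error $c_mM/V$ uniformly over the relevant splits. Applying $\alpha$-regularity along the path gives $N_{R_i}/(nT)\geq \alpha^i$ at depth $i$, which via the coverage lower bound propagates to $V_{R_i}\gtrsim \alpha^i/\bar c$. The hypothesis $\ell\leq(\alpha/(2c_mM))^{1/(c\log(1/\alpha))}$ is exactly equivalent to $\alpha^{c\log\ell+1}\geq 2c_mM$, which is what is needed to ensure that for every $i\leq c\log\ell$ the error $c_mM/V_{R_i}$ is dominated by $\alpha\ubar c/(4\bar c)$, and hence each per-$j$-split shrinkage is at most $1-\alpha\ubar c/(4\bar c)$. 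Multiplying this factor over the $\geq c\log\ell/((\pi+1)p)$ $j$-splits that occur among the first $c\log\ell$ ancestors (the remaining splits only shrink $s_j$ further) gives $s_j\leq(1-\alpha\ubar c/(4\bar c))^{c\log\ell/((\pi+1)p)}\leq\exp\!\bigl(-c\log\ell\cdot\alpha\ubar c/(4\bar c(\pi+1)p)\bigr)=\ell^{-s}$, completing the bound.
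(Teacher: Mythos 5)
Your proposal is correct and follows essentially the same route as the paper's proof: reduce to bounding the leaf cell's per-coordinate extent via Lipschitzness, use $\alpha$-regularity together with both inequalities of \cref{assumption: continuous covariates} (with the hypothesis on $\ell$, equivalently $\alpha^{c\log\ell+1}\geq 2c_mM$, controlling the margin term) to show each split removes at least an $\frac{\alpha\ubar c}{4\bar c}$ fraction of the side length along the split coordinate, and then use the fair-split property to count roughly $\frac{c\log\ell}{(\pi+1)p}$ splits per coordinate. Only cosmetic slips remain — you swap the labels "lower half"/"upper half" of the assumption (your formulas are the right ones), and your split count is additively short of $\frac{c\log\ell}{(\pi+1)p}$ by at most one, which, as you note, is absorbed by the factor of $2$ in the stated bound (the paper absorbs the analogous slack the same way).
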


Using the above theorem, we can make the following observation. For any $i\in[q]$, denote by $\tilde\tau^{\star,i}$ the regression tree function that is piece-wise constant on each cluster $C_j^i$ for $j\in[\ell]$, with a value of $\tilde\tau^{\star}_{(i-1)\ell+j}$ assigned to each cluster $j$. Then, for all covariate vectors $X$, we have $\abs{\mathcal T_i^{\star}(X) - \tilde\tau^{\star,i}(X) } \leq \frac{2L_i\sqrt{ p }}{\ell^s}$ because the value assigned to each cluster by $\tilde\tau^{\star,i}$ is an average of treatment effects within that cluster.
% For every $i\in[q]$ and $j\in [\ell]$, because the quantity $\tilde\tau^{\star}_{(i-1)\ell+j} $ is the average treatment effect within cluster $C_j^i$, we have
% \begin{align*}
%     \max_{X\in C_j^i} \abs{\mathcal T_i^{\star}(X) - \tilde\tau^{\star}_{(i-1)\ell+j} }  \leq \frac{2L_i\sqrt{ p }}{\ell^s}.
% \end{align*}
% If the treatment effect estimate for a given cluster $C$ is any convex combination of values from the set  $\{\mathcal T_i^\star(X),X\in C\}$, then the approximation error for any observation with covariates in $C$ is bounded by $\frac{2L_i\sqrt{ p }}{\ell^s}$. 
This demonstrates that it is possible to approximate the true treatment effect functions with a regression tree function with arbitrary precision. 
As the number of leaves $\ell$ in the tree $\mathbb  T$ increases, the approximation error of the tree decreases polynomially. The theorem requires an upper bound on $\ell$, which grows with $\min(n,T)$ as the dataset expands, due to the definition of the margin or error $M$. For PaCE, we constrain $\ell$ to be on the order of $\log n$ (we require $k = O\paren{ \log n}$ in \cref{thm: convergence}), thereby complying with the upper bound requirement.

\subsection{Estimation error}

We now bound the estimation error, showing that it converges as $\max(n, T)$ increases. 
For simplicity, we will assume that $n\gtrsim T$ so that we can write all necessary assumptions and statements in terms of $n$ directly. We stress, however, that even without $n\gtrsim T$, the statement of \cref{thm: convergence} also holds by replacing all occurrences of $n$ in \cref{assum:identification}, \cref{assum: error assumption}, and \cref{thm: convergence} with $\max(n,T)$. Proving this only requires changing $n$ to $\max(n,T)$ in the proofs accordingly.

Our main technical contribution is extending the convergence guarantee from \cite{farias2021learning} to a setting with multiple treatment matrices and a convex formulation that centers the rows of $\hat M$ without penalization. We require a set of conditions that extend the assumptions made in \cite{farias2021learning} to multiple treatment matrices.

Throughout the paper, we define the subspace $\mathbf Z := \text{span}\{Z_i,i\in[k]\} + \text{span}\{\alpha \mathbf 1^\top \mid \alpha \in \mathbb{R}^n\}$. 
We next define $D^{\star}\in \R^{k \times k}$ and $\Delta^{\star1} \in \mathbb {R}^{k}$ similarly to $D$ and $\Delta^1$ in \cref{lem:tau-decomposition}, except we replace all quantities associated with $\hat M$ with the corresponding quantity for $M^{\star}$. That is, $D^{\star}_{ij} = \sang{P_{{\mathbf T}^{\star\perp}}(Z_i),P_{{\mathbf T}^{\star\perp}}(Z_j)}$, and  $\Delta^{\star1}_i =  \sang{Z_i, U^{\star}  V^{\star\top} }$ for $i,j\in[k]$. Now, we are ready to introduce the assumptions that we need to bound the estimation error.

\begin{assumption}[Identification] \label{assum:identification}There exist positive constants $c_{r_1}, c_{r_2}, c_{s}$ such that
\begin{enumerate}[label={(\alph*)}, ref={\theassumption(\alph*)}]
\item \label[assumption]{assum:conditions-Z}
$\norm{ ZV^{\star}}_\F^2 + \norm{{ Z^{\top}U^{\star}}}_\F^2 \leq \paren{1-{c_{r_1}}/{\log(n)}}\norm{Z}_\F^2,$ $\forall Z\in \mathbf Z$,
\item \label[assumption]{assumption: spectral}
$
\norm{D^{\star-1}}\norm{\Delta^{\star1}}\sum_{i=1}^k\norm{P_{{\mathbf T}^{\star\perp}}(Z_i)} \leq 1-{c_{r_2}}/{\log n},
$
\item \label[assumption]{assum:conditions-D}
$\sigma_{\min}(D^{\star}) \geq {c_s}/{\log n}.$
\end{enumerate}
\end{assumption}

\cref{assum:conditions-Z} and \cref{assumption: spectral} are the direct generalization of Assumptions 3(a) and 3(b) from \cite{farias2021learning} to multiple treatment matrices. 
% Intuitively, they require that every treatment matrix $Z$ be distinguishable from $M^{\star}$ in the sense that the projection of $Z$ onto the tangent space of $M^{\star}$ cannot be too large. 
Proposition 2 of \cite{farias2021learning} proves that their Assumptions 3(a) and 3(b) are nearly necessary for identifying $\tau^{\star}$. 
%Intuitively, these assumptions ensure that the matrices $Z$ do not lie in the tangent space of $M^{\star}$ in the space of rank-$r$ matrices.
This is because it is impossible to recover $\tau^{\star}$ if $M^{\star} + \gamma Z$ were also a matrix of rank $r$ for some $\gamma \neq 0$.
More generally, if $Z$ lies close to the tangent space of $M^{\star}$ in the space of rank-$r$ matrices, recovering $M^{\star}$ is significantly harder (the signal becomes second order and may be lost in the noise).
In our setting with multiple treatment matrices, we further require that there is no collinearity in the treatment matrices; otherwise, we would not be able to distinguish the treatment effect associated with each treatment matrix. Mathematically, this is captured by \cref{assum:conditions-D}.

\begin{assumption}[Noise assumptions] \label{assum: error assumption}
We make the following assumptions about $E$ and $\delta$. 
\begin{enumerate}[label={(\alph*)}, ref={\theassumption(\alph*)}]
% \item \label[assumption]{assum:E i.i.d.}
% The entries of $E$ are independent, zero-mean, sub-Gaussian random variables, and the sub-Gaussian norm of each entry is bounded by $\sigma$,
%\item \label[assumption]{assum:E uncounfounded Z}
%$\abs{\sang{P_{\mathbf T^{\star \perp}}(Z_i),E}} = O(\sigma\sqrt n)$ for all $i\in[k]$,
\item \label[assumption]{assum:delta singular value}
$\norm{E},\norm{\delta} = O(\sigma\sqrt{n})$,
\item \label[assumption]{assum:E Z}
$\abs{\sang{Z_i, E}} =O( \sigma\sqrt n)$ for all $i\in[k]$,
\item\label[assumption]{assum:delta Z}
$\abs{\sang{Z_i, \delta}} =O( \sigma\sqrt n)$ for all $i\in[k]$.
\end{enumerate}
\end{assumption}

The assumptions involving $E$ are mild: they would be satisfied with probability at least $1-e^{-n}$ if $E$ has independent, zero-mean, sub-Gaussian entries with sub-Gaussian norm bounded by $\sigma$ (see \cref{lemma: i.i.d. E satisfies assumptions}), an assumption that is canonical in matrix completion literature \cite{farias2021learning}.
Both \cref{assum:delta singular value} and \cref{assum:E Z} were also used in \cite{farias2021learning}.
%\cref{assum:E i.i.d.} is canonical in matrix completion literature \cite{farias2021learning}. 
Because $\delta$ is a zero-mean matrix that is zero outside the support of $\sum_{i\in[k]}Z_i$, the assumption $\norm{\delta} = O(\sigma\sqrt{n})$ is mild. In fact, it is satisfied with high probability if the entries of $\delta$ are sub-Gaussian with certain correlation patterns \cite{moon2016linear}. 
Furthermore, we note that if there is only one treatment matrix $(k=1)$ or if all treatment matrices are disjoint, then \cref{assum:delta Z} is guaranteed to be satisfied. This is because, for each $i\in[k]$, $\delta_i$ is a zero-mean matrix that is zero outside of the support of $Z_i$. Consequently, in these scenarios, we simply have $\abs{\sang{Z_i, \delta}} =\abs{\sang{Z_i, \delta_i}} = 0$.

We denote by $\sigma_{\max}$ and $\sigma_{\min}$ the largest and smallest singular values of $P_{{\mathbf 1}^\perp}(M^{\star})$ respectively, and we let $\kappa := \sigma_{\max}/\sigma_{\min}$ be the condition number of $P_{{\mathbf 1}^\perp}(M^{\star})$.

\begin{theorem}\label{thm: convergence} 
Suppose that \cref{assum:identification} and \cref{assum: error assumption} are satisfied, $k = O\paren{ \log n}$, and $\frac{\sigma\sqrt n}{\sigma_{\min}} \lesssim \frac{1}{\kappa^2r^2\log^{12.5}(n)}$. Take $\lambda = \Theta\paren{\sigma \sqrt{nr}\log^{4.5}(n)}$. Then for sufficiently large $n$, 
%with probability at least $1-\frac{1}{n^{8}}$, 
we have 
    \begin{align*}
        \abs{\tau^d_i - \tilde\tau^{\star}_i} \lesssim \frac{\sigma^2 r^2 \kappa n\log^{13.5}(n)}{\sigma_{\min}\|\tilde Z_i\|_\F}
        +\frac{\log^{1.5}(n)}{\|\tilde Z_i\|_\F} \cdot\max_{j\in[k]}\frac{\abs{\sang{P_{{\mathbf T}^{\star\perp}}(\tilde Z_j), P_{{\mathbf T}^{\star\perp}}(E+\delta)}} }{\|\tilde Z_j\|_\F}.
    \end{align*}
\end{theorem}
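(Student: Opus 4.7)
The plan is to start from the error decomposition in \cref{lem:tau-decomposition}. Subtracting $D^{-1}\Delta^1$ from both sides of \eqref{eq:tau-decomposition} gives
\begin{equation*}
\tau^d_i - \tilde\tau^\star_i = \frac{\bigl[D^{-1}(\Delta^2 + \Delta^3)\bigr]_i}{\|\tilde Z_i\|_\F},
\end{equation*}
so it suffices to bound $\|D^{-1}\|$ in an appropriate entrywise sense and to control $\Delta^2$ (the effective noise term) and $\Delta^3$ (the bias arising because $\hat{\mathbf T}\neq \mathbf T^\star$).

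The first move is a first-order analysis of the convex program \eqref{eq:convex-program}: using its KKT conditions together with \cref{assum:identification} and \cref{assum: error assumption}, I would derive an initial Frobenius-norm bound of the form $\|\hat M - M^\star\|_\F \lesssim \sigma r\sqrt{n}\,\text{polylog}(n)$, extending the single-treatment argument of \cite{farias2021learning} by handling the $k$ treatment matrices jointly through the matrix $D^\star$ (whose invertibility comes from \cref{assum:conditions-D}) and by absorbing the new $m\mathbf 1^\top$ term into the projection $P_{\mathbf 1^\perp}$. The hypothesis $\sigma\sqrt{n}/\sigma_{\min} \lesssim 1/(\kappa^2 r^2 \log^{12.5}n)$ then makes this Frobenius error tiny relative to $\sigma_{\min}$, so a standard $\sin\Theta$-type bound yields $\|P_{\hat{\mathbf T}} - P_{\mathbf T^\star}\| \lesssim \|\hat M - M^\star\|_\F/\sigma_{\min}$. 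Consequently $\|D - D^\star\|$ is small and, combined with \cref{assum:conditions-D}, gives $\|D^{-1}\|_{op} \lesssim \log n$; since $k = O(\log n)$, this upgrades to $\|D^{-1}\|_{2\to\infty} \lesssim \log^{1.5}(n)$, which is precisely the prefactor appearing in front of both terms in the theorem.

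For $\Delta^2$, I would split
\begin{equation*}
\Delta^2_i = \sang{Z_i,\, P_{\mathbf T^{\star\perp}}(E+\delta)} + \sang{Z_i,\, (P_{\hat{\mathbf T}^\perp}-P_{\mathbf T^{\star\perp}})(E+\delta)}.
\end{equation*}
The first piece, after applying $D^{-1}$ and unscaling via $Z_i = \tilde Z_i/\|\tilde Z_i\|_\F$, produces exactly the $\max_j$ quantity in the statement; the correction is bounded by $\|P_{\hat{\mathbf T}^\perp}-P_{\mathbf T^{\star\perp}}\|\cdot\|E+\delta\|$ using \cref{assum:delta singular value}, and is absorbed into the bias contribution.

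The main obstacle is $\Delta^3$. Since $\hat M \in \hat{\mathbf T}$ and $M^\star \in \mathbf T^\star$,
\begin{equation*}
P_{\hat{\mathbf T}^\perp}(M^\star) = P_{\hat{\mathbf T}^\perp}(M^\star - \hat M) = (P_{\hat{\mathbf T}^\perp} - P_{\mathbf T^{\star\perp}})(M^\star - \hat M),
\end{equation*}
so $\Delta^3$ is morally of order $\|\hat M - M^\star\|_\F^{2}/\sigma_{\min}$, i.e.\ \emph{second order} in the initial Frobenius error; this is what produces the factor $\sigma^2 n$ and the additional $\kappa/\sigma_{\min}$ in the bias term. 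Making this rigorous is technically delicate: it requires decomposing $\hat M - M^\star$ along $\mathbf T^\star$ and $\mathbf T^{\star\perp}$, invoking decomposability of the nuclear norm to control $\|P_{\mathbf T^{\star\perp}}(\hat M - M^\star)\|_\star$ in terms of the tangent-direction error, and a leave-one-out argument in the spirit of \cite{chen2020noisy, farias2021learning} to decouple $\hat{\mathbf T}$ from individual entries of $E$. The secondary challenge is careful bookkeeping of the many $\log n$ factors introduced by $\|D^{-1}\|_{2\to\infty}$, the $\sin\Theta$ bound, and the $1/\log n$ slack in \cref{assum:identification}, so that they aggregate to the stated $\log^{13.5}(n)$ exponent.
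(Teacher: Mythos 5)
Your skeleton matches the paper's: start from \cref{lem:tau-decomposition}, subtract $D^{-1}\Delta^1$, bound $\sigma_{\min}(D)^{-1}\lesssim \log n$ (giving, with $\sqrt k\lesssim\log^{0.5}n$, the $\log^{1.5}n$ prefactor), split $\Delta^2$ into a $P_{{\mathbf T}^{\star\perp}}$ part plus a projection-difference correction, and recognize that $\Delta^3$ must be second order in the estimation error. This is indeed how the paper assembles the final bound (see \eqref{eq: thm step 1}, \cref{lemma: sigma min D}, and \eqref{eq: bound Delta^3}). However, the central step is missing, and the shortcut you offer for it does not work. First, your identity $P_{\hat {\mathbf T}^{\perp}}(M^{\star}-\hat M)=(P_{\hat {\mathbf T}^{\perp}}-P_{{\mathbf T}^{\star\perp}})(M^{\star}-\hat M)$ is false: it would require $P_{{\mathbf T}^{\star\perp}}(\hat M)=0$, i.e.\ $\hat M\in\mathbf T^{\star}$, which does not hold. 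The correct statement is only $P_{\hat {\mathbf T}^{\perp}}(M^{\star})=(P_{\hat {\mathbf T}^{\perp}}-P_{{\mathbf T}^{\star\perp}})(M^{\star})$, and bounding this by $\norm{P_{\hat {\mathbf T}^{\perp}}-P_{{\mathbf T}^{\star\perp}}}\cdot\norm{M^{\star}}$ is only first order in the perturbation times $\sigma_{\max}$, which is far too large. The genuinely second-order bound comes from the two-sided product structure: $P_{\hat{\mathbf T}^{\perp}}(M^{\star})=(I-\hat U\hat U^\top)X^{\star}Y^{\star\top}(I-\hat V\hat V^\top)=(I-\hat U\hat U^\top)(XH-X^{\star})(YH-Y^{\star})^\top(I-\hat V\hat V^\top)$, which requires Frobenius-norm control of the \emph{factor} errors $\norm{XH-X^{\star}}_\F,\norm{YH-Y^{\star}}_\F\lesssim\rho$ for some factorization $\hat M=XY^\top$, not merely a Frobenius bound on $\hat M-M^{\star}$ plus a $\sin\Theta$ bound.

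Obtaining that factor-level control is the bulk of the paper's proof, and your proposal essentially defers it ("decomposability of the nuclear norm \dots a leave-one-out argument in the spirit of \cite{chen2020noisy,farias2021learning}") rather than supplying it. The paper does not take the KKT-plus-$\sin\Theta$ route you sketch; instead it couples the convex solution to a nonconvex factored objective $f(X,Y)$: \cref{lemma: main} shows the gradient flow of $f$ started at $(X^{\star},Y^{\star})$ stays in the ball $\mathcal B$ of radius $\rho$ (\cref{lemma: stay in B}, which needs the identification assumptions extended to $\mathcal B$ via \cref{lem:general-conditions-small-ball} and the invariants $X^\top X=Y^\top Y$, $Y^\top\mathbf 1=0$ along the flow) and that its limit satisfies the first-order conditions of the convex problem, which has a unique minimizer (\cref{lemma: converges to M_hat}), hence equals $\hat M$ up to rotation. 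Only then does $\norm{\Delta^3}\lesssim\sqrt k\,\rho^2$ and the analogous correction in $\norm{\Delta^2}$ follow. Without this (or an equivalently strong argument establishing $\norm{\hat XR-X^{\star}}_\F\lesssim\rho$ with $\rho\asymp\sigma\sqrt{nr}\log^6(n)\norm{F^{\star}}_\F/\sigma_{\min}$), your outline does not yield the stated $\sigma^2 r^2\kappa n\log^{13.5}(n)/\sigma_{\min}$ bias term, so the proposal has a genuine gap at its main step.
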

Note that the bound becomes weaker as $\|\tilde Z_j\|_\F$ shrinks. In other words, more treated entries within a cluster results in a more accurate estimate for that cluster, as expected.
Our rate in \cref{thm: convergence} matches, up to $\log n$ and $r$ factors, the optimal error rate obtained in Theorem 1 of \cite{farias2021learning}, which bounds the error in the setting with one treatment matrix. Our bound accumulated extra factors of $\log n$ because we allowed $k = O(\log n)$ treatment matrices, and we have an extra factor of $r$ because we assumed $\norm{\delta}\lesssim \sigma\sqrt n$, instead of $\norm{\delta}\lesssim \sigma\sqrt{n/r}.$
Proposition 1 from \cite{farias2021learning} proves that their convergence rate is optimal (up to $\log n$ factors) in the ``typical scenario" where $\sigma, \kappa, r = O(1)$ and $\sigma_{\min} = \Theta(n)$. We can conclude the same for our rate in \cref{thm: convergence}.

The convergence rate of PaCE is primarily determined by the slower asymptotic convergence rate of the approximation error. However, our result in \cref{thm: convergence} is of independent interest, as it extends previous results on low-rank matrix recovery with a deterministic pattern of treated entries to allow for multiple treatment matrices.

\section{Empirical evaluation}\label{section: experiments}

To assess the accuracy of PaCE, we demonstrate its performance on semi-synthetic data. Using publicly available panel data as the baseline $M^{\star} + E$, we introduced an artificial treatment and added a synthetic treatment effect to treated entries to generate the outcome matrix $O$. Since the true heterogeneous treatment effects $\mathcal T^{\star}(\mathbf X)$ are known, we can verify the accuracy of various methods. Our results show that PaCE often surpasses alternative methods in accuracy, while using a tree with no more than 40 leaves to cluster observations. Thus, PaCE not only offers superior accuracy, but also provides a simple, interpretable solution.

\paragraph{Data}
To demonstrate the effectiveness of our methodology, we use two publicly available U.S.~economic datasets. The first source comprises monthly Supplemental Nutrition Assistance Program (SNAP) user counts by zip code in Massachusetts, spanning January 2017 to March 2023, obtained from the Department of Transitional Assistance (DTA) public records \cite{mass_dta_reports}. SNAP is a federal aid program that provides food-purchasing assistance to low-income families \cite{usda_snap}. The second data source comprises nine annual demographic and economic data fields for Massachusetts zip codes, U.S.~counties, and U.S.~states from 2005 to 2022, provided by the U.S.~Census Bureau \cite{us_census_bureau_acs1_api}. 

We use this data to conduct experiments on panels of varying sizes. In the first set of experiments, we use the SNAP user count as the baseline, $M^{\star} + E$, with the zip code-level census data for Massachusetts as covariates. The panel size is 517 zip codes over 70 months. The second set of experiments uses the population below the poverty line (from the U.S.~census data) in each state and U.S. territory as the baseline and the remaining census data fields as covariates, with a panel size of 52 states and territories over 18 years. The third set of experiments is similar to the second, but the unit is changed to counties, resulting in a panel size of 770 counties over 18 years. The time period is provided as an additional covariate for the methods that we benchmark against.

\paragraph{Treatment generation} 
To generate the treatment pattern, we varied two parameters: 1) the proportion $\alpha$ of units receiving the treatment $\alpha\in\set{0.05, 0.25, 0.5, 0.75, 1.0}$ and 2) the functional form of the treatment---either adaptive or non-adaptive. In the non-adaptive case, a random $\alpha$ proportion of units receive the treatment for a random consecutive period of time. In the adaptive case, for each time period, we apply the treatment to the $\alpha/2$ proportion of units that show the largest absolute percentage change in outcome in the previous time period. The purpose of the adaptive treatment is to mimic public policies that target either low-performing or high-performing units.

The treatment effect is generated by randomly sampling two covariates and either adding or multiplying them, then normalizing the magnitude of the effects so that the average treatment effect is 20\% of the average outcome. For the SNAP experiments, the treatment effect is added to the outcome to simulate an intervention increasing SNAP usage. For the state and county experiments, the treatment effect is subtracted from the outcome to represent an intervention that reduces poverty.

Each set of parameters is tested on 200 instances, with the treatment pattern and treatment effect being randomly regenerated for each instance.

\paragraph{Benchmark methods}
We benchmark PaCE against the following methods: double machine learning (DML) \cite{chernozhukov2017double, chernozhukov2017orthogonal}, doubly robust (DR) learner \cite{DR}, XLearner \cite{XLearner}, and multi-arm causal forest \cite{athey2019generalized}.\footnote{Additional methods are shown in \cref{appendix: computational}. In the main text, we include the five best methods.} 
We implemented PaCE to split the observations into at most 40 clusters. 
The hyper-parameter $\lambda$ used in the convex programs \eqref{prob:convex_clustering} and \eqref{eq:convex-program} was tuned so that $\hat M$ had a rank of $r=6$, which was chosen arbitrarily and fixed across all experiments.  During the splitting procedure in \cref{alg:clustering}, we utilize binary search on each covariate to find the best split.
See \cref{appendix: computational} for implementation details.

\begin{table}[ht]
\caption{Proportion of instances in which each method results in the lowest nMAE.} 
%Note: the total number of instances per data set is about 4,000 ($=200\times2\times5\times2$).
\label{tab:winners}
\centering
\resizebox{\textwidth}{!}{
\begin{tabular}[t]{lccccccccccccc}
\toprule
 & && \multicolumn{2}{c}{Adaptive} && \multicolumn{5}{c}{Proportion treated} && \multicolumn{2}{c}{Effect}\\
\cline{4-5}\cline{7-11}\cline{13-14}
\addlinespace
 & All  && No & Yes && 0.05 & 0.25 & 0.5 & 0.75 & 1.0 && Add. & Mult.\\
\midrule
SNAP \\
\cline{1-1}
\addlinespace
PaCE & 0.59 &  & 0.58 & 0.60 &  & 0.26 & 0.54 & 0.76 & 0.75 & 0.65 &  & 0.64 & 0.54\\
Causal Forest & 0.16 &  & 0.17 & 0.16 &  & 0.14 & 0.17 & 0.16 & 0.14 & 0.21 &  & 0.02 & 0.30\\
% CausalForestDML & 0.00 &  & 0.00 & 0.00 &  & 0.00 & 0.00 & 0.00 & 0.00 & 0.00 &  & 0.00 & 0.00\\
DML & 0.11 &  & 0.21 & 0.00 &  & 0.12 & 0.13 & 0.08 & 0.09 & 0.11 &  & 0.17 & 0.04\\
% DRLearner & 0.00 &  & 0.00 & 0.00 &  & 0.00 & 0.00 & 0.00 & 0.00 & 0.00 &  & 0.00 & 0.00\\
% ForestDRLearner & 0.00 &  & 0.00 & 0.00 &  & 0.00 & 0.00 & 0.00 & 0.00 & 0.00 &  & 0.00 & 0.00\\
LinearDML & 0.14 &  & 0.04 & 0.24 &  & 0.48 & 0.15 & 0.01 & 0.02 & 0.04 &  & 0.17 & 0.11\\
XLearner & 0.00 &  & 0.00 & 0.00 &  & 0.00 & 0.00 & 0.00 & 0.00 & 0.00 &  & 0.00 & 0.00\\
\addlinespace
State \\
\cline{1-1}
\addlinespace
PaCE & 0.41 &  & 0.39 & 0.42 &  & 0.23 & 0.32 & 0.39 & 0.54 & 0.56 &  & 0.46 & 0.36\\
Causal Forest & 0.05 &  & 0.09 & 0.01 &  & 0.06 & 0.04 & 0.04 & 0.05 & 0.04 &  & 0.04 & 0.05\\
% CausalForestDML & 0.07 &  & 0.07 & 0.07 &  & 0.03 & 0.07 & 0.10 & 0.07 & 0.08 &  & 0.06 & 0.09\\
DML & 0.27 &  & 0.29 & 0.26 &  & 0.41 & 0.36 & 0.27 & 0.17 & 0.16 &  & 0.31 & 0.24\\
% DRLearner & 0.00 &  & 0.00 & 0.00 &  & 0.00 & 0.00 & 0.00 & 0.00 & 0.00 &  & 0.00 & 0.00\\
% ForestDRLearner & 0.01 &  & 0.01 & 0.01 &  & 0.00 & 0.00 & 0.00 & 0.01 & 0.02 &  & 0.00 & 0.01\\
LinearDML & 0.02 &  & 0.01 & 0.04 &  & 0.02 & 0.06 & 0.01 & 0.01 & 0.02 &  & 0.02 & 0.02\\
XLearner & 0.17 &  & 0.13 & 0.20 &  & 0.25 & 0.14 & 0.19 & 0.15 & 0.11 &  & 0.10 & 0.23\\
\addlinespace
County &&&&&&&\\
\cline{1-1}
\addlinespace
PaCE & 0.06 &  & 0.10 & 0.02 &  & 0.16 & 0.08 & 0.04 & 0.02 & 0.01 &  & 0.07 & 0.06\\
Causal Forest & 0.29 &  & 0.38 & 0.20 &  & 0.25 & 0.28 & 0.31 & 0.31 & 0.30 &  & 0.23 & 0.34\\
% CausalForestDML & 0.04 &  & 0.05 & 0.03 &  & 0.00 & 0.01 & 0.03 & 0.05 & 0.11 &  & 0.03 & 0.05\\
DML & 0.35 &  & 0.35 & 0.35 &  & 0.34 & 0.37 & 0.34 & 0.39 & 0.31 &  & 0.47 & 0.24\\
% DRLearner & 0.00 &  & 0.00 & 0.00 &  & 0.00 & 0.00 & 0.00 & 0.00 & 0.00 &  & 0.00 & 0.00\\
% ForestDRLearner & 0.01 &  & 0.00 & 0.01 &  & 0.00 & 0.00 & 0.00 & 0.00 & 0.02 &  & 0.00 & 0.01\\
LinearDML & 0.20 &  & 0.04 & 0.35 &  & 0.22 & 0.22 & 0.21 & 0.16 & 0.17 &  & 0.19 & 0.20\\
XLearner & 0.05 &  & 0.07 & 0.04 &  & 0.03 & 0.03 & 0.06 & 0.07 & 0.07 &  & 0.01 & 0.10\\
\bottomrule
\end{tabular}
}
\end{table}

\paragraph{Evaluation} We represent the estimated treated effects by $\hat{\mathcal T}(\mathbf X)$, a matrix where the element in $z$-th row and $t$-th column is the estimate $\hat{\mathcal T}(X^{zt})$. Accuracy of the estimates is measured using the normalized mean absolute error (nMAE), calculated relative to the true treatment effect $\mathcal T^{\star}(\mathbf X)$:
\begin{align*}
    \text{nMAE}\paren{ \hat{\mathcal T}(\mathbf X), \mathcal T^{\star}(\mathbf X)  } = {    \|\mathcal T^{\star}(\mathbf X) - \hat{\mathcal T}(\mathbf X)}\|_{\text{sum}}/\norm{\mathcal T^{\star}(\mathbf X)}_{\text{sum}}.
\end{align*}

In \cref{tab:winners}, we present the proportion of instances where each method results in the lowest nMAE. We observe that PaCE performs best in the SNAP and state experiments, achieving the highest proportion of instances with the lowest nMAE. However, it often performs worse than DML and Causal Forest in the county experiments.
We hypothesize that this is because the county census data, having many units relative to the number of time periods, `most resembles i.i.d.~data' and has less underlying panel structure. However, the average nMAE achieved by PaCE on this data set is competitive with that of DML and Causal Forest, as shown in Table \ref{tab:avg_nmae}. Given that Causal Forest and DML use hundreds of trees, PaCE, which uses only one regression tree, is still a great option if interpretability is desired.

In \cref{fig:combined_plot}, we show the average nMAE as we vary the proportion of units treated. We only show the results for the SNAP experiments with a non-adaptive treatment pattern, and we refer to \cref{tab:avg_nmae} and \cref{tab:avg_nmae_treated} in \cref{appendix: computational} for the results of additional experiments and for the standard deviations. We observe that while the performance of all methods improves as the proportion of treated entries increases, PaCE consistently achieves the best accuracy on this set of experiments.

\begin{figure}[ht!]
  \centering
    \includegraphics[width=.97\textwidth]{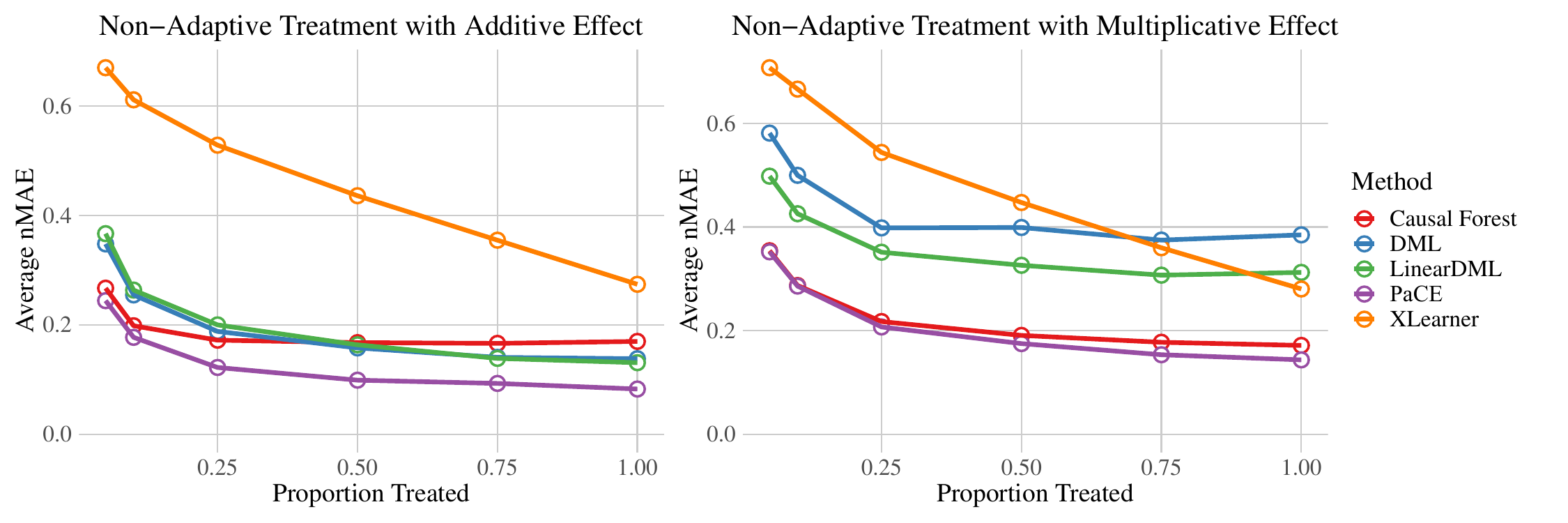}
  \caption{nMAE for SNAP experiments as the proportion of treated units increases.}
  \label{fig:combined_plot}
\end{figure}

\section{Discussion and Future Work}\label{section:discussion}

This work introduces PaCE, a method designed for estimating heterogeneous treatment effects within panel data. Our technical contributions extend previous results on low-rank matrix recovery with a deterministic pattern of treated entries to allow for multiple treatments. We validate the ability of PaCE to obtain strong accuracy with a simple estimator on semi-synthetic datasets.

One interesting direction for future research is to investigate whether the estimates produced by PaCE are asymptotically normal around the true estimates, which would allow for the derivation of confidence intervals. Further computational testing could also be beneficial. Specifically, assessing the estimator's performance across a broader array of treatment patterns and datasets would provide deeper insights into its adaptability and limitations. Expanding these tests to include multiple treatments will help in understanding the estimator's effectiveness in more complex scenarios.

\appendix

\begin{ack}
The authors are grateful to Tianyi Peng for valuable and insightful discussions. We acknowledge the MIT SuperCloud \cite{reuther2018interactive} and Lincoln Laboratory Supercomputing Center for providing HPC resources that have contributed to the research results reported within this paper. This work was supported by the National Science Foundation Graduate Research Fellowship under Grant No.~2141064. Any opinion, findings, and conclusions or recommendations expressed in this material are those of the authors(s) and do not necessarily reflect the views of the National Science Foundation.

% Moreover, you are required to declare
% funding (financial activities supporting the submitted work) and competing interests (related financial activities outside the submitted work).
% More information about this disclosure can be found at: \url{https://neurips.cc/Conferences/2024/PaperInformation/FundingDisclosure}.

\end{ack}

\medskip

{\small
% \printbibliography
\bibliographystyle{apalike} % Specify the bibliography style

\bibliography{zotero}        % Specify the bibliography file
}

%%%%%%%%%%%%%%%%%%%%%%%%%%%%%%%%%%%%%%%%%%%%%%%%%%%%%%%%%%%%

\appendix

\section{Proofs for the approximation error}\label{appendix: approx error}

\begin{proof}[Proof of \cref{thm: tree}]

We define the diameter of a given set $C$ of points as the maximum distance between two points in the set $C$:
\begin{equation*}
    \Diam(C) = \sup_{(z_1,t_1),(z_2,t_2)\in C} \|X^{z_1 t_1} - X^{z_2 t_2}\|.
\end{equation*}

Because the treatment effect function $\mathcal T^\star_i$ is Lipschitz continuous for every $i\in[q]$, with Lipschitz constant $L_i$, it suffices to show that, for any leaf $x$ in $\mathbb  T$, the set $C_x$ of points within that leaf has a diameter bounded above as follows: 
\begin{equation*}
    \Diam(C_x) \leq \frac{2\sqrt{ p }}{\ell^s}.
\end{equation*}

In this proof we assume, without loss of generality, that all leaves are at depth exactly $\floor{c\log\ell}$. If a leaf were instead even deeper in the tree, the diameter of the associated cluster would be even smaller. 
We begin by establishing a lower bound for the number of observations $n_a$ in any node $a$ of the tree $\mathbb  T$. This is done by leveraging the fact that every node is at depth at most $c\log\ell$, the upper bound on the number of leaves $\ell$, and the property of $\alpha$-regularity. Specifically, due to $\alpha$-regularity, we have $n_a \geq nT\cdot \alpha^{c\log\ell}$. Hence, 
\begin{align*}
n_a &\geq nT\alpha^{c\log\ell}= nT\ell^{c\log\alpha}\leq nT \paren{\frac{\alpha}{2c_m M}}^{\frac{-1}{c\log \alpha} \cdot c\log\alpha}=  \frac{2 nT }{\alpha} c_m M.
\end{align*}
Rearranging, we have the following for any node $a$ of the tree $\mathbb  T$:
\begin{align}\label{eq: lower bound n}
    c_mM \leq \frac{\alpha n_a}{2nT}. 
\end{align}

Next, for every covariate $j\in[p]$ and any leaf $x$, we aim to establish an upper bound for the length of $C_x$ along the $j$-th coordinate, denoted $\Diam_j(C_x)$. This is defined as follows:
\begin{align*}
    \Diam_j(C_x) := \sup_{(z_1,t_1),(z_2,t_2)\in C_x} \abs{X^{z_1 t_1}_j - X^{z_2 t_2}_j}. 
\end{align*}

Utilizing the lower bound in \cref{assumption: continuous covariates}, the volume $V_a$ of the rectangular hull enclosing the points in a given node $a$ can be upper bounded as follows:
\begin{align}\label{eq: vol orig}
V_a\leq \frac{1}{\ubar c}\paren{\frac{n_a}{nT} +c_mM} \leq \frac{2}{\ubar c}\paren{\frac{n_a}{nT}},
\end{align}
where the last inequality made use of \eqref{eq: lower bound n}.
Given the $\alpha$-regularity of splits, we know that a child of node $a$ has at most $n_a(1-\alpha)$ observations. This implies that the volume $V_r$ of the rectangular hull enclosing the points removed by the split on node $a$ satisfies:
\begin{align}\label{eq: vol removed}
    V_r \geq \frac{1}{\bar c}\paren{\frac{\alpha n_a}{nT} - c_mM} \geq \frac{1}{2\bar c}\paren{\frac{\alpha n_a}{nT}}. 
\end{align}

Putting \eqref{eq: vol orig} and \eqref{eq: vol removed} together, we have $V_r\geq \paren{\frac{\alpha\ubar c}{4\bar c}} V_a$, which means that a split along a given covariate removes at least $\frac{\alpha\ubar c}{4\bar c}$ fraction of the length along that coordinate.

Because the tree is a fair-split tree, for every $(\pi + 1)p$ splits, at least one split is made on every covariate. Hence, the number of splits leading to node $x$ along each coordinate is at least $\floor{\frac{\floor{c\log \ell}}{(\pi+1)p}} \geq \frac{c\log\ell}{(\pi+1)p}-2$.

As a result, for every $j\in[p]$, we have $\Diam_j(C_x) \leq \paren{1-\frac{\alpha\ubar c}{4\bar c}}^{\frac{c\log\ell}{(\pi+1)p}-2}$, which implies
\begin{align*}
    \Diam(C_x) &\leq \sqrt{p}\paren{1-\frac{\alpha\ubar c}{4\bar c}}^{\frac{c}{(\pi+1)p}\log\ell-2}\\
    &\leq \sqrt{p}\paren{1-\frac{\alpha\ubar c}{4\bar c}}^{-2}  \paren{\ell}^{\frac{c}{(\pi+1)p}\log\paren{1-\frac{\alpha\ubar c}{4\bar c}}}\\
    &\leq \sqrt{p}\paren{\frac{3}{4}
    }^{-2}\paren{\ell}^{-\frac{c}{(\pi+1)p}\frac{\alpha\ubar c}{4\bar c}},
\end{align*}
which proves the desired bound. 
\end{proof}

The following lemma is a generalization of the multivariate case of the Dvoretzky–Kiefer–Wolfowitz inequality. Specifically, it provides a bound on the maximum deviation of the empirical fraction of observations within a hyper-rectangle from its expected value. It will be used in the proof of \cref{lemma: iid satisfies assumption}.

\begin{lemma} \label{lemma: dkw box}
Let $X$ be a $d$-dimensional random vector with pdf $f_X$.
Let $X_1,X_2,\dots,X_n$ be an i.i.d.~sequence of $d$-dimensional vectors drawn from this distribution. 
For any $x_1,x_2\in\mathbb  R^d$, define 
\begin{align*}
    G(x_1,x_2):= \Pr[x_1\leq X\leq x_2]
\end{align*}
and 
\begin{align*}
    G_n(x_1,x_2):= \frac{1}{n}\sum_{i=1}^n I_{\set{X_i\in [x_1,x_2]}}.
\end{align*}
Then, for any $\epsilon > 0$, we have both of the following bounds:
\begin{align*}
    &\Pr\sqb{\sup_{x_1,x_2\in \mathbb  R^d}\abs{G_n(x_1,x_2)- G(x_1,x_2)} > \epsilon } \leq 2n^{2d}\exp\paren{-2n\epsilon^2 + 16\epsilon d},\\
    &\Pr\sqb{\sup_{x_1,x_2\in \mathbb  R^d}\abs{G_n(x_1,x_2)- G(x_1,x_2)} > \frac{4d}{n} + \epsilon } \leq 2n^{2d}\exp\paren{-2n\epsilon^2 }.
\end{align*}
\end{lemma}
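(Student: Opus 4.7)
The plan is to bracket any hyper-rectangle between inner and outer approximations drawn from a fixed deterministic grid of corners, apply Hoeffding's inequality to each grid box, and take a union bound over the polynomially many grid boxes. Since $X$ has a pdf, every marginal $X_j$ has a continuous CDF, so for a parameter $k$ to be chosen later, I can pick deterministic grid points $-\infty = q_{j,0} < q_{j,1} < \cdots < q_{j,k} = +\infty$ for each coordinate $j \in [d]$ with $\Pr[q_{j,i-1} < X_j \leq q_{j,i}] = 1/k$. Call $[a,b]$ a \emph{grid box} if each of its $2d$ coordinate endpoints lies in $\{q_{j,i}\}_{i,j}$; there are at most $(k+1)^{2d}$ such boxes.

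For any $R = [x_1, x_2]$, I would define the outer box $R^+$ by rounding each $x_{1,j}$ down to the nearest grid point and each $x_{2,j}$ up, and the inner box $R^-$ by doing the reverse (interpreting $R^-$ as $\emptyset$ if it degenerates). Then $R^- \subseteq R \subseteq R^+$, and $R^+ \setminus R^-$ lies in a union of $2d$ one-cell-wide axis-aligned slabs, each with $G$-mass at most $1/k$ by construction; hence $G(R^+) - G(R^-) \leq 2d/k$. Hoeffding gives $\Pr[|G_n(R') - G(R')| > \epsilon] \leq 2\exp(-2n\epsilon^2)$ for each fixed grid box $R'$, so a union bound yields that with probability at least $1 - 2(k+1)^{2d}\exp(-2n\epsilon^2)$, all grid boxes simultaneously satisfy $|G_n - G| \leq \epsilon$. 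On this event, sandwiching via $G_n(R^-) \leq G_n(R) \leq G_n(R^+)$ combined with the $G$-approximation yields $\sup_R |G_n(R) - G(R)| \leq \epsilon + 2d/k$.

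Choosing $k = \lfloor n/2 \rfloor$ (valid for $n \geq 2$) makes $(k+1)^{2d} \leq n^{2d}$ and $2d/k \leq 4d/n$, which is exactly the second bound of the lemma. The first bound follows by replacing $\epsilon$ with $\epsilon - 4d/n$ and using $(\epsilon - 4d/n)^2 \geq \epsilon^2 - 8\epsilon d/n$; multiplying by $-2n$ produces the $16\epsilon d$ term in the exponent and discards a favorable $-32d^2/n$ residue. For $\epsilon \leq 4d/n$ the first bound is vacuous, since its right-hand side is then at least $2n^{2d} \geq 1$.

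The only subtleties I anticipate are handling the degenerate case where $R^-$ is empty (the one-sided inequalities still hold with $G_n(R^-) = G(R^-) = 0$) and confirming triviality in the small-$\epsilon$ regime of the first bound. The substantive idea is that the VC-style combinatorial complexity of axis-aligned rectangles in $\mathbb{R}^d$ can be captured explicitly by a quantile grid of size $n^{2d}$, which avoids any appeal to Sauer-Shelah machinery and yields the constants $2$, $n^{2d}$, $16d$, and $4d/n$ directly.
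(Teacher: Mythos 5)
Your proof is correct (up to replacing $\lfloor n/2\rfloor$ with $\lceil n/2\rceil$ so that $2d/k \le 4d/n$ holds for odd $n$ as well), but it takes a genuinely different route from the paper's. The paper follows Devroye's argument: it observes that $G_n$ is a staircase with jumps only at the $n^d$ data-dependent corner vectors $Y_i$ formed from coordinate values of the sample, so the supremum is attained (up to a $2d/n$ boundary-counting slack) at pairs $(Y_i,Y_j)$. Because those corners are correlated with up to $2d$ of the samples, the paper then substitutes $d'\le 2d$ fresh i.i.d.\ draws for the offending samples, costing another $2d/n$, before applying Hoeffding conditionally on the realizations of $Y_i,Y_j$ and union-bounding over $(n^d)^2 = n^{2d}$ pairs. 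You instead bracket an arbitrary rectangle between inner and outer rectangles on a fixed deterministic quantile grid with $k+1$ cutpoints per axis, apply Hoeffding to each of the $(k+1)^{2d}$ grid boxes without any conditioning, and absorb the bracketing gap $2d/k$ into the threshold. Your approach avoids the dependence/substitution machinery entirely, needs no almost-sure non-coincidence event, and makes the origin of the $4d/n$ and $n^{2d}$ factors transparent via the choice $k\approx n/2$; the paper's approach avoids having to invoke the continuity of the marginals to construct exact quantiles, though it relies on the density anyway for the non-coincidence event, so the hypotheses used are essentially the same. Both yield identical constants.
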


\begin{proof}
We use ideas from the proof of the main theorem in \cite{devroye1977uniform} and their notation. Because the i.i.d.~samples $X_1,\ldots,X_n$ are drawn from a distribution with a density function, on an almost sure event $\mathcal E=\{X_{i_1 j}\neq X_{i_2 j},i_1\neq i_2\in[n],j\in[d]\}$, they do not share any component in common. 
For $i\in[n]$, let $X_i=\paren{X_{i1},\dots,X_{id}}$, and let $\mathcal Y$ denote the set of random vectors $Y_1,\dots,Y_{n^d}\in \mathbb  R^d$ that are obtained by considering all $n^d$ vectors of the form $\paren{X_{i_11},\dots,X_{i_dd}}$, where $\paren{i_1,\dots,i_d}\in \set{1,\dots,n}^d$. 

Because $G_n$ is a staircase function with flat levels everywhere except at points in $\mathcal Y$, and $G$ is monotonic,
$\abs{G_n(x_1,x_2)- G(x_1,x_2)}$ is maximized when $x_1$ and $x_2$ approach vectors that are in $\mathcal Y$. Under the event $\mathcal E$, we have
\begin{align*}
\sup_{x_1,x_2\in \mathbb  R^d}\abs{G_n(x_1,x_2)- G(x_1,x_2)} \leq \sup_{i,j}\abs{ G_n(Y_i,Y_j) - G(Y_i,Y_j)} +\frac{2d}{n},
\end{align*}
where the $\frac{2d}{n}$ is due to the fact that, for any $i,j$ there may be up to $2d$ different points in $X_1,\dots,X_n$ that share a component with $Y_i$ or $Y_j$ under the event $\mathcal E$. These $2d$ points lie on the perimeter of the hyper-rectangle between $Y_i$ and $Y_j$ and could be included or excluded from the count for $G_n$ depending on whether $x_1$ approaches $Y_i$ (and $x_2$ approaches $Y_j$) from the inside or outside of the hyper-rectangle.

Now, it remains to upper bound the right hand side of the above inequality.
For any pair of indices $i, j\leq n^d$, there exists a subset of indices $\mathcal{I}\subseteq [n]$, corresponding to samples of $X$ that have at least one component in common with either $Y_i$ or $Y_j$. Hence, the samples indexed by $\mathcal{I}$ are not independent from both $Y_i$ and $Y_j$. In order to apply Hoeffding's inequality to the samples, we sample $d':=\abs{\mathcal I}$ additional i.i.d.~random vectors $X_{n+1},\dots,X_{n+d'}$ from the same distribution. These serve to `substitute' those samples $X_l$ for which $l \in \mathcal{I}$. Under $\mathcal E$, the number of indices in $\mathcal I$ satisfies $d'\leq 2d$, and we have
\begin{align*}
    &\sup_{i,j}\abs{ G_n(Y_i,Y_j) - G(Y_i,Y_j)} +\frac{2d}{n} \\
    &= \sup_{i,j}\abs{ \frac{1}{n}\sum_{k=1}^n I_{\set{X_k\in [Y_i,Y_j]}} - G(Y_i,Y_j)} +\frac{2d}{n}\\
    &\leq \sup_{i,j}\abs{ \frac{1}{n}\sum_{k\in[n+d']\setminus\mathcal I}  I_{\set{X_k\in [Y_i,Y_j]}} - G(Y_i,Y_j)} +\frac{1}{n}\abs{\sum_{k\in\mathcal I} I_{X_k\in[Y_i,Y_j]} - \sum_{k=n+1}^{n+d'}I_{X_k\in[Y_i,Y_j]}}+\frac{2d}{n}\\
    &\leq \sup_{i,j}\abs{ \frac{1}{n}\sum_{k\in[n+d']\setminus\mathcal I}  I_{\set{X_k\in [Y_i,Y_j]}} - G(Y_i,Y_j)} +\frac{4d}{n}.
\end{align*}
The first inequality is due to the triangle inequality, and the second inequality follows because $\abs{\mathcal I} =d'\leq 2d$. 

By the independence of both $Y_i$ and $Y_j$ with $X_k$ for $k\in[n+d']\setminus\mathcal I$, we apply Hoeffding's inequality conditioned on the realizations of $Y_i$ and $Y_j$ to establish the following upper bound:
\begin{align*}
    \Pr\sqb{ \abs{ \frac{1}{n}\sum_{k\in[n+d']\setminus\mathcal I}  I_{\set{X_k\in [Y_i,Y_j]}} - G(Y_i,Y_j)} +\frac{4d}{n} \geq \epsilon} &\leq 2\exp\paren{-2n\paren{\epsilon -\frac{4d}{n} }^2}.
\end{align*}

Applying a Union Bound over all $i, j$, we obtain
\begin{align*}
\Pr\sqb{\sup_{x_1,x_2\in \mathbb  R^d}\abs{G_n(x_1,x_2)- G(x_1,x_2)} \geq \epsilon}&\leq 2n^{2d} \exp\paren{-2n\paren{\epsilon -\frac{4d}{n} }^2}\\
&\leq 2n^{2d} \exp\paren{-2n\epsilon^2 + 16\epsilon d}.
\end{align*}
This proves our first bound. To show the second bound, we note that the first line of the above inequality can be rewritten as
\begin{equation*}
    \Pr\sqb{\sup_{x_1,x_2\in \mathbb  R^d}\abs{G_n(x_1,x_2)- G(x_1,x_2)} \geq \frac{4d}{n} + \epsilon}\leq 2n^{2d} e^{-2n\epsilon^2}.
\end{equation*}
\end{proof}

\begin{lemma}\label{lemma: iid satisfies assumption}

For large enough $n$ and $T$, the covariates $X^{zt} \in \mathbb  R^p$ for $z\in[n]$ and $t\in[T]$ satisfy \cref{assumption: continuous covariates} with probability at least $1-\frac{6}{(nT)^2}$ if, for any nonnegative integer $x$, $y$, and $z$ that add up to $p$, every $X^{zt}$ is the concatenation of vectors $w^{zt}\in \mathbb  R^x, u^{zt}\in \mathbb  R^y, v^{zt}\in \mathbb  R^z$, which are independently generated as follows:

\begin{enumerate}
    \item \textbf{Covariates varying by unit and time:}
    Each $w^{zt}$ is i.i.d., sampled from a distribution $W$ on $[0,1]^x$ with density uniformly lower bounded by $\ubar c^{1/3}$ and upper bounded by $\bar c^{1/3}$. 
    
    \item \textbf{Covariates varying only across units:}    
    Draw $n$ i.i.d.~samples from a distribution $U$ on $[0,1]^y$, bounded in density between $\ubar c^{1/3}$ and $\bar c^{1/3}$. These make up $u^{z1}$ for $z\in[n]$, with arbitrary ordering. For every $z$ and $t\in[T]$, we have $u^{zt} =u^{z1}$.

    \item \textbf{Covariates varying only over time:}
    Draw $T$ i.i.d.~samples from a distribution $V$ on $[0,1]^z$ with the same density bounds. These form $v^{1t}$ for $t\in[T]$, with arbitrary ordering. For every $t$ and $z\in[n]$, we have $v^{zt} = v^{1t}$. 
\end{enumerate}
\end{lemma}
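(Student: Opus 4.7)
The plan is to decompose the hyper-rectangle $R = [x^{(1)}, x^{(2)}]\subset[0,1]^p$ into three sub-rectangles $R_w,R_u,R_v$ corresponding to the three blocks of coordinates, and to apply \cref{lemma: dkw box} separately to each group of i.i.d.~samples. Let $p_w = P(W\in R_w)$, $p_u = P(U\in R_u)$, $p_v = P(V\in R_v)$; the density bounds give $p_w p_u p_v \in [\ubar c\, V, \bar c\, V]$ where $V$ is the volume of $R$. Hence it suffices to show that, uniformly over $R$, the empirical proportion $S_R := \#\{(i,t): X^{it}\in R\}/(nT)$ deviates from $p_w p_u p_v$ by at most $O(M)$ with probability at least $1 - 6/(nT)^2$.

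First, I would apply \cref{lemma: dkw box} to the $n$ i.i.d.~samples $\{u^i\}_{i=1}^n$ in $[0,1]^y$: with failure probability at most $2/(nT)^2$, this yields $\sup_{R_u}|\hat F_u(R_u) - p_u|\lesssim \sqrt{(y+1)\ln(nT)/n}\leq c' M$, where $\hat F_u(R_u) = \tfrac{1}{n}\sum_i I\{u^i\in R_u\}$. The same argument applied to the $T$ i.i.d.~samples $\{v^t\}$ in $[0,1]^z$ gives $\sup_{R_v}|\hat F_v(R_v) - p_v|\leq c' M$ with probability at least $1 - 2/(nT)^2$.

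Next, I would condition on the $u^i$'s and $v^t$'s; set $A_i = I\{u^i\in R_u\}$, $B_t = I\{v^t\in R_v\}$, $N_A = \sum_i A_i$, $N_B = \sum_t B_t$. For each $(A,B)$ with $N_A N_B \geq 1$, the collection $\{w^{it}: A_i B_t = 1\}$ consists of $N_A N_B$ i.i.d.~samples of $W$ (because the $w$-block is independent of the $u$- and $v$-blocks), so \cref{lemma: dkw box} applies. Only $\leq n^{2y} T^{2z}$ distinct $A$, $B$ vectors arise as $(R_u,R_v)$ vary, so a union bound at per-pair failure probability $\lesssim (nT)^{-(2+2y+2z)}$ permits a DKW threshold of order $\sqrt{(p+1)\ln(nT)/(N_A N_B)}$, yielding
\begin{equation*}
\sup_{R_w,R_u,R_v:\,N_A N_B\geq 1}\bigl|\hat F_{w\mid u,v}(R_w) - p_w\bigr| \lesssim \sqrt{\frac{(p+1)\ln(nT)}{N_A N_B}}
\end{equation*}
with probability at least $1 - 2/(nT)^2$, where $\hat F_{w\mid u,v}(R_w) = \tfrac{1}{N_A N_B}\sum_{A_i B_t = 1}I\{w^{it}\in R_w\}$.

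Finally, I would combine the bounds via the decomposition
\begin{equation*}
S_R - p_w p_u p_v = \tfrac{N_A N_B}{nT}\bigl(\hat F_{w\mid u,v}(R_w) - p_w\bigr) + p_w\bigl(\hat F_u(R_u)\hat F_v(R_v) - p_u p_v\bigr),
\end{equation*}
bounding the second term by $\hat F_u\,|\hat F_v - p_v| + p_v\,|\hat F_u - p_u|\lesssim M$, and the first by $\tfrac{N_A N_B}{nT}\sqrt{(p+1)\ln(nT)/(N_A N_B)} = \sqrt{N_A N_B (p+1)\ln(nT)/(nT)^2}\leq M$. When $N_A N_B = 0$, $S_R = 0$ and either $p_u$ or $p_v$ is $\lesssim M$ from the first two steps, so $p_w p_u p_v \lesssim M$ trivially. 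Summing the three failure probabilities gives overall failure $\leq 6/(nT)^2$, as desired. The main technical hurdle is arranging the uniformity in the third step: the key observation is that the $(N_A N_B)^{-1/2}$ rate of the conditional DKW bound is exactly cancelled by the $\tfrac{N_A N_B}{nT}$ pre-factor in the decomposition, leaving a bound of order $M$ that does not depend on $R_u,R_v$; the case $N_A N_B = 0$ must be handled separately since $\hat F_{w\mid u,v}$ is then undefined, but the Step 2 bounds force the corresponding $p_w p_u p_v$ to be $O(M)$ automatically.
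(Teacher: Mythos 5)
Your proposal is correct and follows essentially the same route as the paper's proof: you apply \cref{lemma: dkw box} to the $u$- and $v$-samples, condition on them and union-bound the conditional DKW bound for the $w$-samples over the (at most $n^{2y}T^{2z}$) data-induced unit/time subsets, and then multiply the three empirical-fraction bounds together, exactly as in the paper's events $\mathcal{E_A},\mathcal{E_B},\mathcal{E_C}$. The only cosmetic differences are that you use an $N_A N_B$-dependent threshold whose $(N_A N_B)^{-1/2}$ rate cancels against the $\tfrac{N_A N_B}{nT}$ prefactor (the paper instead uses the fixed deviation $nT\epsilon+4x$ before normalizing) and that you treat the $N_A N_B=0$ case explicitly, which the paper leaves implicit.
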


\begin{proof}

The proof of the lemma proceeds as follows. We first formally define an event $\mathcal E$, which intuitively states that the maximum deviation between the fraction of samples falling within a specified interval and its expected value is bounded. Our second step is to show that the event occurs with high probability, and our final step is to show that under the event $\mathcal E$, \cref{assumption: continuous covariates} is satisfied.

We begin by defining our notation. For any random vector $D$, we define $\hat{G}^D(a,b)$ as the empirical fraction of observations that fall within a specified interval $(a,b)$, with $a\leq b$ having the same dimension as $D$. The expected value of this quantity is denoted $G^D(a,b) := \Pr[a \leq D \leq b]$. Note that we have
\begin{align*}
    \hat G^U(a,b)= \frac{1}{n}\sum_{z} I_{\set{u^{z1}\in [a,b]}} \quad 
    \hat G^V(a,b)= \frac{1}{T}\sum_{t} I_{\set{v^{1t}\in [a,b]}}\quad
    \hat G^X(a,b)= \frac{1}{nT}\sum_{z,t} I_{\set{X^{zt}\in [a,b]}}.
\end{align*}

\textbf{Defining event $\mathcal E$.}
Now we will formally define $\mathcal E$, which is the intersection of three events $\mathcal E = \mathcal{E_A}\cap \mathcal{E_B}\cap\mathcal{E_C}$.  
The events $\mathcal{E_A}$ (resp. $\mathcal{E_B}$) is the event that maximum deviation between the fraction of samples from random variable $U$ (resp. $V$) falling within an interval and its expected value is bounded.
To simplify our notation, we let $\epsilon:=\sqrt{\frac{(p+1)\ln (nT)}{\min(n,T)}}$. Mathematically, 
    \begin{align*}
        \mathcal{E_A}:=  \set{\sup_{u^{(1)}\leq u^{(2)}}\abs{\hat G^U(u^{(1)}, u^{(2)}) - G^U(u^{(1)}, u^{(2)})} \leq \epsilon +\frac{4y}{n}}, \\
        \mathcal{E_B}:= \set{\sup_{v^{(1)}\leq v^{(2)}} \abs{\hat G^V(v^{(1)}, v^{(2)}) - G^V(v^{(1)}, v^{(2)})} \leq \epsilon+\frac{4z}{T}}.
    \end{align*}

We will now turn to defining event $\mathcal{E_C}$, which intuitively bounds the deviation for the $W$ samples. In the event $\mathcal{E_C}$, rather than considering all of the samples of $W$, we are only considering the samples of $W$ for which the corresponding $U$ and $V$ samples satisfy given constraints. That is, $\mathcal{E_C}$ is the event that for \textit{any} constraints on $U$ and $V$, the maximum deviation between the following is bounded: 1) the proportion of samples with $U$ and $V$ satisfying the constraint, that have $W$ fall within a specified interval and 2) the expected fraction of $W$ samples falling within a specified interval. We will define the event $\mathcal{E_C}$ formally below.

To define event $\mathcal{E_C}$, we begin by defining $S_U$ (resp. $S_V$), which is the set of all possible subsets of units (resp. time periods) selected by some constraint on $U$ (resp. $V$).
Given the random variables $u^{z1}$ for $z\in[n]$, we denote by $\tilde u_1,\ldots,\tilde u_{n^y}$ all $n^y$ vectors formed by selecting its $i$-th coordinate (for each $i\in[y]$) from this set of observed values: $\{u^{z1}_i,z\in[n]\}$. We similarly define $\tilde v_1,\ldots,\tilde v_{T^z}$ using the variables $v^{1t}$ for $t\in[T]$. 
Next, for any choice of indices $i_1,i_2\in[n^y]$ such that $\tilde u_{i_1}\leq \tilde u_{i_2}$, we denote by $\mathcal Z(i_1,i_2)$ the subset of units that lie in the hyper-rectangle defined by $\tilde u_{i_1}$ and $\tilde u_{i_2}$, that is
\begin{equation*}
    \mathcal Z(i_1,i_2) := \set{z\in[n], \tilde u_{i_1} \leq u^{z1} \leq \tilde u_{i_2}}.
\end{equation*}
We define $\mathcal T(j_1,j_2)$ similarly, for any $j_1,j_2\in[T^z]$ with $\tilde v_{j_1}\leq \tilde v_{j_2}$. 
Finally, $S_U:=\{\mathcal Z(i_1,i_2): i_1,i_2\in[n^y],\tilde u_{i_1}\leq \tilde u_{i_2}\}$ and $S_V$ is similarly defined. We are now ready to define the last event:
% \begin{align*}
%     \mathcal{E_C}:= \set{ \sup_{w^{(1)}\leq w^{(2)}}  \abs{\frac{1}{|\mathcal Z| |\mathcal T|} \sum_{z\in\mathcal Z, t\in\mathcal T} I_{w^{zt}\in [w^{(1)},w^{(2)}]} - G^W(w^{(1)},w^{(2)})} \leq \frac{nT\epsilon + 4x}{|\mathcal Z| |\mathcal T|} \forall \mathcal Z\in S_U,\mathcal T\in S_V}.
% \end{align*} % before changing it to two lines
\begin{align*}
    \mathcal{E_C}:= \left\{ \sup_{w^{(1)}\leq w^{(2)}}  \abs{\frac{1}{|\mathcal Z| |\mathcal T|} \sum_{z\in\mathcal Z, t\in\mathcal T} I_{w^{zt}\in [w^{(1)},w^{(2)}]} - G^W(w^{(1)},w^{(2)})} \leq \frac{nT\epsilon + 4x}{|\mathcal Z| |\mathcal T|} \right. \\
    \left.\forall \mathcal Z\in S_U,\mathcal T\in S_V\right\}.
\end{align*}

\textbf{Bounding the probability of event $\mathcal E$.}
By \cref{lemma: dkw box}, we directly have
\begin{align*}
\Pr(\mathcal{E_A}) &\geq 1- 2n^{2y}e^{-2n\epsilon^2} \\ \Pr(\mathcal{E_B}) &\geq 1- 2T^{2z}e^{-2T\epsilon^2}.
\end{align*}
    
We now show that $\mathcal{E_C}$ occurs with high probability. Note that conditioned on the realizations of $u^{z1}$ and $v^{1t}$ for $z\in[n]$ and $t\in[T]$, and for any choice of unit and time period subsets $\mathcal Z\in S_U$ and $\mathcal T\in S_V$, all the random variables $w^{zt}$ for $z\in \mathcal Z,t\in \mathcal T$ are still i.i.d.~according to the distribution $W$ since they are sampled independently from samples of $U$ and $V$. As a result, we can apply \cref{lemma: dkw box} to obtain  
    \begin{equation*}
       \sup_{w^{(1)}\leq w^{(2)}} \abs{\frac{1}{|\mathcal Z| |\mathcal T|} \sum_{z\in\mathcal Z, t\in\mathcal T} I_{w^{zt}\in [w^{(1)},w^{(2)}]} - G^W(w^{(1)},w^{(2)})} \leq \frac{nT\epsilon + 4x}{|\mathcal Z| |\mathcal T|},
    \end{equation*}
    with probability at least $1-2(|\mathcal Z||\mathcal T|)^{2x} e^{-2(nT\epsilon)^2/(|\mathcal Z||\mathcal T| )} \geq 1-2(nT)^{2x} e^{-2nT\epsilon^2}$ for any given $\mathcal Z$ and $\mathcal T$ and for any realization of the random variables $u^{z1}$ for $z\in[n]$ and $v^{1t}$ for $t\in[T]$. Taking the union bound, over all choices of indices $i_1,i_2\in [n^y]$ and $j_1,j_2\in[T^z]$ then shows that
    \begin{equation*} 
        \Pr(\mathcal{E_C}\mid u^{zt},v^{zt}; \forall z\in[n],t\in[T]) \geq 1-2n^{2y}T^{2z}(nT)^{2x} e^{-2nT\epsilon^2}.
    \end{equation*}
    In particular, the same lower bound still holds for $\Pr(\mathcal{E_C})$ after taking the expectation over the samples from $U$ and $V$. Putting all the probabilistic bounds together, we obtain
    \begin{equation*}
        \Pr[\mathcal{E_A}\cap \mathcal{E_B}\cap\mathcal{E_C}] \geq 1- 6n^{2(x+y)}T^{2(x+z)} e^{-2\min(n,T)\epsilon^2}.
    \end{equation*}
    Plugging in $\epsilon=\sqrt{\frac{(p+1)\ln (nT)}{\min(n,T)}}$, this event has probability at least $1-\frac{6}{(nT)^2}$.

\textbf{\cref{assumption: continuous covariates} is satisfied under event $\mathcal E$.}
We suppose that event $\mathcal E$ holds for the rest of this proof. Consider an arbitrary hyper-rectangle in $[0,1]^p$ with lower and upper corners $x^{(1)}$ and $x^{(2)}$, where $x^{(1)} \leq x^{(2)}$. For convenience, we decompose $x^{(1)}$ into three distinct vectors: $w^{(1)} \in \mathbb{R}^x$, $u^{(1)} \in \mathbb{R}^y$, and $v^{(1)} \in \mathbb{R}^z$, such that the concatenation of these three vectors forms $x^{(1)}$. Similarly, $x^{(2)}$ is decomposed into vectors $w^{(2)}$, $u^{(2)}$, and $v^{(2)}$. Our goal is to bound the number of observations that simultaneously satisfy the constraints: 
\begin{align*}
    w^{(1)}\leq w^{zt}\leq w^{(2)}\qquad
    u^{(1)}\leq u^{zt}\leq u^{(2)}\qquad
    v^{(1)}\leq v^{zt}\leq v^{(2)}.
\end{align*}
First, the set $\mathcal Z$ of units that satisfies the constraint $u^{(1)}\leq u^{z1}\leq u^{(2)}$ is included within the considered sets $S_U$. 
% Indeed, it suffices to shrink the box constraint $[u^{(1)},u^{(2)}]$ while still including all units $\mathcal Z$: the resulting lower and upper corners can be written as vectors $\tilde u_{i_1},\tilde u_{i_2}$ for some appropriate indices $i_1,i_2\in[n^y]$. 
Similarly, the set $\mathcal T$ of time periods satisfying the constraint $v^{(1)}\leq v^{1t}\leq v^{(2)}$ is also in $S_V$. 
As a result, because $\mathcal{E_C}$ is satisfied, we have
\begin{equation*}
    \abs{\frac{nT}{|\mathcal Z| |\mathcal T|} \hat G^X(x^{(1)}x^{(2)}) - G^W(w^{(1)},w^{(2)})} \leq \frac{nT\epsilon + 4x}{|\mathcal Z| |\mathcal T|}.
\end{equation*}
Multiplying both sides by $\frac{|\mathcal Z| |\mathcal T|}{nT}$, we have
\begin{equation}\label{eq: covariates W}
    \abs{ \hat G^X(x^{(1)},x^{(2)}) - \frac{|\mathcal Z| |\mathcal T|}{nT}G^W(w^{(1)},w^{(2)})} \leq \epsilon + \frac{4x}{nT}.
\end{equation}
We then use the events $\mathcal{E_A}$ and $\mathcal{E_B}$ to bound the number of units and times satisfying their respective constraints:
\begin{align}\label{eq: covariates U, V}
    \abs{\frac{|\mathcal Z|}{n} - G^U(u^{(1)}, u^{(2)})} \leq \epsilon +\frac{4y}{n} , \qquad
    \abs{\frac{|\mathcal T|}{T} - G^V(v^{(1)}, v^{(2)})} \leq \epsilon+\frac{4z}{T}.
\end{align}
Now let $ P:=G^U(u^{(1)},u^{(2)})  G^V(v^{(1)},v^{(2)}) G^W(w^{(1)},w^{(2)})$. Provided that $n$ and $T$ are large enough such that $\frac{4x}{nT},\frac{4y}{n},\frac{4z}{T}\leq \epsilon\leq\frac{1}{4}$, inequalities \eqref{eq: covariates W} and \eqref{eq: covariates U, V} imply
\begin{equation}\label{eq: covariates X}
    \abs{\hat G^X(x^{(1)},x^{(2)}) - P} \leq 7\epsilon.
\end{equation}

Because of the assumed density bounds on $U$, $V$, and $W$, we have
\begin{align*}
    \ubar c\prod_{i=1}^{p} \paren{x_{i}^{(2)}-x_{i}^{(1)}}\leq  P \leq \bar c\prod_{i=1}^{p} \paren{x_{i}^{(2)}-x_{i}^{(1)}}.
\end{align*}
Combining this with \eqref{eq: covariates X} completes the proof.
\end{proof}

\section{\texorpdfstring{Proof of \cref{lem:tau-decomposition}}{Proof of Lemma \ref{lem:tau-decomposition}}}
\label{appendix: tau-decomposition}

\begin{lemma}\label{lemma: projection}
Define the matrix subspace $\mathbf T$ as follows:
\begin{align*}
\mathbf T &= \set{UA^\top + BV^{\top} + a\mathbf{1}^\top \mid A\in \mathbb{R}^{T \times r}, B \in \mathbb{R}^{n \times r}, a \in \mathbb{R}^{n}}.
\end{align*}

The projection onto its orthogonal space is
\begin{align*}
P_{{\mathbf T}^{\perp}}(X) &=(I - UU^{\top}) X\paren{I-\frac{rr^\top }{\norm{r}^2}} (I - VV^{\top}),\quad \text{where }  r = (I-VV^\top )\mathbf{1}
% &=(I - UU^{\top}) X\paren{I-\frac{(I-VV^\top )\mathbf{1}_{T\times T}(I-VV^\top )}{\norm{(I-VV^\top )\mathbf{1}}^2}} (I - VV^{\top}).
\end{align*}
\end{lemma}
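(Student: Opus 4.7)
\textbf{Proof plan for \cref{lemma: projection}.}

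The plan is to verify, by direct computation, the two defining properties of an orthogonal projection: that $P_{\mathbf T^{\perp}}(X)$ lies in $\mathbf T^{\perp}$ for every $X$, and that $X - P_{\mathbf T^{\perp}}(X)$ lies in $\mathbf T$. To do this cleanly, I first want to characterize $\mathbf T^{\perp}$ more explicitly. Taking Frobenius inner products against each of the three generating types of matrices $UA^\top$, $BV^\top$, and $a\mathbf 1^\top$, and using $\sang{UA^\top,X} = \tr(A U^\top X)$, etc., it follows that
\begin{equation*}
\mathbf T^{\perp} \;=\; \set{X \in \R^{n\times T} \,:\, U^\top X = 0,\; XV = 0,\; X\mathbf 1 = 0}.
\end{equation*}
Here I use that $U$ and $V$ have orthonormal columns (they come from an SVD).

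Next, I check that the claimed formula produces an element of $\mathbf T^{\perp}$. Writing $P_U = UU^\top$, $P_V = VV^\top$, and $P_r = rr^\top/\norm{r}^2$, the formula reads $P_{\mathbf T^{\perp}}(X) = (I-P_U)\,X\,(I-P_r)(I-P_V)$. Since $U^\top(I-P_U) = 0$ and $(I-P_V)V=0$, the conditions $U^\top P_{\mathbf T^\perp}(X) = 0$ and $P_{\mathbf T^{\perp}}(X)V = 0$ hold. For the third condition, I use that $r = (I-P_V)\mathbf 1$ implies $(I-P_V)\mathbf 1 = r$, so $P_{\mathbf T^\perp}(X)\mathbf 1 = (I-P_U)X(I-P_r)r = 0$, because $(I-P_r)r=0$.

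The main work is the second verification: $X - P_{\mathbf T^{\perp}}(X) \in \mathbf T$. I split $X = P_U X + (I-P_U)X$, so that
\begin{equation*}
X - P_{\mathbf T^{\perp}}(X) \;=\; P_U X \;+\; (I-P_U)\,X\,\bigl[I - (I-P_r)(I-P_V)\bigr].
\end{equation*}
The first piece $P_U X = U(X^\top U)^\top$ is visibly of the form $U A^\top$, so it is in $\mathbf T$. Expanding the bracket gives $I - (I-P_r)(I-P_V) = P_V + P_r(I-P_V)$, so the remaining term splits into $(I-P_U)X P_V$, which is of the form $BV^\top$, and $(I-P_U)X P_r (I-P_V)$. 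For this last piece, I will use the identity $r^\top(I-P_V) = r^\top$ (which follows because $(I-P_V)$ is a projection and $r$ is already in its range), reducing it to $v\,r^\top$ with $v := (I-P_U)Xr/\norm{r}^2$. Finally, using $r = \mathbf 1 - VV^\top\mathbf 1$ once more, I decompose
\begin{equation*}
v r^\top \;=\; v\mathbf 1^\top \;-\; (v\mathbf 1^\top V)V^\top,
\end{equation*}
which lies in the sum of $\{a\mathbf 1^\top\}$ and $\{BV^\top\}$, hence in $\mathbf T$.

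The main obstacle, modest as it is, is keeping track of the asymmetry between the left factor (handling only $U$) and the right factor (handling both $V$ and $\mathbf 1$). The middle correction $I - P_r$ is precisely what accounts for the fact that $\{a\mathbf 1^\top\}$ and $\{BV^\top\}$ are in general not orthogonal subspaces; decomposing $r = \mathbf 1 - VV^\top\mathbf 1$ in the last step is the crucial algebraic move that shows the leftover term can be absorbed into $\mathbf T_2 + \mathbf T_3$. I will implicitly assume $r \neq 0$, i.e., $\mathbf 1 \notin \text{range}(V)$; in the degenerate case $r = 0$ one has $\{a\mathbf 1^\top\}\subseteq \{BV^\top\}$ and the standard tangent-space formula $(I-P_U)X(I-P_V)$ applies.
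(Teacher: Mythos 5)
Your proposal is correct and follows essentially the same route as the paper: verify directly that the claimed formula lands in $\mathbf T^{\perp}$ (via $U^\top P = 0$, $PV = 0$, $P\mathbf 1 = 0$) and that the residual $X - P_{\mathbf T^\perp}(X)$ decomposes into terms of the forms $UA^\top$, $BV^\top$, and $a\mathbf 1^\top$, using $r^\top = \mathbf 1^\top(I - VV^\top)$ to absorb the leftover rank-one piece. The only cosmetic difference is bookkeeping—you split off $P_U X$ and use $P_r(I-P_V)=P_r$, while the paper expands $X-P$ and collects terms directly—so no substantive comparison is needed.
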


\begin{proof}
    For a given matrix $X$, we define $P:=(I - UU^{\top}) X\paren{I-\frac{rr^\top }{\norm{r}^2}} (I - VV^{\top})$. To show that $P=P_{{\mathbf T}^{\perp}}(X)$, it suffices to show that $P\in \mathbf T^\perp$ and $X-P\in\mathbf T$. 
    
    First, note that $U^\top P=0$ and $P V=0$ because of the construction of $P$. This implies that for any matrices $A\in\mathbb R^{T\times r}$ and $B\in\mathbb R^{n\times r}$, we have $\langle P,UA^\top\rangle = \langle P, BV^\top\rangle=0$. Next, we note that
    \begin{equation*}
        P\mathbf 1 = (I - UU^{\top}) X\paren{I-\frac{rr^\top }{\norm{r}^2}} r =0.
    \end{equation*}
    As a result, we also have $\langle P,a\mathbf 1^\top\rangle =0$ for all $a\in\mathbb R^n$. This ends the proof that $P\in \mathbf T^\perp$.

    Next, we compute
    \begin{align*}
        X-P &= U\underbrace{ U^\top X \paren{I-\frac{rr^\top }{\norm{r}^2}} (I - VV^{\top})}_{A^{'\top}} + \underbrace{X \paren{I-\frac{rr^\top }{\norm{r}^2}} V}_{B'} V^\top + X\frac{rr^\top }{\norm{r}^2}\\
        &= UA^{'\top} + \paren{B' - X\frac{r \mathbf 1^\top V}{\norm{r}^2}} V^\top + X\frac{r}{\norm{r}^2} \mathbf 1^\top
    \end{align*}
    Hence, $X-P\in\mathbf T$, which concludes the proof of the lemma.
\end{proof}
% \begin{proof}

% We can find the projection by solving 
% \begin{align*}
% \min_{A, B, a} \norm{X - UA^\top - BV^{\top} - a\mathbf{1}^\top  }_\F^2,
% \end{align*}

% whose first order conditions are:
% \begin{align*}
% \paren{X - UA^\top - BV^{\top} - a\mathbf{1}^\top }^\top U = 0\\
% \paren{X - UA^\top - BV^{\top} - a\mathbf{1}^\top  }V = 0\\
% a = \frac{1}{T}\paren{X - UA^\top - BV^{\top} }\mathbf{1}.
% \end{align*}
    
% Solving, we find the optimal solution:
% \begin{align*}
% A^{\star\top} &= U^{\top} X\paren{I-\frac{rr^\top }{\norm{r}^2}}\\
% B^{\star} &= (I - UU^{\top}) X\paren{I-\frac{rr^\top }{\norm{r}^2}} V\\
% a^{\star} &= X\frac{rr^\top }{\norm{r}^2}.
% \end{align*}

% The projection $P_{{\mathbf T}^{\perp}}(X) = X - UA^{\star\top} - B^{\star}V^{\top} - a^{\star}\mathbf{1}^\top$ can be rearranged into the desired expression.
% \end{proof}

\begin{proof}[Proof of \cref{lem:tau-decomposition}.]
It is known \cite{cai2010singular} that the set of subgradients of the nuclear norm of an arbitrary matrix $X\in \mathbb {R}^{n_1\times n_2}$ with SVD $X = U\Sigma V^\top $ is
\begin{align*}
    \partial \norm{X}_\star = \set{UV^\top  + W : W \in \mathbb {R}^{n_1\times n_2}, U^\top W=0, WV=0, \norm{W}\leq 1}.
\end{align*}

Thus, the first-order optimality conditions of \eqref{eq:convex-program} are
\begin{subequations}\label{eq:convex-conditions}
\begin{align}
\sang{Z_j, O - \hat{M} - \hat m\mathbf{1}^\top- \sum_{i=1}^k \hat{\tau}_iZ_i} &= 0 \qquad \text{for }j=1,2,\dots,k \label{eq:convex-condition-tau}\\
 O - \hat{M}- \hat m\mathbf{1}^\top - \sum_{i=1}^k \hat{\tau}_iZ_i &= \lambda\paren{\hat{U}\hat{V}^\top  + W}\label{eq:convex-condition-M}\\
\hat U^\top W &=0\label{eq:convex-condition-WU}\\
W\hat V&=0\label{eq:convex-condition-WV}\\
\norm{W} &\leq 1 \label{eq:convex-condition-W}\\
\hat m &= \frac{1}{T}\paren{O-\hat M - \sum_{i=1}^k \hat{\tau}_iZ_i} \mathbf{1}.\label{eq:convex-condition-m}
\end{align}
\end{subequations}

Combining equations \eqref{eq:convex-condition-tau} and \eqref{eq:convex-condition-M}, we have 
\begin{align}\label{eq:ZUVT-ZW}
\sang{Z_j, \lambda\paren{\hat{U}\hat{V}^\top  + W}} &= 0  \qquad \text{for }j=1,2,\dots,k
\end{align}

By the definition of our model, we have $O =  M^{\star} + \sum_{i=1}^k \tau^{\star}_i Z_i + \hat{E}.$ We plug this into equation \eqref{eq:convex-condition-M}:
\begin{align*}
 M^{\star}- \hat{M}- \hat m\mathbf{1}^\top + \sum_{i=1}^k \paren{\tau^{\star}_i-\hat{\tau}_i} Z_i  + \hat{E}  &= \lambda\paren{\hat{U}\hat{V}^\top  + W}.
\end{align*}

We apply $P_{\hat {\mathbf T}^{\perp}}(\cdot)$ to both sides, where the formula for the projection is given in \cref{lemma: projection}:
\begin{align*}
    &\underbrace{P_{\hat {\mathbf T}^{\perp}}(M^{\star}+\hat{E}) + \sum_{i=1}^k\paren{\tau^{\star}_i - \hat{\tau}_i}P_{\hat {\mathbf T}^{\perp}}(Z_i)}_{B}  \\
    &= \lambda P_{\hat {\mathbf T}^{\perp}}\paren{W} \\
    &= \lambda (I - \hat U\hat U^{\top}) W\paren{I-\frac{(I-\hat V\hat V^\top )\mathbf{1}\mathbf{1}^\top(I-\hat V\hat V^\top )}{\norm{(I-\hat V\hat V^\top )\mathbf{1}}^2}} (I - \hat V\hat V^{\top}) \\
  &=  \lambda W\paren{I-\frac{\mathbf{1}\mathbf{1}^\top(I-\hat V\hat V^\top )}{\norm{(I-\hat V\hat V^\top )\mathbf{1}}^2}}\\
  &= \lambda W - \lambda W\paren{\frac{\mathbf{1}\mathbf{1}^\top}{T}}.
\end{align*}
The last line makes use of the following claim:
\begin{claim} \label{claim: hat V 1 = 0}
    $\hat V^\top\mathbf1  = 0$.
\end{claim}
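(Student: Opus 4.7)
The plan is to derive the identity $\hat V^\top\mathbf 1 = 0$ directly from the first-order optimality conditions already established in the proof of \cref{lem:tau-decomposition}, namely \eqref{eq:convex-condition-M}, \eqref{eq:convex-condition-WU}, and \eqref{eq:convex-condition-m}. The key idea is to exploit the fact that the $m\mathbf 1^\top$ term in the convex program absorbs all the mass of the residual that would otherwise point in the direction of $\mathbf 1$, forcing the right-hand side of \eqref{eq:convex-condition-M} to vanish when right-multiplied by $\mathbf 1$.

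More concretely, I would first right-multiply \eqref{eq:convex-condition-M} by $\mathbf 1$ to obtain
\begin{equation*}
\paren{O - \hat M - \sum_{i=1}^k \hat\tau_i Z_i}\mathbf 1 - T\hat m = \lambda\paren{\hat U\hat V^\top + W}\mathbf 1.
\end{equation*}
By \eqref{eq:convex-condition-m}, the left-hand side equals $T\hat m - T\hat m = 0$, so $\paren{\hat U\hat V^\top + W}\mathbf 1 = 0$. Next, I would left-multiply this identity by $\hat U^\top$ and invoke the orthogonality condition \eqref{eq:convex-condition-WU} (which says $\hat U^\top W = 0$), together with $\hat U^\top \hat U = I$, to conclude that $\hat V^\top \mathbf 1 = 0$.

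The argument involves only a few lines of linear algebra, and there is no serious technical obstacle; the only subtlety is ensuring that one really uses the optimality condition for $\hat m$ (rather than the optimality condition for $\hat M$, which provides a different kind of structure). This is exactly the place where introducing the unpenalized row-mean term $m\mathbf 1^\top$ in \eqref{eq:convex-program} pays off, and why the error decomposition of \cref{lem:tau-decomposition} differs from the one in \cite{farias2021learning}.
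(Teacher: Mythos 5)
Your argument is correct, but it is not the route the paper takes. The paper proves \cref{claim: hat V 1 = 0} by contradiction with a variational construction: assuming $\hat V^\top\mathbf 1\neq 0$, it projects each column of $\hat V$ orthogonally to $\mathbf 1$, absorbs the discarded component $\hat U\hat\Sigma(\hat V-V')^\top$ into the unpenalized term $m'\mathbf 1^\top$ so that the Frobenius residual is unchanged, and observes that the nuclear norm strictly decreases because $V'$ then has at least one column of norm strictly less than one --- contradicting optimality. You instead read the identity off the stationarity conditions: right-multiplying \eqref{eq:convex-condition-M} by $\mathbf 1$ and using \eqref{eq:convex-condition-m} kills the left-hand side, giving $(\hat U\hat V^\top+W)\mathbf 1=0$ (since $\lambda>0$), and left-multiplying by $\hat U^\top$ with \eqref{eq:convex-condition-WU} and $\hat U^\top\hat U=I$ isolates $\hat V^\top\mathbf 1=0$. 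This is legitimate and non-circular, since the conditions \eqref{eq:convex-conditions} are derived before the claim is invoked in the proof of \cref{lem:tau-decomposition}. Your version is shorter and has a pleasant byproduct: once $\hat V^\top\mathbf 1=0$, the same identity immediately gives $W\mathbf 1=0$, which streamlines the subsequent computation of $\lambda W\bigl(\mathbf 1\mathbf 1^\top/T\bigr)$ in the paper's proof of \cref{lem:tau-decomposition}. What the paper's perturbation argument buys in exchange is independence from the subgradient bookkeeping (it needs only feasibility and optimality, not the existence of a particular $W$) and a sharper structural message --- any candidate minimizer whose $\hat V$ has a component along $\mathbf 1$ is strictly suboptimal, which is the geometric reason the unpenalized $m\mathbf 1^\top$ term centers $\hat M$'s rows.
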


To compute $\lambda W \paren{\frac{\mathbf{1}\mathbf{1}^\top}{T}}$, we right-multiply \eqref{eq:convex-condition-M} by $\paren{\frac{\mathbf{1}\mathbf{1}^\top}{T}}$. We note that by \eqref{eq:convex-condition-m}, we have $\paren{O-\hat M - \hat m\mathbf{1}^\top - \sum_{i=1}^k\hat \tau_i Z_i}\mathbf{1} = 0$. Hence, we find
\begin{align*}
\lambda W \paren{\frac{\mathbf{1}\mathbf{1}^\top}{T}} = -\lambda \hat U\hat V^\top \paren{\frac{\mathbf{1}\mathbf{1}^\top}{T}} \overset{(i)}{=} 0,
\end{align*}
where $(i)$ is due to \cref{claim: hat V 1 = 0}. As a result, we can conclude that $B =\lambda W$.

Substituting $\lambda W=B$ into \eqref{eq:ZUVT-ZW}, we obtain for all $j=1,2,\dots,k$:
\begin{align}
\sang{Z_j, \lambda\hat{U}\hat{V}^\top 
+ P_{\hat {\mathbf T}^{\perp}}(M^{\star}+\hat{E}) + \sum_{i=1}^k\paren{\tau^{\star}_i - \hat{\tau}_i}P_{\hat {\mathbf T}^{\perp}}(Z_i) } = 0 
\label{eq:tau}
\end{align}

Rearranging, we have 
\begin{align*}
\sqb{\sang{P_{\hat {\mathbf T}^{\perp}}(Z_i), P_{\hat {\mathbf T}^{\perp}}(Z_j)}}_{i,j\in[k]}\paren{\hat{\tau}-\tau^{\star}}=\sqb{\sang{Z_i, \lambda \hat{U}\hat{V}^\top  + P_{\hat {\mathbf T}^{\perp}}(M^{\star}+\hat{E})}}_{i\in[k]}
\end{align*}
This is equivalent to the error decomposition in the statement of the lemma, completing the proof.
\end{proof}

\begin{proof}[Proof of \cref{claim: hat V 1 = 0}]
Suppose for contradiction that $\hat V^\top\mathbf1 \neq 0$. Then, there is a solution that for which the objective is lower, contradicting the optimality of $\hat M, \hat\tau, \hat m$. That solution can be constructed as follows. Define $V'$ to be the matrix resulting from projecting each column of $\hat V$ onto the space orthogonal to the vector $\mathbf 1$. That is, the $i$-th column of $V'$ is $V'_i = \hat V_i - \mathbf 1 \paren{ \frac{\hat V_i^\top  \mathbf 1}{\mathbf1^\top \mathbf 1}}$. Let $m'$  be the vector such that $m'\mathbf 1^\top = \hat U\hat\Sigma \paren{\hat V - V'}^\top +\hat m\mathbf 1^\top$. The solution $\paren{\hat U\hat\Sigma V'^\top, \hat \tau, m'}$ achieves lower objective value because the value of the expression in the Frobenius norm remains unchanged, but $\norm{\hat U\hat\Sigma V'^\top}_\star < \norm{\hat U\hat\Sigma \hat V^\top}_\star = \norm{\hat M}_\star$. % since the column norms of $\hat V$ is decreased by the projections. 
The inequality is due to the fact that matrix $V'$ is still an orthogonal matrix, but its columns have norm that is at most 1, and at least one of its columns has norm strictly less than 1. 
\end{proof}

\section{\texorpdfstring{Proof of \cref{thm: convergence}}{Proof of Theorem \ref{thm: convergence}}}
\label{appendix: thm convergence}

We aim to bound $\abs{\tau^d_i - \tilde\tau^{\star}_i}$. Throughout this section, we assume that the assumptions made in the statement of \cref{thm: convergence} are satisfied.
To simplify notation, we define $\eta_i:= \|\tilde Z_i\|_\F$. 
We have 
\begin{align}\label{eq: thm step 1}
\begin{split}
    \abs{\tau^d_i - \tilde\tau^{\star}_i} &\leq \norm{\tau^d - \tilde\tau^{\star}} \\
    &=\frac{1}{\eta_i} \norm{\eta_i \tau^d - \tau^{\star}} \\
    &\overset{(i)}{=}\frac{1}{\eta_i} \norm{  D^{-1}(\Delta^2 + \Delta^3) }\\
    &\leq \frac{1}{\sigma_{\min}(D)\eta_i}\paren{\norm{\Delta^2}+\norm{\Delta^3}}, 
\end{split}
\end{align}
where $(i)$ is due to the error decomposition in \cref{lem:tau-decomposition} and the definition of $\tau^d$.

Hence, we aim to upper bound$\norm{\Delta^2}$ and $\norm{\Delta^3}$ and lower bound  $\sigma_{\min}(D)$. The main challenge lies in upper bounding $\norm{\Delta^3}$. In fact, the desired bounds for $\sigma_{\min}(D)$ and $\norm{\Delta^2}$ (presented in the following lemma) are obtained during the process of bounding $\norm{\Delta^3}$.
% To do use, we utilize the following lemma, which extends the assumed lower bound on $\sigma_{\min}(D^{\star})$ from \cref{assum:conditions-D} to $\sigma_{\min}(D)$:
\begin{restatable}{lemma}{LemmaSigmaMinD}
\label{lemma: sigma min D}
    Let $D$ and $\Delta^2$ be defined as in \cref{lem:tau-decomposition}. We have $\sigma_{\min}(D) \geq \frac{c_s}{2\log n}$. Furthermore,
    %with probability at least $1-2n^{-9}$, 
    we have the following for sufficiently large $n$:
    \begin{align*}
    \norm{\Delta^2} \lesssim 
     \frac{\sigma^2 r^{1.5} \kappa  n \log^{6.5}(n) }{\sigma_{\min} } +  \log^{0.5}(n) \cdot\max_{i\in[k]}\frac{\abs{\sang{P_{{\mathbf T}^{\star\perp}}(\tilde Z_i), P_{{\mathbf T}^{\star\perp}}(E+\delta)}}}{ \|\tilde Z_i\|_\F} .
\end{align*}
\end{restatable}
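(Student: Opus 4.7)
The plan is to compare the data-dependent operators $D$ and $P_{\hat{\mathbf T}^{\perp}}$ to their population analogues $D^{\star}$ and $P_{{\mathbf T}^{\star\perp}}$, whose good behaviour is guaranteed by \cref{assum:identification}. Once I control the operator-norm perturbation of the tangent space, $\|P_{\hat{\mathbf T}}-P_{{\mathbf T}^{\star}}\|$, both conclusions follow: Weyl's inequality gives $\sigma_{\min}(D)\geq \sigma_{\min}(D^{\star})-\|D-D^{\star}\|$, and a ``main term plus perturbation'' decomposition of $\Delta^2$ produces the advertised bound with the $\mathbf T^{\star}$-term as the leading piece.

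The core step is obtaining a spectral-norm bound on $\hat M - P_{{\mathbf 1}^{\perp}}(M^{\star})$. Starting from the first-order conditions \eqref{eq:convex-conditions} already derived in the proof of \cref{lem:tau-decomposition}, I would write $\hat M - P_{{\mathbf 1}^{\perp}}(M^{\star})$ as a linear combination of $\hat E = E+\delta$, the subgradient $\lambda(\hat U\hat V^\top + W)$, and the deviation $\hat\tau - \tau^{\star}$; feeding \cref{assum:identification} into this identity decouples the low-rank reconstruction error from the treatment-effect error, while the choice $\lambda = \Theta(\sigma\sqrt{nr}\log^{4.5}n)$ dominates the noise terms from \cref{assum: error assumption}. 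The resulting bound, parallel to the analysis of \cite{farias2021learning} but tracking $k = O(\log n)$ treatment matrices and the new $m\mathbf 1^\top$ term, feeds into Wedin's $\sin\Theta$ theorem to yield
\begin{equation*}
\|P_{\hat{\mathbf T}} - P_{{\mathbf T}^{\star}}\| \;\lesssim\; \|\hat M - P_{{\mathbf 1}^{\perp}}(M^{\star})\|/\sigma_{\min},
\end{equation*}
which is $o(1/\log n)$ thanks to the standing hypothesis $\sigma\sqrt n/\sigma_{\min}\lesssim 1/(\kappa^2 r^2\log^{12.5}n)$.

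For $D$, the entries satisfy $|D_{ij}-D^{\star}_{ij}| \lesssim \|P_{\hat{\mathbf T}}-P_{{\mathbf T}^{\star}}\|$ because $\|Z_i\|_{\F}=1$; with $k=O(\log n)$, Weyl's inequality then gives $\sigma_{\min}(D)\geq\sigma_{\min}(D^{\star})-o(1/\log n)\geq c_s/(2\log n)$ for $n$ large enough. For $\Delta^2_i=\sang{P_{\hat{\mathbf T}^{\perp}}(Z_i),\hat E}$, I would split
\begin{equation*}
\Delta^2_i \;=\; \sang{P_{{\mathbf T}^{\star\perp}}(Z_i),\,P_{{\mathbf T}^{\star\perp}}(\hat E)} \;+\; \sang{(P_{{\mathbf T}^{\star}}-P_{\hat{\mathbf T}})(Z_i),\,\hat E}.
\end{equation*}
After dividing by $\|\tilde Z_i\|_{\F}$, the first piece is exactly the quantity appearing inside the maximum in the lemma. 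The second piece I bound via $|\sang{A,B}|\leq\|A\|_\star\|B\|$: since $(P_{{\mathbf T}^{\star}}-P_{\hat{\mathbf T}})(Z_i)$ has rank $O(r)$, we have $\|A\|_\star\leq\sqrt r\,\|A\|_{\F}\leq\sqrt r\,\|P_{\hat{\mathbf T}}-P_{{\mathbf T}^{\star}}\|$, and $\|\hat E\|\lesssim\sigma\sqrt n$ by \cref{assum:delta singular value}. Passing from entry-wise to $\|\Delta^2\|$ costs $\sqrt k=O(\sqrt{\log n})$ via $\|\Delta^2\|\leq\sqrt k\,\|\Delta^2\|_\infty$, which delivers the $\log^{0.5}n$ in front of the main term and the accumulated $\log^{6.5}n$ in front of the perturbation term.

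The main obstacle is the coupled convex program analysis in Step~1. Bounding $\hat M - P_{{\mathbf 1}^{\perp}}(M^{\star})$ in spectral norm requires carefully inverting the system imposed by \eqref{eq:convex-conditions} using the $(1-c/\log n)$-type slacks in \cref{assum:conditions-Z}--\cref{assumption: spectral}, and each slack propagates multiplicatively through the argument; this, together with the union-bounding over the $k=O(\log n)$ treatments and the extra factor of $\sqrt r$ incurred when converting between spectral and Frobenius error on a rank-$r$ subspace, is precisely what yields the final $\sigma^2 r^{1.5}\kappa n\log^{6.5}n/\sigma_{\min}$ scaling.
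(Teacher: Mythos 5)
Your downstream steps mirror the paper's: entrywise perturbation of $D$ plus Weyl for the $\sigma_{\min}(D)$ claim, and the split $\Delta^2_i = \sang{P_{{\mathbf T}^{\star\perp}}(Z_i),\hat E} + \sang{(P_{{\mathbf T}^{\star}}-P_{\hat{\mathbf T}})(Z_i),\hat E}$ with a nuclear-norm/spectral-norm H\"older bound and a $\sqrt{k}=O(\sqrt{\log n})$ factor is exactly how the paper (via \cref{lem:general-conditions-small-ball} and the proof in \cref{subsection: lemma sigma min D}) finishes. The gap is upstream, in the step you call the ``core step.'' You propose to extract a spectral-norm bound on $\hat M - P_{{\mathbf 1}^\perp}(M^{\star})$ directly from the convex first-order conditions \eqref{eq:convex-conditions}. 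But condition \eqref{eq:convex-condition-M} reads $P_{{\mathbf 1}^\perp}(M^{\star}) - \hat M + (\hat m - m^{\star})\mathbf 1^\top$-type terms plus $\sum_i(\tau^{\star}_i-\hat\tau_i)Z_i + \hat E = \lambda(\hat U\hat V^\top + W)$: it couples the matrix error to $\hat\tau-\tau^{\star}$ and $\hat m - m^{\star}$, which are precisely the unknowns the whole theorem is trying to bound. Saying that ``feeding \cref{assum:identification} into this identity decouples the low-rank reconstruction error from the treatment-effect error'' is circular as stated; breaking that circularity is the hard technical content, and the paper does it not through the convex optimality conditions but through the non-convex proxy $f$: \cref{lemma: stay in B} shows the gradient flow started at $(X^{\star},Y^{\star})$ never leaves $\mathcal B$, and \cref{lemma: converges to M_hat} (first-order verification plus uniqueness of the minimizer of $g$, \cref{claim:reduction_assumptions}) shows the limit equals $\hat M$. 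Only then does one know $\hat M$ is within $\rho$ of $M^{\star}$ in the factored sense, which is what feeds \eqref{eq:PTP-PTPstar-Z}.

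Two further consequences of skipping that machinery: (i) your Wedin comparison of $P_{\hat{\mathbf T}}$ and $P_{{\mathbf T}^{\star}}$ implicitly assumes $\hat M$ has rank (at most) $r$, but nothing in the convex program forces this a priori; in the paper it follows from $\hat M = XY^\top$ with $X,Y$ having $r$ columns, a byproduct of \cref{lemma: converges to M_hat}. (ii) The simplifications of the projection formula in \cref{lemma: projection} used throughout (e.g.\ treating $P_{\hat{\mathbf T}^\perp}$ as $(I-\hat U\hat U^\top)P_{{\mathbf 1}^\perp}(\cdot)(I-\hat V\hat V^\top)$) rely on $\hat V^\top\mathbf 1 = 0$ (\cref{claim: hat V 1 = 0}) and its gradient-flow analogue \cref{lemma: YT1=0}; your sketch never addresses the extra $\{\alpha\mathbf 1^\top\}$ component of $\hat{\mathbf T}$ when comparing it to ${\mathbf T}^{\star}$. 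So while the target inequality and the final bookkeeping are right, the proposal as written does not contain a non-circular argument for the tangent-space perturbation bound, which is the substance of the lemma.
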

The proof of the above lemma is provided in \cref{subsection: lemma sigma min D}.

We now discuss the strategy for bounding $\norm{\Delta^3}$. In order to control $P_{\hat {\mathbf T}^{\perp}}(M^{\star})$, we aim to show that the true counterfactual matrix $M^{\star}$ has tangent space close to that of $\hat M$. Because we introduced $\hat m$ in the convex formulation, which centers the rows of $\hat M$ without penalization from the nuclear norm term, we will focus on the projection of $M^{\star}$ that has rows with mean zero, $P_{{\mathbf 1}^\perp}(M^{\star})$. 
Recall that $P_{{\mathbf 1}^\perp}(M^{\star}) = U^{\star}\Sigma^{\star}V^{\star\top}$ is its SVD, and define $X^{\star}:= U^{\star}\Sigma^{\star1/2}$ and $Y^{\star}:=V^{\star}\Sigma^{\star1/2}$. We similarly define these quantities for $\hat M$: $\hat X:= \hat U\hat\Sigma^{1/2}$ and $\hat Y:=\hat V\hat\Sigma^{1/2}$. We aim to show that we have $(\hat X, \hat Y)\approx (X^{\star}, Y^{\star})$. This is stated formally in \cref{lemma: main}, which requires introducing a few additional notations first.

Let $g(M,\tau,m)$ denote the convex function that we are optimizing in \eqref{eq:convex-program}. Recall that $(\hat{M}, \hat{\tau}, \hat{m})$ is a global optimum of function $g$. 
Following the main proof idea in \cite{farias2021learning}, we define a non-convex proxy function $f$, which is similar to $g$, except $M$ is split into two variables and expressed as $XY^\top$. 
\begin{align*}
f(X,Y) &= \min_{m\in\mathbb  R^n, \tau\in\mathbb  R^k} \frac{1}{2}\norm{ O - XY^\top -  m\mathbf 1^\top - \sum_{i=1}^k \tau_iZ_i}_\F^2 + \frac{1}{2}\lambda \sang{X, X} + \frac{1}{2}\lambda\sang{Y, Y}.
\end{align*}
Our analysis will relate $(\hat X, \hat Y)$ to a specific local optimum of $f$, which we will show is close to $(X^{\star}, Y^{\star})$. The local optimum of $f$ considered is the limit of the gradient flow of $f$, initiated at $(X^{\star},Y^{\star})$, formally defined by the following differential equation:
\begin{align}\label{eq: gradient flow}
\begin{cases}(\dot X^t,\dot Y^t) = - \nabla f(X^t,Y^t)\\
(X^0,Y^0) = (X^{\star},Y^{\star}).
\end{cases}
\end{align}

We define $H_{X,Y}$ as the rotation matrix that optimally aligns \((X,Y)\) with \((X^{\star},Y^{\star})\). That is, letting \(\mathcal{O}^{r \times r}\) denote the set of $r \times r$ orthogonal matrices, $H_{X,Y}$ is formally defined as follows:
\begin{equation*}
    H_{X,Y} := \argmin_{R\in\mathcal O^{r\times r}} \norm{XR-X^{\star}}^2_\F + \norm{YR-Y^{\star}}^2_\F.
\end{equation*}
Define \(F^{\star}\) to be the vertical concatenation of matrices $X^\star$ and $Y^\star$. That is,  \(F^{\star} = [(X^{\star})^\top, (Y^{\star})^\top]^\top\). We are now ready to present \cref{lemma: main}. 

\begin{lemma}\label{lemma: main}
For sufficiently large $n$,
%the following holds with a probability of at least $1-n^{-9}$. 
the gradient flow of $f$, starting from $(X^{\star},Y^{\star})$, converges to $(X,Y)$ such that 
$X = \hat X R$ and $Y = \hat Y R$ for some rotation matrix $R\in\mathcal O^{r\times r}$. Furthermore, $(X,Y)\in\mathcal B$, where
\begin{align*}
\mathcal{B} &:= \set{(X,Y)\mid \norm{XH_{X,Y}-X^{\star}}_\F^2 + \norm{YH_{X,Y}-Y^{\star}}_\F^2 \leq \rho^2}, \quad \rho:= \frac{\sigma \sqrt{nr}\log^{6}(n)}{\sigma_{\min}}\norm{F^{\star}}_\F.
\end{align*}
\end{lemma}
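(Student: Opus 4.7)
The plan is to follow the two-step non-convex proxy argument developed in \cite{farias2021learning}: analyze the gradient flow of $f$ directly, showing that it converges to a critical point of $f$ which, up to rotation, coincides with the factorization of the convex minimizer $\hat M$. First I would compute $\nabla f(X,Y)$ explicitly. Since the inner minimization over $(m,\tau)$ is a quadratic with closed-form solution, substituting back gives a gradient whose dominant ``signal'' contribution pushes $XY^\top$ toward $P_{\mathbf 1^\perp}(M^\star)$, plus perturbation terms driven by $E$, $\delta$, and the coupling induced by the $k$ treatment matrices $Z_i$ once projected out along with the $m\mathbf 1^\top$ direction.

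Next, I would establish that $f$ has good local geometry throughout $\mathcal B$. Using \cref{assum:identification} (to preclude the treatment directions and the all-ones row from being absorbed into the tangent space of $M^\star$) together with \cref{assum: error assumption} and the noise-domination bound $\sigma\sqrt n/\sigma_{\min} \lesssim 1/(\kappa^2 r^2 \log^{12.5} n)$, I expect to prove a restricted strong-convexity / regularity condition for $f$ on $\mathcal B$. This should imply three things: the initialization $(X^\star,Y^\star)$ is already close to a critical point (small gradient at initialization); the gradient flow cannot leave $\mathcal B$, via a Lyapunov argument with $\|XH_{X,Y}-X^\star\|_\F^2 + \|YH_{X,Y}-Y^\star\|_\F^2$ as the Lyapunov function, its time derivative remaining negative whenever the trajectory approaches the boundary of $\mathcal B$; and consequently that the flow converges to a unique critical point $(X^\infty,Y^\infty)\in\mathcal B$.

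The final step identifies $(X^\infty,Y^\infty)$ with $(\hat X,\hat Y)$ up to rotation. At any critical point of $f$, setting $M := X^\infty (Y^\infty)^\top$ and using the minimizing inner $(m^\infty,\tau^\infty)$, the stationarity condition $\nabla f = 0$ combined with the envelope theorem implies that $(M,\tau^\infty,m^\infty)$ satisfies the KKT conditions \eqref{eq:convex-conditions} of the convex program $g$. By convexity and the uniqueness of the minimizer of $g$ (at least in $M$), we obtain $\hat M = X^\infty(Y^\infty)^\top$, and because a factorization of a fixed rank-$r$ matrix into $X,Y$ with matching column spaces is unique up to an orthogonal change of basis, $X^\infty = \hat X R$ and $Y^\infty = \hat Y R$ for some $R\in\mathcal O^{r\times r}$.

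The hard part will be the second step: controlling the gradient flow trajectory quantitatively throughout $\mathcal B$. The multiple-treatment extension introduces cross-coupling through a matrix analogous to $D$ whose inversion costs factors of $\log n$ under \cref{assum:conditions-D}, and the row-centering term $m\mathbf 1^\top$ enlarges the ambient tangent space in a way that requires the explicit projection formula of \cref{lemma: projection} rather than the simpler one in \cite{farias2021learning}. These two modifications force the radius $\rho$ to scale with $\log^{6}(n)\cdot \sigma\sqrt{nr}/\sigma_{\min}\cdot \|F^\star\|_\F$, and one must carefully track these logarithmic factors, along with bounds on $\|E\|,\|\delta\|,|\langle Z_i,E\rangle|,|\langle Z_i,\delta\rangle|$ from \cref{assum: error assumption}, to close the contraction argument and keep the trajectory inside $\mathcal B$ for all time.
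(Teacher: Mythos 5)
Your overall skeleton matches the paper's: compute $\nabla f$ after eliminating $(m,\tau)$ via the projection onto $\mathbf Z^\perp$, trap the flow in $\mathcal B$ by showing the outward normal of the ball (with the rotation frozen at the boundary point) has positive inner product with $\nabla f$ — which is exactly your Lyapunov derivative argument — and then identify the limit with $(\hat X,\hat Y)$ through the convex program $g$. The trapping step in the paper does not go through restricted strong convexity; it only needs the lower bound $\norm{\Delta_X Y^\top + X\Delta_Y^\top}_\F^2 \gtrsim \sigma_{\min}\norm{\Delta_\F}_\F^2$ together with \cref{assum:conditions-Z} extended over $\mathcal B$, but that is a difference of packaging rather than substance.

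The genuine gaps are in your final step. First, stationarity of $f$ at the limit does \emph{not} by itself yield the KKT conditions of $g$: the natural certificate $W := \frac{1}{\lambda}\bigl(O - XY^\top - m\mathbf 1^\top - \sum_i\tau_i Z_i\bigr) - UV^\top$ satisfies $U^\top W=0$ and $WV=0$ from $\nabla f=0$, but the dual feasibility $\norm{W}\leq 1$ is a quantitative statement that consumes most of the paper's proof of \cref{lemma: converges to M_hat}: it requires the extended identification conditions on $\mathcal B$ (\cref{lem:general-conditions-small-ball}), the noise bounds of \cref{assum: error assumption}, a decomposition of $\tau-\tau^{\star}$ analogous to \cref{lem:tau-decomposition}, the invariant $Y^\top\mathbf 1=0$ along the flow to simplify the projection, and the specific scaling $\lambda = \Theta(\sigma\sqrt{nr}\log^{4.5} n)$. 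Your proposal treats this as automatic ("envelope theorem implies the KKT conditions"), which is where a critical point of the nonconvex $f$ could fail to be a minimizer of $g$. Second, uniqueness of the $M$-component of the minimizer of $g$ is not a consequence of convexity alone: the Frobenius term is strictly convex only in $N = M+\sum_i\tau_i Z_i + m\mathbf 1^\top$, and disentangling $M$ from the treatment and centering directions requires the generalized identification inequalities (the paper's \cref{claim:reduction_assumptions}); your hedge "(at least in $M$)" leaves precisely this unproved. Third, your closing claim that a rank-$r$ factorization is "unique up to an orthogonal change of basis" is false as stated — it is unique only up to an invertible change of basis; the orthogonal $R$ comes from the balancedness $X^\top X = Y^\top Y$, which the paper shows is preserved along the flow (\cref{lemma: XTX=YTY}) and which your argument never invokes.
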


Let us derive an upper bound on $\rho$.
Because $\norm{X^{\star}}_\F\leq \norm{X^{\star}}\sqrt r = \sqrt{\sigma_{\max}r}$ and $\norm{Y^{\star}}_\F\leq \sqrt{\sigma_{\max}r}$, we can conclude that $\norm{F^{\star}}_\F\lesssim \sqrt{\sigma_{\max}r}$.
Thus, we can upper bound $\rho$ as follows:
\begin{align}\label{eq: rho bound, unsimplified}
    \rho \lesssim \frac{\sigma \sqrt {nr}\log^{6}(n)}{\sigma_{\min}}\sqrt{\sigma_{\max}r}=\frac{\sigma \sqrt{nr}\log^{6}(n)}{\sigma_{\min}}\sqrt{\sigma_{\min} \kappa r}=\frac{\sigma r \kappa^{0.5} \sqrt{n }\log^{6}(n)}{\sqrt{\sigma_{\min} }}.
\end{align}
Using \eqref{eq: rho bound, unsimplified} and the assumption 
$\frac{\sigma\sqrt{n}}{\sigma_{\min}} \lesssim \frac{1}{\kappa^2 r^{2} \log^{12.5}(n)}$ provided in the theorem, we can further upper bound $\rho$ as follows:
\begin{align}\label{eq: rho bound}
   \rho \lesssim \frac{\sqrt{\sigma_{\min}}}{\log^{6.5}(n) \kappa}.
\end{align}

With \cref{lemma: main} and the bound on $\rho$, we can complete the proof of the theorem as follows. 
\begin{align*}
    \norm{\Delta^3} &\leq \sqrt{k} \norm{\Delta^3}_\infty \\
    &\leq \sqrt k \norm{P_{\hat {\mathbf T}^{\perp}}(M^{\star})}_\F\\
    &=\sqrt k \norm{(I-\hat U\hat U^\top)(X^{\star}Y^{\star\top} + m^{\star}\mathbf 1^\top)\paren{I-\frac{ \hat r\hat r^\top}{\hat r^\top \hat r}}(I-\hat V\hat V^\top)}_\F,
\end{align*}
where $\hat r = (I-\hat V\hat V^\top)\mathbf 1$, and the last line comes from the closed form of the projection derived in \cref{lemma: projection}. We can use the fact that $\hat V^\top \mathbf 1 = 0$ from \cref{claim: hat V 1 = 0} to simplify the expression. Note that with $\hat V^\top \mathbf 1 = 0$, $\hat r$ simply evaluates to $\mathbf 1$.
\begin{align*}
   \sqrt k \norm{P_{\hat {\mathbf T}^{\perp}}(M^{\star})}_\F &=\sqrt k \norm{(I-\hat U\hat U^\top)X^{\star}Y^{\star\top}\paren{I-\frac{\mathbf 1\mathbf 1^\top}{T}}(I-\hat V\hat V^\top)}_\F \\
    &=\sqrt k \norm{(I-\hat U\hat U^\top)X^{\star}Y^{\star\top}(I-\hat V\hat V^\top)}_\F,
\end{align*}
where the last step is due to the fact that $X^{\star}Y^{\star\top} \mathbf 1 = 0$ because $X^{\star}Y^{\star\top} = P_{{\mathbf 1}^\perp}(M^{\star})$.

Finally, let $(X,Y)$ be the limit of the gradient flow of $f$ starting from $(X^{\star},Y^{\star})$. By \cref{lemma: main}, 
%with probability at least $1-n^{-9}$, 
we have $X = \hat X R$ and $Y = \hat Y R$ for some rotation matrix $R\in\mathcal O^{r\times r}$. This, combined with the definition of $\hat{\mathbf T}$, implies $P_{\hat {\mathbf T}^{\perp}}(XA^\top) = P_{\hat {\mathbf T}^{\perp}}(BY^\top) = 0$ for any $A\in\mathbb  R^{T\times r}$ and $B\in\mathbb  R^{n\times r}$. Hence, 
%with probability at least $1-n^{-9}$,
\begin{align}\label{eq: bound Delta^3}
\begin{split}
    \norm{\Delta^3}&\leq \sqrt k \norm{(I-\hat U\hat U^\top)(XH_{X,Y}-X^{\star})(YH_{X,Y}-Y^{\star})^\top(I-\hat V\hat V^\top)}_\F\\
    &\leq \sqrt k\norm{XH_{X,Y}-X^{\star}}_\F\norm{YH_{X,Y}-Y^{\star}}_\F\\
    &\lesssim \sqrt k\rho^2\\
    &\lesssim \frac{\sigma^2 r^2 \kappa  n \log^{12.5}(n)}{\sigma_{\min}},
\end{split}
\end{align}
where the last step is due to the fact that $k = O(\log n).$

Now that we have bounded $\sigma_{\min}(D)$, $\norm{\Delta^2}$, and $\norm{\Delta^3}$, we can plug these bounds into \eqref{eq: thm step 1} to complete the proof of the theorem. %Because the upper bound on $\norm{\Delta^3}$ holds with probability at least $1-n^{-9}$, and the upper bound on $\norm{\Delta^2}$ holds with probability at least $1-2n^{-9}$, by the Union bound, they hold simultaneously with probability at least $1-n^{-8}$. 

\section{\texorpdfstring{Proof of \cref{lemma: main}}{Proof of Lemma \ref{lemma: main}}}
\label{appendix: main lemma}
 
Throughout this section, we assume that the assumptions made in the statement of \cref{thm: convergence} are satisfied.
The proof of \cref{lemma: main} is completed by combining the results of the following two lemmas.

\begin{lemma}\label{lemma: stay in B}
Any point $(X,Y)$ on the gradient flow of $f$ starting from $(X^{\star},Y^{\star})$ satisfies $(X,Y)\in\mathcal B$.
\end{lemma}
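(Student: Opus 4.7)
The plan is to combine monotonicity of $f$ along its gradient flow with a restricted strong convexity bound at the boundary of $\mathcal B$. Define the Lyapunov function $V(X,Y) := \snorm{XH_{X,Y}-X^{\star}}_\F^2 + \snorm{YH_{X,Y}-Y^{\star}}_\F^2$, so that $(X,Y) \in \mathcal B$ iff $V(X,Y) \leq \rho^2$. The initial point has $V(X^0,Y^0) = 0$, and since $(\dot X^t,\dot Y^t) = -\nabla f(X^t,Y^t)$, the map $t \mapsto f(X^t,Y^t)$ is non-increasing, so $f(X^t,Y^t) \leq f(X^{\star},Y^{\star})$ for every $t \geq 0$. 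Because the trajectory and $V$ are continuous, if the flow ever exits $\mathcal B$ there must exist a first time $t^\star$ with $V(X^{t^\star},Y^{t^\star}) = \rho^2$. It therefore suffices to show that $f(X,Y) > f(X^{\star},Y^{\star})$ whenever $V(X,Y) = \rho^2$, which contradicts monotonicity.

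To obtain such a strict increase, I would first minimize out $(m,\tau)$ in the definition of $f$. Letting $P_{\mathbf Z^\perp}$ denote the orthogonal projection onto $\mathbf Z^\perp$, this yields $f(X,Y) = \tfrac{1}{2}\snorm{P_{\mathbf Z^\perp}(O - XY^\top)}_\F^2 + \tfrac{\lambda}{2}(\snorm{X}_\F^2 + \snorm{Y}_\F^2)$. Substituting $O = X^{\star}Y^{\star\top} + m^{\star}\mathbf 1^\top + \sum_i \tau^{\star}_i Z_i + \hat E$ and noting that $m^{\star}\mathbf 1^\top$ and $\sum_i \tau^{\star}_i Z_i$ lie in $\mathbf Z$, the difference $f(X,Y) - f(X^{\star},Y^{\star})$ decomposes into a quadratic signal term $\tfrac{1}{2}\snorm{P_{\mathbf Z^\perp}(XY^\top-X^{\star}Y^{\star\top})}_\F^2$, a noise cross term $-\sang{P_{\mathbf Z^\perp}(XY^\top-X^{\star}Y^{\star\top}),\, P_{\mathbf Z^\perp}(\hat E)}$, and a regularization term $\tfrac{\lambda}{2}(\snorm{X}_\F^2 + \snorm{Y}_\F^2 - \snorm{X^{\star}}_\F^2 - \snorm{Y^{\star}}_\F^2)$. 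The alignment through $H_{X,Y}$ is chosen precisely so that the regularization difference, paired with the higher-order piece of the signal term, stays non-negative along the rotation-equivalent directions.

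The main obstacle is a restricted strong convexity bound of the form $\snorm{P_{\mathbf Z^\perp}(XY^\top-X^{\star}Y^{\star\top})}_\F^2 \gtrsim (\sigma_{\min}/\log n)\, V(X,Y)$ valid for $(X,Y)\in\mathcal B$. Writing $\tilde X = XH_{X,Y}$ and $\tilde Y = YH_{X,Y}$, I would expand
\begin{align*}
XY^\top - X^{\star}Y^{\star\top} = (\tilde X-X^{\star})Y^{\star\top} + X^{\star}(\tilde Y-Y^{\star})^\top + (\tilde X-X^{\star})(\tilde Y-Y^{\star})^\top;
\end{align*}
the first two summands are first order and lie in $\mathbf T^{\star}$, while the last is second order and bounded above by $V(X,Y)\leq \rho^2$, which the scaling of $\rho$ from \eqref{eq: rho bound} keeps much smaller than $\sigma_{\min}$ times the first-order energy. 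The factor $\sigma_{\min}$ emerges because $\snorm{(\tilde X-X^{\star})Y^{\star\top}}_\F^2 \gtrsim \sigma_{\min}\snorm{\tilde X-X^{\star}}_\F^2$, and symmetrically for the $X^{\star}$ term, and \cref{assum:conditions-Z} guarantees that a $c_{r_1}/\log n$ fraction of this first-order energy survives the projection $P_{\mathbf Z^\perp}$. The noise cross term is then handled by splitting $P_{\mathbf Z^\perp}(\hat E) = P_{\mathbf T^{\star\perp}}(\hat E) + (P_{\mathbf Z^\perp}-P_{\mathbf T^{\star\perp}})(\hat E)$ and invoking \cref{assum:delta singular value}, \cref{assum:E Z}, and \cref{assum:delta Z} together with the choice $\lambda = \Theta(\sigma\sqrt{nr}\log^{4.5}n)$ from \cref{thm: convergence} to dominate each piece by a small multiple of the signal term via Cauchy--Schwarz and AM--GM. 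Combining the curvature lower bound with the noise upper bound makes $f(X,Y) - f(X^{\star},Y^{\star})$ strictly positive precisely when $V(X,Y)\geq \rho^2$, which produces the contradiction on $\partial\mathcal B$ and closes the argument.
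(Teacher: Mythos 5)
Your overall strategy is a valid but genuinely different route from the paper's. The paper uses a barrier-type argument: it shows that at any boundary point of $\mathcal{B}$ reached by the flow, the inner product $\Gamma = \sang{(FH_{X,Y} - F^\star)H_{X,Y}^\top,\ \nabla f}$ of the outward normal with $\nabla f$ is strictly positive, so the flow points inward and cannot exit. You propose instead a Lyapunov/sublevel-set argument: $f$ is non-increasing along the flow, so the flow stays inside $\set{f \leq f(X^\star,Y^\star)}$, and if $f > f(X^\star,Y^\star)$ at the hypothetical first exit point on the trajectory, the flow never touches $\partial\mathcal{B}$. These are genuinely different invariance principles, though they end up using the same technical ingredients: the restricted strong convexity from \cref{assum:conditions-Z}, the spectral-gap bound of \cref{lemma: lower bound positive term} (which needs the trajectory invariant $X^\top X = Y^\top Y$ from \cref{lemma: XTX=YTY}), and the spectral-norm control $\|\hat E\|\lesssim\sigma\sqrt n$. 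A minor advantage of your version: you center the Taylor expansion at $(X^\star,Y^\star)$, so the first-order term $\Delta_X Y^{\star\top} + X^\star\Delta_Y^\top$ lies directly in $\mathbf T^\star$, where \cref{assum:conditions-Z} applies without first extending it to $\mathbf T_0$ via \cref{lem:general-conditions-small-ball}; the paper centers at $(X,Y)$ and pays with that extension.

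Two steps in your outline need repair. First, the claim that the regularization difference "stays non-negative along the rotation-equivalent directions" is false: rotations preserve Frobenius norms, so $H_{X,Y}$ has no effect on $\tfrac{\lambda}{2}(\|X\|_\F^2 + \|Y\|_\F^2 - \|X^\star\|_\F^2 - \|Y^\star\|_\F^2)$, and this quantity is negative whenever singular values of $X$ dip below those of $X^\star$. The correct move, mirroring the paper's treatment of its $A_2 = \lambda\sang{\Delta_X,X} + \lambda\sang{\Delta_Y,Y}$, is a magnitude bound $\abs{\tfrac{\lambda}{2}(\cdots)} \lesssim \lambda\|\Delta_\F\|_\F\|F^\star\|_\F \asymp \sigma\sqrt{nr}\log^{4.5}(n)\,\rho\|F^\star\|_\F$, which sits a $\log^{1/2}n$ factor below the signal term $\gtrsim (\sigma_{\min}/\log n)\rho^2 \asymp \sigma\sqrt{nr}\log^{5}(n)\,\rho\|F^\star\|_\F$. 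Second, the proposed split $P_{\mathbf Z^\perp}(\hat E) = P_{\mathbf T^{\star\perp}}(\hat E) + (P_{\mathbf Z^\perp}-P_{\mathbf T^{\star\perp}})(\hat E)$ is not a productive decomposition, since $\mathbf Z$ and $\mathbf T^\star$ are not nested and the difference of these projectors has no convenient structure; bound the noise cross term directly via Von Neumann's trace inequality, using the $O(r)$ rank of $XY^\top - X^\star Y^{\star\top}$ together with $\|\hat E\|\lesssim \sigma\sqrt n$, as the paper does for its $A_1$.
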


\begin{lemma}\label{lemma: converges to M_hat}
The limit $(X,Y)$ of the gradient flow of $f$ starting from $(X^{\star},Y^{\star})$ satisfies
$X = \hat X R$ and $Y = \hat Y R$ for some rotation matrix $R\in\mathcal O^{r\times r}$. 
\end{lemma}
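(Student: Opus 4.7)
The plan is to show that any limit point of the gradient flow is a critical point of $f$, that a critical point of $f$ inside $\mathcal{B}$ must satisfy $XY^\top = \hat M$ with a balanced factorization, and that this pins down $X, Y$ up to an orthogonal rotation.

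First I would establish convergence of the flow to a critical point. Along the trajectory, $\frac{d}{dt} f(X^t, Y^t) = -\|\nabla f(X^t, Y^t)\|_\F^2 \leq 0$, so $f$ is nonincreasing and bounded below by $0$. By \cref{lemma: stay in B}, the trajectory stays in the compact set $\mathcal{B}$, so accumulation points exist. Using real-analyticity of $f$ (a polynomial in the entries of $X, Y$ after eliminating $\tau, m$) together with a Kurdyka--Łojasiewicz-type argument, the flow has a unique limit $(X, Y)$ with $\nabla_X f(X, Y) = \nabla_Y f(X, Y) = 0$.

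Next I would translate the critical-point conditions into the KKT system \eqref{eq:convex-conditions} of the convex program $g$. Let $\tau^\star, m^\star$ be the (unique) minimizers of the inner quadratic at $(X, Y)$, and define $G := O - XY^\top - m^\star \mathbf 1^\top - \sum_i \tau_i^\star Z_i$. The envelope conditions give $\langle Z_j, G\rangle = 0$ and $m^\star = \frac{1}{T}(G + m^\star \mathbf 1^\top)\mathbf 1$, while $\nabla_X f = 0$ and $\nabla_Y f = 0$ yield $GY = \lambda X$ and $G^\top X = \lambda Y$. Writing the SVD $XY^\top = U \Sigma V^\top$, these imply $GV = \lambda U$ and $G^\top U = \lambda V$, so $W := G/\lambda - UV^\top$ satisfies $U^\top W = 0$ and $WV = 0$. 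The remaining subgradient condition $\|W\| \leq 1$ is where \cref{lemma: stay in B} is decisive: since $(X, Y) \in \mathcal B$, the matrix $XY^\top$ lies within $O(\rho \sqrt{\sigma_{\max}})$ of $P_{{\mathbf 1}^\perp}(M^\star)$, and using the noise bounds from \cref{assum: error assumption} together with the size of $\lambda = \Theta(\sigma \sqrt{nr} \log^{4.5}(n))$ chosen in \cref{thm: convergence}, one verifies $\|W\| \leq 1$. Hence $(XY^\top, \tau^\star, m^\star)$ satisfies \eqref{eq:convex-conditions} and is a minimizer of $g$. By uniqueness of the convex optimum's $\hat M$-component (inherited from the identification assumptions), we conclude $XY^\top = \hat M$.

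Finally, to pin down $X$ and $Y$ individually, I would use a balance invariant of the flow. A direct computation shows
\begin{equation*}
\tfrac{d}{dt}\bigl(X^\top X - Y^\top Y\bigr) = X^\top \dot X + \dot X^\top X - Y^\top \dot Y - \dot Y^\top Y = 0,
\end{equation*}
since the data-fit and regularizer contributions cancel in the difference. As $X^{\star\top}X^\star = \Sigma^\star = Y^{\star\top} Y^\star$ at initialization, we have $X^\top X = Y^\top Y$ throughout. Combined with $XY^\top = \hat U \hat\Sigma \hat V^\top$, the standard uniqueness of balanced factorizations of a rank-$r$ matrix yields $X = \hat U \hat\Sigma^{1/2} R = \hat X R$ and $Y = \hat V \hat\Sigma^{1/2} R = \hat Y R$ for some $R \in \mathcal O^{r \times r}$, completing the proof.

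The principal obstacle is the spectral bound $\|W\| \leq 1$, which does not follow from $\nabla f = 0$ alone but requires bootstrapping from \cref{lemma: stay in B}: one must quantitatively control $G$ in operator norm using the closeness of $(X, Y)$ to $(X^\star, Y^\star)$ guaranteed by $\mathcal B$, plus \cref{assum:delta singular value,assum:E Z,assum:delta Z}, to ensure $\lambda$ is genuinely above the threshold at which $W$ is a valid subgradient. The rest is a clean KKT-matching argument plus the balance invariant.
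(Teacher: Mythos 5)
Your high-level strategy coincides with the paper's: show the limit is a critical point of $f$, translate the critical-point equations into the KKT system of $g$, invoke uniqueness of $g$'s minimizer to conclude $XY^\top = \hat M$, and then use the balance relation $X^\top X = Y^\top Y$ together with the SVD structure to recover $X = \hat X R$, $Y = \hat Y R$. However there are two substantive gaps and one computational slip.

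The first gap is the verification of $\|W\|\leq 1$, which you correctly flag as the crux but leave as a black box. Your sketch ("closeness of $(X,Y)$ to $(X^\star,Y^\star)$, noise bounds, and the size of $\lambda$") accounts only for the terms the paper labels $A_1 = \|(I-UU^\top)X^\star Y^{\star\top}(I-VV^\top)\|$ and $A_2 = \|\hat E\|$, which indeed are $O(\sigma\sqrt n)$ and hence dominated by $\lambda$. But the dominant and most delicate contribution is $A_3 = \|\tau - \tau^\star\|\sum_i\|P_{\mathbf T^\perp}(Z_i)\|$, which arises because $W$ contains $\sum_i(\tau_i^\star - \tau_i)P_{\mathbf T^\perp}(Z_i)$. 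Controlling $\|\tau - \tau^\star\|$ requires the full error decomposition of \cref{lem:tau-decomposition} applied at the limit point, together with \cref{lem:general-conditions-small-ball} (the extension of \cref{assumption: spectral} to the tangent space at $XY^\top$). This is precisely why the product $\|\Tilde D^{-1}\|\,\|\Tilde\Delta^1\|\sum_i\|P_{\mathbf T^\perp}(Z_i)\|\leq 1 - c_{r_2}/(2\log n)$ is needed: $A_3$ contains a $\lambda\cdot(1-c_{r_2}/(2\log n))$ piece that must sum with $A_1 + A_2$ to remain below $\lambda$. Your sketch does not see this term and would not close.

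The second gap is the uniqueness of the $g$-minimizer, which you assert is "inherited from the identification assumptions." This is not immediate: $g$ is strictly convex only in the combination $N = M + \sum_i\tau_i Z_i + m\mathbf 1^\top$, not in $(M,\tau,m)$ separately, so uniqueness of $\hat M$ requires showing the nuclear-norm subproblem $\min_{\tau,m}\|\hat M - \sum_i\tau_i Z_i - m\mathbf 1^\top\|_\star$ has $\tau=0,m=0$ as its unique solution. The paper establishes this via \cref{claim:reduction_assumptions}, proving $\|P_{\mathbf T_0^\perp}(Z)\|_\star > |\sang{Z,UV^\top}|$ for all nonzero $Z\in\mathbf Z$ using \eqref{eq:generalized-PTP-Z-ratio}, \eqref{eq: generalized spectral}, and \eqref{eq: generalized D}. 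This argument must actually be made.

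Finally, a small slip: $\frac{d}{dt}(X^\top X - Y^\top Y)$ is \emph{not} identically zero. The data-fit contributions do cancel, but the regularizer contributions give $-2\lambda(X^\top X - Y^\top Y)$, so the correct statement is that $\phi(t):=X^\top X - Y^\top Y$ satisfies $\phi'(t) = -2\lambda\phi(t)$, which together with $\phi(0)=0$ yields $\phi\equiv 0$. Your conclusion is right but the intermediate "$=0$" as a free-standing computation is not.
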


% \begin{enumerate}
    % \item  
    % Showing that $f$ has at least one local optimum $(X',Y')$ in $\mathcal B$ with probability at least.
    % \item Showing that $(X',Y')$ is also in $\mathcal R$.
    % \item 
    % Showing that if there is a critical point of $f$ inside of region $\mathcal B$, that one of these critical points is precisely $\hat X, \hat Y$ with probability at least $1-\frac{1}{n^2}$. 
% \end{enumerate}
We note our methodology differs from that of \cite{farias2021learning}. Instead of analyzing a gradient descent algorithm, we analyze the gradient flow of the function $f$. This allows us to simplify the analysis by avoiding error terms due to the discretization of gradient descent.

In this section, we start by deriving the gradient of the function $f$ and examining some properties of its gradient flow in \cref{subsection: gradient of f}. 
Then, we prove some technical lemmas.
We extend \cref{assum:identification} to a broader subset of matrices within the set $\mathcal B$ in \cref{subsection: assumptions to B}, and establish bounds on the noise in \cref{subsection: error}. 
Finally, we complete the proofs of \cref{lemma: stay in B} in \cref{subsection: stay in B} and \cref{lemma: converges to M_hat} in \cref{subsection: converges to M_hat}.

\subsection{\texorpdfstring{The gradient of the function $f$ }{The gradient of the function f}} \label{subsection: gradient of f}
Before we prove \cref{lemma: stay in B}, we need to derive the gradient of the function $f$. 

Define $P_{\mathbf Z}$ to be the projection operator that projects onto the subspace $\mathbf Z$. Note that in the definition of $f(X,Y)$, the quantities $m$ and $\tau$ are chosen to minimize the distance between $O-XY^\top$ and this subspace, measured in terms of Euclidean norm. Hence, we can view $m$ and $\tau$ as coordinates of the projection of $O-XY^\top$ onto this subspace. This observation gives us the following equivalent definition of $f(X,Y)$:
\begin{align*}
f(X,Y) &= \frac{1}{2}\norm{ P_{\mathbf Z^\perp}\paren{O - XY^\top }}_\F^2 + \frac{1}{2}\lambda \sang{X, X} + \frac{1}{2}\lambda\sang{Y, Y}.
\end{align*}

Consider a single entry of the matrix $X$: $X_{ij}$. Note that the expression inside the Frobenius norm is linear in $X_{ij}$. 
Hence, we can rewrite $f(X,Y)$ as follows:
\begin{align*}
f(X,Y) &= \frac{1}{2}\norm{X_{ij}A + B}_\F^2 + \frac{1}{2}\lambda \sang{X, X} + \frac{1}{2}\lambda\sang{Y, Y},
\end{align*}
for some matrices $A$ and $B$ that do not depend on $X_{ij}$ (but may depend on other entries of $X$, and $Y$). 

Now we take the partial derivative of $f$ with respect to $X_{ij}$. Let $E_{ij}$ be the matrix where all elements are zero except for the element in the $i$-th row and $j$-th column, which is 1. Then, 
\begin{align*}
\frac{\partial f}{\partial X_{ij}} &= \sang{X_{ij}A + B, A} + \lambda X_{ij}\\
&\overset{(i)}{=} \sang{ P_{\mathbf Z^\perp}\paren{O - XY^\top }, -P_{\mathbf Z^\perp}\paren{E_{ij}Y^\top}} + \lambda X_{ij}\\
&= -\sang{ P_{\mathbf Z^\perp}\paren{O - XY^\top }, E_{ij}Y^\top} + \lambda X_{ij},
\end{align*}
where $(i)$ is due to the fact that $A$ and $B$ were defined such that $X_{ij}A + B = P_{\mathbf Z^\perp}\paren{O - XY^\top }.$  
Hence,
\begin{align*}
\frac{\partial f}{\partial X}
&= - P_{\mathbf Z^\perp}\paren{O - XY^\top }Y + \lambda X\\
&= - \paren{O - XY^\top - m(X,Y)\mathbf 1^\top - \sum_{i=1}^k \tau_i(X,Y)Z_i}Y + \lambda X,
\end{align*}
where $m(X,Y),\tau(X,Y) := \argmin_{m,\tau} \norm{O - XY^\top - m\mathbf 1^\top - \sum_{i=1}^k \tau_iZ_i}^2_\F$. We will write $m$ and $\tau$ to represent $m(X,Y)$ and $\tau(X,Y)$ below for notational simplicity.

Using $O = X^{\star}Y^{\star\top} + m^{\star}\mathbf{1}^\top + \sum_{i=1}^k \tau^{\star}_i Z_i + \hat{E}$, we can simplify the gradient as follows: 
\begin{align*}
    \frac{\partial f}{\partial X} =  \paren{ XY^\top - X^{\star}Y^{\star\top} + (m-m^{\star})\mathbf 1^\top + \sum_{i=1}^k (\tau_i-\tau_i^{\star})Z_i - \hat E}Y + \lambda X.
\end{align*}

Because $m$ and $\tau$ are coordinates of the projection of $O-XY^\top$ onto the subspace $\mathbf Z$, we have
\begin{equation*}
    m\mathbf1^\top + \sum_{i=1}^k \tau_iZ_i =  P_{\mathbf Z}\paren{O-XY^\top}.
\end{equation*}
Additionally, because $m^{\star}\mathbf1^\top + \sum_{i=1}^k \tau_i^{\star}Z_i = O-X^{\star}Y^{\star\top} - \hat E$, the right hand side quantity is also in the subspace $\mathbf Z$, so we have
\begin{align*}
    &m^{\star}\mathbf1^\top + \sum_{i=1}^k \tau_i^{\star}Z_i  = P_{\mathbf Z}\paren{O-X^{\star}Y^{\star\top} - \hat E}.
\end{align*}
Subtracting the two equations above, we have
\begin{align}\label{eq: replace_m_tau}
    & (m-m^{\star})\mathbf 1^\top + \sum_{i=1}^k (\tau_i-\tau_i^{\star})Z_i = P_{\mathbf Z}\paren{X^{\star}Y^{\star\top}-XY^\top+\hat E}.
\end{align}
Using \eqref{eq: replace_m_tau}, we can rewrite
\begin{align*}
    \frac{\partial f}{\partial X} =  \paren{P_{\mathbf Z^\perp} \paren{XY^\top - X^{\star}Y^{\star\top}-\hat E}}Y + \lambda X.
\end{align*}
Similarly, we can derive
\begin{align*}
\frac{\partial f}{\partial Y}
&= \paren{P_{\mathbf Z^\perp} \paren{XY^\top - X^{\star}Y^{\star\top}-\hat E}}^\top X + \lambda Y.
\end{align*}

\subsubsection{Properties of the gradient flow}

We now prove some properties of the gradient flow of the function $f$ starting from the point $(X^{\star}, Y^{\star})$. 
For simplicity of notation, we define 
\begin{align*}
    D := P_{\mathbf Z^\perp} \paren{XY^\top - X^{\star}Y^{\star\top}-\hat E},
\end{align*}
so we can simplify our gradients as follows: 
\begin{align}\label{eq:gradient_f_formula}
\frac{\partial f}{\partial X} = DY + \lambda X \qquad \text{and} \qquad
\frac{\partial f}{\partial Y} = D^\top X + \lambda Y.
\end{align}

\begin{lemma}\label{lemma: XTX=YTY}
Every point on the gradient flow of function $f(X,Y)$ starting from the point $\paren{X^{\star}, Y^{\star}}$ satisfies $X^\top X=Y^\top Y.$
\end{lemma}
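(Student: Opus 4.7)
The plan is to show that the quantity $Q(t) := X(t)^\top X(t) - Y(t)^\top Y(t)$ satisfies a simple ODE along the gradient flow, and then check that it vanishes at $t=0$ because of the way $X^\star$ and $Y^\star$ are constructed from the SVD of $P_{\mathbf{1}^\perp}(M^\star)$.

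\textbf{Step 1: Differentiate $Q(t)$ along the flow.} Using the gradient flow equations $\dot X = -(DY + \lambda X)$ and $\dot Y = -(D^\top X + \lambda Y)$ from \eqref{eq:gradient_f_formula}, I would apply the product rule to write
\begin{equation*}
\frac{d}{dt}(X^\top X - Y^\top Y) = \dot X^\top X + X^\top \dot X - \dot Y^\top Y - Y^\top \dot Y.
\end{equation*}
Substituting the expressions for $\dot X$ and $\dot Y$, the four cross terms involving $D$ cancel in pairs: the term $-X^\top D Y$ coming from $X^\top \dot X$ cancels with $+X^\top D Y$ coming from $-\dot Y^\top Y$, and similarly for $Y^\top D^\top X$. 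What remains is exactly $-2\lambda(X^\top X - Y^\top Y)$, so $\dot Q = -2\lambda Q$.

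\textbf{Step 2: Check the initial condition.} Recall $X^\star = U^\star \Sigma^{\star 1/2}$ and $Y^\star = V^\star \Sigma^{\star 1/2}$, where $U^\star$ and $V^\star$ have orthonormal columns. Hence
\begin{equation*}
X^{\star \top} X^\star = \Sigma^{\star 1/2} U^{\star \top} U^\star \Sigma^{\star 1/2} = \Sigma^\star = \Sigma^{\star 1/2} V^{\star \top} V^\star \Sigma^{\star 1/2} = Y^{\star \top} Y^\star,
\end{equation*}
so $Q(0) = 0$.

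\textbf{Step 3: Conclude.} The linear ODE $\dot Q = -2\lambda Q$ with initial condition $Q(0) = 0$ has the unique solution $Q(t) \equiv 0$, giving $X(t)^\top X(t) = Y(t)^\top Y(t)$ at every point of the gradient flow. There is no real obstacle here; the only subtlety is recognizing that the $D$-dependent cross terms cancel because $D$ appears symmetrically in the $X$- and $Y$-gradients (as $DY$ and $D^\top X$ respectively), which is precisely what makes the ``balanced'' invariant $X^\top X - Y^\top Y$ well-behaved under the flow of this factored, $\ell_2$-regularized objective.
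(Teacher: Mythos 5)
Your proof is correct and follows essentially the same argument as the paper: the paper also tracks $\phi(t)=X^{t\top}X^t - Y^{t\top}Y^t$ along the flow, observes the $D$-cross-terms cancel so that $\phi'(t)=-2\lambda\phi(t)$, checks $\phi(0)=X^{\star\top}X^{\star}-Y^{\star\top}Y^{\star}=\Sigma^{\star}-\Sigma^{\star}=0$, and concludes $\phi\equiv 0$ by solving the linear ODE.
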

\begin{proof}
Note that at the starting point $\paren{X^{\star}, Y^{\star}}$, we have that ${X^{\star}}^\top X^{\star} = \Sigma^{\star} = {Y^{\star}}^\top Y^{\star}$ as desired. 

Next, we examine the change in the value $\phi(t) :={X^t}^\top X^t-{Y^t}^\top Y^t$ as we move along the gradient flow defined by \eqref{eq: gradient flow}. For convenience, we omit the superscript $t$ in $X^t$ and $Y^t$. Using \eqref{eq: gradient flow}, we have
\begin{align*}
{\begin{pmatrix}
\dot X\\\dot Y
\end{pmatrix}} = -\nabla f\begin{pmatrix}
X\\Y
\end{pmatrix} = - \begin{pmatrix}
    DY +\lambda X\\
    D^\top X + \lambda Y
\end{pmatrix}.
\end{align*}

The derivative of $\phi$ is 
\begin{align*}
\phi'(t)& = \dot X^\top X + {X}^\top \dot X -\dot Y^\top Y - Y^\top \dot Y  \\
&= - \paren{DY+\lambda X}^\top X-X^\top (DY+\lambda X)   + \paren{D^\top X + \lambda Y}^\top Y + Y^\top (D^\top X+\lambda Y)\\
&= -2\lambda\paren{X^\top X - Y^\top Y},
\end{align*}
showing that $\phi'(t) = -2\lambda \phi(t)$.
As a result, recalling that $\phi(0)=0$, we can solve the differential equation to obtain $\phi(t) = e^{-2\lambda t} \phi(0) = 0$ for all $t$. 
\end{proof}

\begin{lemma}\label{lemma: YT1=0}
Every point on the gradient flow of function $f(X,Y)$ starting from the point $\paren{X^{\star}, Y^{\star}}$ satisfies $Y^\top \mathbf 1 = 0.$
Furthermore, if $U\Sigma V^\top$ is the SVD of $XY^\top$, then $V^\top \mathbf 1 = 0.$
\end{lemma}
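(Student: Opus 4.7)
The plan is to verify the invariant $Y^\top \mathbf 1 = 0$ first at the initial point $(X^{\star}, Y^{\star})$, then show that along the gradient flow the quantity $Y^\top \mathbf 1$ satisfies a homogeneous linear ODE with zero initial condition, hence stays identically zero.

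For the initial condition, recall $X^{\star}Y^{\star\top} = P_{{\mathbf 1}^\perp}(M^{\star})$, so $X^{\star}Y^{\star\top}\mathbf 1 = 0$, i.e., $X^{\star}(Y^{\star\top}\mathbf 1) = 0$. Because $X^{\star} = U^{\star}\Sigma^{\star 1/2}$ with $U^{\star}$ having orthonormal columns and $\Sigma^{\star}$ invertible (its diagonal entries are the nonzero singular values of $P_{{\mathbf 1}^\perp}(M^{\star})$), $X^{\star}$ has full column rank, so we conclude $Y^{\star\top}\mathbf 1 = 0$.

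Next, along the gradient flow, using \eqref{eq:gradient_f_formula}, I have $\dot Y = -D^\top X - \lambda Y$, which gives
\begin{equation*}
\frac{d}{dt}\paren{Y^\top \mathbf 1} = -X^\top D\mathbf 1 - \lambda Y^\top \mathbf 1.
\end{equation*}
The crucial observation is that $D \mathbf 1 = 0$. Indeed, by definition $D = P_{\mathbf Z^\perp}(\cdot) \in \mathbf Z^\perp$, and since $\mathbf Z$ contains the subspace $\{\alpha \mathbf 1^\top : \alpha \in \mathbb R^n\}$, for every $\alpha \in \mathbb R^n$ we have $\sang{D,\alpha \mathbf 1^\top} = \alpha^\top D \mathbf 1 = 0$, so $D\mathbf 1 = 0$. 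Letting $\psi(t) := Y^{t\top}\mathbf 1$, this reduces the ODE to $\psi'(t) = -\lambda \psi(t)$ with $\psi(0)=0$, whose unique solution is $\psi(t) \equiv 0$. Hence $Y^\top \mathbf 1 = 0$ at every point of the gradient flow.

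Finally, for the SVD statement, $Y^\top \mathbf 1 = 0$ implies $XY^\top \mathbf 1 = 0$. Writing $XY^\top = U\Sigma V^\top$ (the compact SVD, so $\Sigma$ contains only strictly positive singular values) and left-multiplying by $U^\top$ gives $\Sigma V^\top \mathbf 1 = 0$. Since $\Sigma$ is invertible, we obtain $V^\top \mathbf 1 = 0$. There is no real obstacle in this argument; the only subtlety is ensuring $X^{\star}$ is full column rank and $\Sigma$ is invertible, both of which follow directly from working with the rank-$r$ (compact) SVD.
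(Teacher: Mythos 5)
Your proposal is correct and follows essentially the same route as the paper: verify $Y^{\star\top}\mathbf 1=0$ at the initial point, show $\tfrac{d}{dt}(Y^\top\mathbf 1)=-\lambda\, Y^\top\mathbf 1$ along the flow using $D\mathbf 1=0$ (since $\mathbf Z$ contains $\set{\alpha\mathbf 1^\top}$), and deduce $V^\top\mathbf 1=0$ from the compact SVD. The only cosmetic differences are that the paper establishes $Y^{\star\top}\mathbf 1=0$ and $V^\top\mathbf 1=0$ via the quadratic form $\mathbf 1^\top V\Sigma^2V^\top\mathbf 1=0$ rather than your full-column-rank/left-multiplication argument, which is an equivalent observation.
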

\begin{proof}
Using $X^{\star}Y^{\star\top} \mathbf 1 = 0$ (because $X^{\star}Y^{\star\top} = P_{{\mathbf 1}^\perp}(M^{\star})$), we have that 
\begin{align*}
    \paren{X^{\star}Y^{\star\top} \mathbf 1}^\top X^{\star}Y^{\star\top} \mathbf 1 = \mathbf 1^\top V^\star\Sigma^{\star 2} V^{\star\top} \mathbf 1 = 0,
\end{align*}
which can only happen if $V^{\star\top} \mathbf 1 = 0$. This implies that $Y^{\star\top} \mathbf1 = 0$. 

Using the same approach as the proof of \cref{lemma: XTX=YTY}, we examine the change in the value of $\phi(t) := {Y^t}^\top \mathbf 1$ as we move along the gradient flow defined by \eqref{eq: gradient flow}. We omit the superscript $t$ in $Y^t$ for notational convenience. Using \eqref{eq: gradient flow}, we have $\dot Y = -D^\top X - \lambda Y$. 
Now we compute the derivative of $\phi(t)$:
\begin{align*}
 \phi'(t)
&= \dot Y^\top\mathbf 1\\
&= -X^\top D\mathbf 1 - \lambda Y^\top \mathbf 1\\
&\overset{(i)}{=} - \lambda Y^\top \mathbf 1\\
&= - \lambda \phi(t).
\end{align*}
The equality $(i)$ follows because $D$ is the projection onto a space orthogonal to $\set{\alpha \mathbf 1^\top\mid \alpha\in\mathbb  R^n}$, effectively centering its rows. Because $D$ has zero-mean rows, $D\mathbf 1 = 0$. 

Note that $\phi(0) = Y^{\star\top} \mathbf1 = 0$. 
Solving the differential equation, we have $\phi(t) = e^{-\lambda t} \phi(0) = 0$ for all $t$. Using the same logic we used to show $V^{\star\top} \mathbf 1 = 0$, we have that $Y^\top \mathbf 1$ implies $V^\top \mathbf 1 = 0$.
% fact that $A:=XY^\top \mathbf 1  = 0$. We have that 
% \begin{align*}
%     A^\top A = \mathbf 1^\top V\Sigma^2 V^\top \mathbf 1 = 0,
% \end{align*}
% which can only happen if $V^\top \mathbf 1 = 0$.
\end{proof}

\subsection{\texorpdfstring{Extending assumptions to $\mathcal{B}$}{Extending assumptions to B}}
\label{subsection: assumptions to B}

We prove a lemma that extends Assumptions \ref{assum:conditions-Z}, \ref{assumption: spectral}, and \ref{assum:conditions-D} to a broader subset of matrices within the set $\mathcal B$, thereby expanding the applicability of the original assumptions. 

We'll begin by proving a useful lemma that provides bounds for the singular values of matrices in the set $\mathcal B$. We show these values are within a constant factor of the singular value range of $X^{\star}$ and $Y^{\star}$, spanning from $\sqrt{\sigma_{\min}}$ to $\sqrt{\sigma_{\max}}$.

\begin{lemma}\label{lemma: sigma_X sigma_Y}
For large enough $n$ and $(X,Y)\in \mathcal B$, we have the following for any $i\in [r]$:
\begin{align*}
    \sigma_i(X),\sigma_i(Y) \in \left[\frac{\sqrt{\sigma_{\min}}}{2}, 2\sqrt{\sigma_{\max}}\right].
\end{align*}
\end{lemma}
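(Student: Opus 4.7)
The plan is to deduce this directly from Weyl's inequality for singular values, using the definition of $\mathcal{B}$ together with the upper bound on $\rho$ derived in equation \eqref{eq: rho bound}.

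First I would observe that since $X^{\star} = U^{\star}\Sigma^{\star 1/2}$ and $Y^{\star} = V^{\star}\Sigma^{\star 1/2}$ with $U^{\star}$ and $V^{\star}$ having orthonormal columns, the nonzero singular values of both $X^{\star}$ and $Y^{\star}$ are precisely $\sqrt{\sigma_1(\Sigma^{\star})}, \dots, \sqrt{\sigma_r(\Sigma^{\star})}$, hence they lie in the range $[\sqrt{\sigma_{\min}}, \sqrt{\sigma_{\max}}]$. Since $H_{X,Y} \in \mathcal{O}^{r \times r}$ is orthogonal, multiplication on the right by $H_{X,Y}$ preserves singular values, so $\sigma_i(X) = \sigma_i(XH_{X,Y})$ and $\sigma_i(Y) = \sigma_i(YH_{X,Y})$ for every $i \in [r]$.

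Next I would apply Weyl's perturbation inequality to $XH_{X,Y} = X^{\star} + (XH_{X,Y} - X^{\star})$, yielding
\begin{equation*}
\abs{\sigma_i(X) - \sigma_i(X^{\star})} \leq \norm{XH_{X,Y} - X^{\star}} \leq \norm{XH_{X,Y} - X^{\star}}_\F \leq \rho,
\end{equation*}
where the last inequality uses the defining condition of $\mathcal{B}$. The same bound holds for $Y$.

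Finally I would invoke the bound $\rho \lesssim \sqrt{\sigma_{\min}}/(\log^{6.5}(n)\kappa)$ from \eqref{eq: rho bound}: for sufficiently large $n$, the right-hand side is bounded by $\sqrt{\sigma_{\min}}/2$, so $\rho \leq \sqrt{\sigma_{\min}}/2$. Combining with Weyl's bound,
\begin{equation*}
\sigma_i(X) \geq \sigma_i(X^{\star}) - \rho \geq \sqrt{\sigma_{\min}} - \tfrac{1}{2}\sqrt{\sigma_{\min}} = \tfrac{1}{2}\sqrt{\sigma_{\min}},
\end{equation*}
and
\begin{equation*}
\sigma_i(X) \leq \sigma_i(X^{\star}) + \rho \leq \sqrt{\sigma_{\max}} + \tfrac{1}{2}\sqrt{\sigma_{\min}} \leq 2\sqrt{\sigma_{\max}},
\end{equation*}
with the analogous chain for $Y$. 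There is no real obstacle here; the entire argument is a routine perturbation estimate, and the only substantive ingredient is importing the upper bound on $\rho$ already established in \eqref{eq: rho bound}.
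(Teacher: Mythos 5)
Your proposal is correct and follows essentially the same route as the paper's proof: both use the invariance of singular values under right-multiplication by the orthogonal matrix $H_{X,Y}$ (the paper phrases this as taking $H_{X,Y}=I$ without loss of generality), the defining Frobenius-norm bound of $\mathcal B$ together with the estimate $\rho \lesssim \sqrt{\sigma_{\min}}/(\kappa\log^{6.5}(n))$ from \eqref{eq: rho bound}, and Weyl's inequality to perturb the singular values of $X^{\star}$ and $Y^{\star}$, which lie in $[\sqrt{\sigma_{\min}},\sqrt{\sigma_{\max}}]$. No gaps; the argument matches the paper's.
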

\begin{proof}
The singular values of a matrix are not changed by right-multiplying by an orthogonal matrix. Hence, without loss of generality, we suppose that $(X,Y)$ are rotated such that they are optimally aligned with $(X^{\star},Y^{\star})$; in other words, $H_{X,Y}=I$. Then, $(X,Y)\in \mathcal B$ gives us
\begin{align*}
    \norm{X-X^{\star}}_\F \leq  \rho \lesssim \frac{\sqrt{\sigma_{\min}}}{\log^{6.5}(n) \kappa},
\end{align*}
where the last step is due to \eqref{eq: rho bound}.

Then by Weyl's inequality for singular values, for any $i\in[r]$, we have
\begin{align*}
    \sigma_{i}(X) \leq \sigma_{i}(X^{\star}) + \norm{X - X^{\star}} \leq \sigma_1(X^{\star}) + \norm{X - X^{\star}}_\F \leq 2\sqrt{\sigma_{\max}}. 
\end{align*}

We also have
\begin{align*}
    \sigma_{i}(X) \geq \sigma_{i}(X^{\star}) - \norm{X - X^{\star}} \geq \sigma_{r}(X^{\star}) - \norm{X - X^{\star}}_\F \geq \frac{\sqrt{\sigma_{\min}}}{2}.
\end{align*}
The proof for $\sigma_{i}(Y)$ is identical.
\end{proof}

Now, we present the lemma which extends \cref{assum:identification} to a broader subset of matrices in $\mathcal B$. 

\begin{lemma}\label{lem:general-conditions-small-ball}
Suppose that $(X, Y) \in \mathcal B$ and that there exists a rotation matrix $H\in \mathcal O^{r\times r}$ such that $(XH, YH)$ is along the gradient flow of function $f$ starting from the point $(X^{\star},Y^{\star})$. Let $m,\tau$ denote the values that minimize $f(X,Y)$. Let $XY^{\top} = U\Sigma V^{\top}$ be the SVD of $XY^{\top}$. 
Let $\mathbf T_0$ be the span of the tangent space of $XY^\top$, and denote by $\mathbf T$ the span of $\mathbf T_0$ and $\set{\alpha \mathbf 1^\top \mid \alpha \in \mathbb  R^n}$.
Define $\Tilde\Delta^1 \in \mathbb {R}^{k}$ as the vector with components $\Tilde\Delta^1_i =  \sang{Z_i, U  V^{\top} }.$ Define $\Tilde{D}$ to be the matrix with entries $\Tilde{D}_{ij} = \sang{P_{{\mathbf T}^{\perp}}(Z_i),P_{{\mathbf T}^{\perp}}(Z_j)}$.

 Assume \cref{assum:conditions-Z} holds, then for large enough $n$, 
\begin{align}
    \norm{ZV}_\F^2 + \norm{Z^{\top}U}_\F^2 &\leq 1-\frac{c_{r_1}}{2\log(n)}  \label{eq:generalized-condition-1}\\
    %\norm{P_{{\mathbf T}^{\perp}}(Z)}_\F^2 &\geq \frac{c_{r_1}}{2\log(n)}  \label{eq:generalized-PTP-Z-ratio}\\
    \norm{P_{{\mathbf T_0}^{\perp}}(Z)}_\F^2 &\geq \frac{c_{r_1}}{2\log(n)}  \label{eq:generalized-PTP-Z-ratio}\\
     % \norm{P_{{\mathbf T}^{\perp}}(Z) - P_{{\mathbf T}^{\star\perp}}(Z)}_\F &\lesssim \frac{\sigma r\kappa^2\sqrt{n} \log^{2.5}(n)}{\sigma_{\min}} \norm{Z}_\F \label{eq:PTP-PTPstar-Z-F}\\
     \norm{P_{{\mathbf T}^{\perp}}(Z) - P_{{\mathbf T}^{\star\perp}}(Z)}_\star &\lesssim \frac{\rho \sqrt{\kappa r}}{\sqrt{\sigma_{\min}} }%\frac{\sigma r^{1.5}\kappa^2\sqrt{n} \log^{2.5}(n)}{\sigma_{\min}} 
     \label{eq:PTP-PTPstar-Z}
\end{align}
for any $ Z\in \mathbf Z$ that has $\norm{Z}_\F = 1$.

Assume \cref{assumption: spectral} holds, then for large enough $n$, 
\begin{align}\label{eq: generalized spectral}
\|\Tilde D^{-1}\| \norm{\Tilde\Delta^1} \sum_{i=1}^k\norm{P_{{\mathbf T}^{\perp}}(Z_i)} \leq 1-\frac{c_{r_2}}{2\log n}.
\end{align}

Assume \cref{assum:conditions-D} holds, then for large enough $n$,
\begin{align}\label{eq: generalized D}
\sigma_{\min}(\Tilde{D}) \geq \frac{c_s}{2\log n}.
\end{align}

\end{lemma}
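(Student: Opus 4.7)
The basic strategy is to view all of the target quantities as small perturbations of their $M^\star$-analogues. The lemma postulates $(X,Y) \in \mathcal B$, so $XY^\top$ is close to $X^\star Y^{\star\top}$; since the latter is exactly rank $r$ with smallest nonzero singular value $\sigma_{\min}$, I can invoke a Wedin / Davis--Kahan bound to transfer closeness of $XY^\top$ to $X^\star Y^{\star\top}$ into closeness of the projectors onto their column and row spaces.

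First I would establish the master perturbation estimate. A direct expansion gives $\|XY^\top - X^\star Y^{\star\top}\| \leq \|X\|\,\|Y-Y^\star H_{X,Y}^\top\|_\F + \|X-X^\star H_{X,Y}^\top\|_\F\,\|Y^\star\| \lesssim \rho \sqrt{\sigma_{\max}}$, where I use \cref{lemma: sigma_X sigma_Y} to bound $\|X\|$. Wedin's theorem, together with the fact that $X^\star Y^{\star\top}$ has spectral gap $\sigma_{\min}$ from zero, yields
\begin{equation*}
\|UU^\top - U^\star U^{\star\top}\|, \ \|VV^\top - V^\star V^{\star\top}\| \;\lesssim\; \frac{\rho\sqrt{\sigma_{\max}}}{\sigma_{\min}} \;=\; \frac{\rho\sqrt{\kappa}}{\sqrt{\sigma_{\min}}}.
\end{equation*}
By the assumed bound on $\rho$ from \eqref{eq: rho bound}, this is $\lesssim 1/(\log^{6.5}(n)\sqrt{\kappa})$, which is $o(1/\log n)$.

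With this in hand, the first two claims \eqref{eq:generalized-condition-1} and \eqref{eq:generalized-PTP-Z-ratio} follow by writing $\|ZV\|_\F^2 = \langle Z^\top Z,\, VV^\top\rangle$, subtracting the $V^\star$ version, and bounding the difference by $\|VV^\top - V^\star V^{\star\top}\| \cdot \|Z\|_\F^2$ (the same for $U$); then \cref{assum:conditions-Z} plus the Wedin bound absorb the error into the $c_{r_1}/(2\log n)$ slack. For \eqref{eq:generalized-PTP-Z-ratio} I use the identity $\|P_{\mathbf T_0^\perp}(Z)\|_\F^2 \geq \|Z\|_\F^2 - \|ZV\|_\F^2 - \|U^\top Z\|_\F^2$ together with the bound just derived. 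For the nuclear-norm inequality \eqref{eq:PTP-PTPstar-Z}, \cref{lemma: YT1=0} (guaranteeing $V^\top\mathbf 1 = 0$, matching $V^{\star\top}\mathbf 1 = 0$) lets me apply \cref{lemma: projection} to write both projections in the common form $(I-UU^\top)Z(I-\mathbf 1\mathbf 1^\top/T)(I-VV^\top)$, so the difference telescopes as
\begin{equation*}
(UU^\top - U^\star U^{\star\top})\,Z\,(\cdots) + (\cdots)\,Z\,(VV^\top - V^\star V^{\star\top});
\end{equation*}
each summand has rank $\leq 2r$, so its nuclear norm is $\leq \sqrt{2r}\cdot\|\cdot\|_\F$, yielding the stated $\rho\sqrt{\kappa r}/\sqrt{\sigma_{\min}}$ bound.

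The remaining two statements \eqref{eq: generalized spectral} and \eqref{eq: generalized D} are a matter of propagating the perturbation through products of the quantities that define them. I would bound $|\tilde D_{ij} - D^\star_{ij}|$ and $|\tilde\Delta^1_i - \Delta^{\star 1}_i|$ by reusing \eqref{eq:PTP-PTPstar-Z} and the analogous closeness $\|UV^\top - U^\star V^{\star\top}\| \lesssim \rho/\sqrt{\sigma_{\min}}$ (from the same Wedin input); since all sums have only $k = O(\log n)$ terms, the resulting entry-wise errors in $\tilde D$, $\tilde\Delta^1$, and $\sum_i \|P_{\mathbf T^\perp}(Z_i)\|$ are all $O(1/\log^{5}(n))$ or smaller. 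For $\|\tilde D^{-1}\|$ I apply Weyl to get $\sigma_{\min}(\tilde D) \geq \sigma_{\min}(D^\star) - \|\tilde D - D^\star\|$, which together with \cref{assum:conditions-D} gives \eqref{eq: generalized D} and a bound on $\|\tilde D^{-1}\|$; finally I combine everything multiplicatively with \cref{assumption: spectral} to get \eqref{eq: generalized spectral}.

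\textbf{Main obstacle.} The delicate point is \eqref{eq:PTP-PTPstar-Z}: the nuclear norm is not submultiplicative in a useful way, so I must exploit the rank-$2r$ structure of the differences $UU^\top - U^\star U^{\star\top}$ and $VV^\top - V^\star V^{\star\top}$ to convert Frobenius bounds into nuclear-norm bounds without losing a $\sqrt{\min(n,T)}$ factor. A secondary bookkeeping nuisance is ensuring that all of the $\log n$-factors from $k$, from the perturbation, and from the ambient $1/\log n$ slacks in \cref{assum:identification} actually close to give the final constants $c_{r_1}/2$, $c_{r_2}/2$, and $c_s/2$ --- this is why the assumption $\rho \lesssim \sqrt{\sigma_{\min}}/(\kappa\log^{6.5}(n))$ is tuned with so many powers of $\log n$ in \cref{thm: convergence}.
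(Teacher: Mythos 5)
Your proposal is correct and takes essentially the same approach as the paper's proof: a Davis--Kahan/Wedin perturbation bound on the rank-$r$ projectors (the paper packages this as Claim~\ref{claim: davis-khan}, using \cite{yu2015useful} with the symmetric-dilation trick), then telescoping the difference of projections with a $\sqrt{r}$ rank-to-nuclear-norm conversion for \eqref{eq:PTP-PTPstar-Z}, Weyl plus an entrywise perturbation bound for \eqref{eq: generalized D}, and multiplicative error propagation (via a Neumann-series expansion of $\tilde D^{-1}$, which you leave slightly implicit) for \eqref{eq: generalized spectral}. You also correctly isolate the role of $V^\top\mathbf 1 = V^{\star\top}\mathbf 1 = 0$ in simplifying \cref{lemma: projection} so the projections share a common form, which is exactly how the paper telescopes the nuclear-norm difference.
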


\begin{proof}[Proof of \cref{lem:general-conditions-small-ball}.]

We will make use of the following claim throughout this proof:
\begin{claim}\label{claim: davis-khan}
\begin{align*}
    \norm{U^{\star}U^{\star\top}-UU^\top}_\F, \norm{V^{\star}V^{\star\top} - VV^\top}_\F, \norm{U^{\star}V^{\star\top} - UV^\top}_\F\lesssim  \frac{\rho \sqrt{\kappa}}{\sqrt{\sigma_{\min}} } \lesssim\frac{1}{\log^{6.5}(n)}.
\end{align*}
\end{claim}

The proof of \eqref{eq:generalized-condition-1} and \eqref{eq:generalized-PTP-Z-ratio} follow the exact same logic as the proof of (68) and (69) in the proof of Lemma 13 in \cite{farias2021learning}.

\textbf{Proof of \eqref{eq:PTP-PTPstar-Z}.}
We use \cref{lemma: projection} to compute the projections of the treatment matrices. Let $r=(I-VV^\top)\mathbf 1$ and let $r^{\star} = (I-V^{\star}V^{\star\top})\mathbf 1$. By \cref{lemma: YT1=0}, we have $V^\top \mathbf 1 = V^{\star\top} \mathbf 1 = 0$. Hence, $r = r^{\star} = \mathbf 1$, which allows us to simplify as follows: 
\begin{align*}
&\norm{P_{{\mathbf T}^{\perp}}(Z) - P_{{\mathbf T}^{\star\perp}}(Z)}_\star \\
&=\norm{(I-UU^\top)P_{{\mathbf 1}^\perp}(Z)(I-VV^\top) - (I-U^{\star}U^{\star\top})P_{{\mathbf 1}^\perp}(Z)(I-V^{\star}V^{\star\top})}_\star \\
&\leq \norm{(U^{\star}U^{\star\top}-UU^\top)P_{{\mathbf 1}^\perp}(Z)(I-VV^\top)}_\star + \norm{(I-U^{\star}U^{\star\top})P_{{\mathbf 1}^\perp}(Z)(VV^\top-V^{\star}V^{\star\top})}_\star\\
&\overset{(i)}{\lesssim} \norm{U^{\star}U^{\star\top}-UU^\top}_\F\norm{P_{{\mathbf 1}^\perp}(Z)}_\F\sqrt{r} + \norm{P_{{\mathbf 1}^\perp}(Z)}_\F\norm{VV^\top-V^{\star}V^{\star\top}}_\F\sqrt{r}\\
&\leq \norm{U^{\star}U^{\star\top}-UU^\top}_\F\sqrt{r} +\norm{VV^\top-V^{\star}V^{\star\top}}_\F\sqrt{r}\\
&\lesssim  \frac{\rho \sqrt{\kappa r}}{\sqrt{\sigma_{\min}} },
\end{align*}
where $(i)$ is due to $\norm{A}_\star \leq \norm{A}_\F\sqrt{rank(A)}$, the fact that projection matrices $(I-VV^\top)$ and $(I-U^{\star}U^{\star\top})$ have Frobenius norm at most 1. In particular for step $(i)$, we note that the rank of the matrices inside the nuclear norm is $O(r)$ because $U^{\star}U^{\star\top}$, $UU^\top$, $V^{\star}V^{\star\top}$, and $VV^\top$ are all rank $r$ matrices, and we have that $rank(A+B) \leq rank(A) + rank(B)$ and $rank(AB)\leq \min(rank(A), rank(B))$.

\textbf{Proof of \eqref{eq: generalized D}.}
Define $\Delta^D := \Tilde D - D^{\star}$ such that $\Delta^D_{ij} = \sang{P_{{\mathbf T}^{\perp}}(Z_i),P_{{\mathbf T}^{\perp}}(Z_j)} - \sang{P_{{\mathbf T}^{\star\perp}}(Z_i),P_{{\mathbf T}^{\star\perp}}(Z_j)}$ for $i,j\in[k]$. 
We upper bound the magnitude of the entries of $\Delta^D$. For any $i\in[k]$ and $j\in[k]$, we have
\begin{align}
\label{eq: projection of treatment matrices}
\begin{split}
\abs{\Delta^D_{ij}} &= \abs{\sang{P_{{\mathbf T}^{\perp}}(Z_i),P_{{\mathbf T}^{\perp}}(Z_j)} - \sang{P_{{\mathbf T}^{\star\perp}}(Z_i),P_{{\mathbf T}^{\star\perp}}(Z_j)}}\\
&\overset{(i)}{=} \abs{\sang{P_{{\mathbf T}^{\perp}}(Z_i),Z_j} - \sang{P_{{\mathbf T}^{\star\perp}}(Z_i),Z_j}}\\
&= \abs{\sang{P_{{\mathbf T}^{\perp}}(Z_i)-P_{{\mathbf T}^{\star\perp}}(Z_i),Z_j}}\\
&\overset{(ii)}{\leq} \norm{P_{{\mathbf T}^{\perp}}(Z_i)-P_{{\mathbf T}^{\star\perp}}(Z_i)}_\F\\
 &=\norm{(I-UU^\top)P_{{\mathbf 1}^\perp}(Z_i)(I-VV^\top) - (I-U^{\star}U^{\star\top})P_{{\mathbf 1}^\perp}(Z_i)(I-V^{\star}V^{\star\top})}_\F\\
&=\norm{(U^{\star}U^{\star\top}-UU^\top)P_{{\mathbf 1}^\perp}(Z_i)(I-VV^\top) - (I-U^{\star}U^{\star\top})P_{{\mathbf 1}^\perp}(Z_i)(VV^\top-V^{\star}V^{\star\top})}_\F\\
&\leq\norm{U^{\star}U^{\star\top}-UU^\top}_\F +\norm{V^{\star}V^{\star\top} - VV^\top}_\F\\
% &\lesssim \frac{\sigma r\kappa^2\sqrt{n} \log^{2.5}(n)}{\sigma_{\min}}  \\
& \overset{(iii)}{\lesssim} \frac{1}{\log^{6.5}(n)},
\end{split}
\end{align}
The equality $(i)$ is due to the fact that for any projection operator $P$ and matrices $A$ and $B$, we have $\sang{P(A), B} = \sang{P(A),P(B)} = \sang{A, P(B)}$, and $(ii)$ is due to the Cauchy–Schwarz inequality and the fact that the $\norm{Z_i}_\F = 1$, and $(iii)$ follows from \cref{claim: davis-khan}.

By Weyl's inequality, we have $\sigma_{\min}(\Tilde{D}) \geq \sigma_{\min}(D^{\star})  - \sigma_{\max}(\Delta^D).$
% \begin{align*}
% &= \min_{x\in \mathbb {R}^{k}:\norm{x}=1} \norm{\Tilde{D}x}\\
% &=\min_{x\in \mathbb {R}^{k}:\norm{x}=1} \norm{(D^{\star} + \Delta^D)x}\\
% &\geq \min_{x\in \mathbb {R}^{k}:\norm{x}=1} \norm{D^{\star}x} - \norm{\Delta^Dx}\\
% &\geq \min_{x\in \mathbb {R}^{k}:\norm{x}=1} \norm{D^{\star}x} - \max_{y\in \mathbb {R}^{k}:\norm{y}=1} \norm{\Delta^Dy}\\
% &= \sigma_{\min}(D^{\star}) - \sigma_{\max}(\Delta^D)\\
% &\geq \frac{c_s}{\log n} - \sigma_{\max}(\Delta^D).
% \end{align*}
Now we upper bound the second term as follows: 
\begin{equation} \label{eq: Delta^D}
\sigma_{\max}(\Delta^D) = \norm{\Delta^D} \leq k\max_{l,m}\abs{\Delta^D_{l,m}} = O\paren{\frac{1}{\log^{5.5}(n)}}.
\end{equation}
Putting everything together, for large enough $n$,
\begin{align*}
\sigma_{\min}(\Tilde{D}) &\geq \frac{c_s}{\log n} - \sigma_{\max}(\Delta^D)  \geq \frac{c_s}{2\log n}.
\end{align*}

\textbf{Proof of \eqref{eq: generalized spectral}.} 
We wish to bound $ A:= \norm{\Tilde D^{-1}}\norm{\Tilde\Delta^1}\sum_{i=1}^k\norm{P_{{\mathbf T}^{\perp}}(Z_i)}.$
This quantity can be restructured as 
\begin{align}\label{eq: A defined}
    &A= \paren{\norm{ D^{\star-1}}\norm{\Delta^{\star1}}+E_1}\paren{\sum_{i=1}^k\norm{P_{{\mathbf T}^{\star\perp}}(Z_i)} + E_2},
\end{align}
where $D^{\star}$ and $\Delta^{\star1}$ are defined just above \cref{assum:identification}, and $E_1$ and $E_2$ are defined as follows: 
\begin{align*}
    E_1 &:=   \norm{\Tilde D^{-1}}\norm{\Tilde\Delta^1}-\norm{D^{\star-1}}\norm{\Delta^{\star1}}\\
    E_2&:=\sum_{i=1}^k (\norm{P_{{\mathbf T}^{\perp}}(Z_i)}-\norm{P_{{\mathbf T}^{\star\perp}}(Z_i)}).
\end{align*}

We bound these two quantities as follows. 
\begin{claim}\label{claim: E1E2}
    We have that $\abs{E_1} \lesssim  \frac{1}{\log^{2.5} n}$ and $\abs{E_2} \lesssim  \frac{1}{\log^{5.5} n}$.
\end{claim}

To bound $A$, we consider the following additional observations. Firstly, we have $\norm{ D^{\star-1}}\norm{\Delta^{\star1}} \lesssim \log^2 n$. This follows from the fact that $\sigma_{\min}(D^{\star}) \gtrsim \frac{1}{\log n}$, and $\Delta^{\star1}$ is a vector of length $k$ (where $k \lesssim \log n$) with entries that do not exceed 1. Moreover, we have $\sum_{i=1}^k\norm{P_{{\mathbf T}^{\star\perp}}(Z_i)} \leq k \lesssim \log n$ because for every $i\in[k]$, $P_{{\mathbf T}^{\star\perp}}(Z_i)$ has Frobenius norm (and hence spectral norm) at most 1. Now, we are ready to plug all of this into \eqref{eq: A defined} to bound $A$ as follows.
\begin{align*}
    A& \leq
    \norm{ D^{\star-1}}\norm{\Delta^{\star1}}\sum_{i=1}^k\norm{P_{{\mathbf T}^{\star\perp}}(Z_i)} + O\paren{\frac{1}{\log^{1.5} n}}\\
    &\overset{(i)}{\leq}  1-\frac{c_{r_2}}{\log n}  + O\paren{\frac{1}{\log^{1.5} n}}\\
    &\leq  1-\frac{c_{r_2}}{2\log n},
\end{align*}
where $(i)$ is due to \cref{assumption: spectral}.
\end{proof}

\begin{proof}[Proof of \cref{claim: davis-khan}]
First, we note that we have 
\begin{align*}
    \norm{XY^\top - X^{\star}Y^{\star\top}}_\F &=\norm{XH_{X,Y}(YH_{X,Y})^\top - X^{\star}Y^{\star\top}}_\F \\
    &\leq \norm{XH_{X,Y}-X^{\star}}_\F\norm{Y} + \norm{X^{\star}}\norm{YH_{X,Y}-Y^{\star}}_\F\\
    &\lesssim \rho \sqrt{\sigma_{\max}},
\end{align*}
where $\norm{Y}$ is bounded by \cref{lemma: sigma_X sigma_Y}. 
Hence, we can make use of Theorem 2 from \cite{yu2015useful}, combined with the symmetric dilation trick from section C.3.2 of \cite{abbe2020entrywise}, to obtain the following. There exists an orthogonal matrix $ O\in\mathbb  R^{r\times r}$ such that
\begin{align*}
    \norm{ U O - U^{\star}}_\F+\norm{ V O - V^{\star}}_\F \lesssim \frac{\norm{XY^\top - X^{\star}Y^{\star\top}}_\F}{\sigma_r(X^{\star}Y^{\star\top}) - \sigma_{r+1}(X^{\star}Y^{\star\top})} \lesssim \frac{\rho \sqrt{\sigma_{\max}}}{\sigma_{\min}} = \frac{\rho \sqrt{\kappa}}{\sqrt{\sigma_{\min}}}.
\end{align*}
Now we can obtained the desired bound as follows:
\begin{align*}
    \norm{UV^\top - U^{\star}V^{\star\top}}_\F &= \norm{UO(VO)^\top - U^{\star}V^{\star\top}}_\F \\
    &\leq \norm{UO - U^{\star}}_\F \norm{VO} +  \norm{VO - V^{\star}}_\F \norm{U^{\star}}\\
    &\leq \norm{UO - U^{\star}}_\F+\norm{VO - V^{\star}}_\F  \\
    &\lesssim \frac{\rho \sqrt{\kappa}}{\sqrt{\sigma_{\min}} }\\
    &\lesssim  \frac{1}{\log^{6.5}n }  \qquad \text{using \eqref{eq: rho bound}}.
\end{align*}
We can similarly bound both $\norm{UU^\top-U^{\star}U^{\star\top}}_\F$ and $\norm{VV^\top-V^{\star}V^{\star\top}}_\F$. 
\end{proof}

\begin{proof}[Proof of \cref{claim: E1E2}]

$\abs{E_2}$ can be simply bounded as follows:
\begin{align*}
    \abs{E_2}
    &\leq \sum_{i=1}^k\norm{P_{{\mathbf T}^{\perp}}(Z_i)-P_{{\mathbf T}^{\star\perp}}(Z_i)}_\F \lesssim \frac{k}{\log^{6.5}n},
\end{align*}
where the last bound was shown in \cref{eq: projection of treatment matrices}. Because $k=O(\log n)$, we have the desired bound for $E_2$.

It remains to bound $\abs{E_1}$: %Let $\Delta' :=  \Tilde \Delta^1-\Delta^{\star1}$, such that $\Delta'_i = \sang{Z_i,   P_{{\mathbf 1}^\perp}(U V^{\top})-P_{{\mathbf 1}^\perp}({U^{\star}}{V^{\star}}^\top )}$
\begin{align*}
    \abs{E_1} &= \left|\|D^{\star-1}\| \|\Delta^{\star1}\|-\|\Tilde D^{-1}\|  \|\Tilde\Delta^1\| \right|\\
    &\leq \|D^{\star-1}\| \left|\|\Delta^{\star1}\| - \|\Tilde\Delta^1\|\right| +\left|\|D^{\star-1}\| - \|\Tilde D^{-1}\|\right| \|\Tilde\Delta^1\|\\
    &\leq \underbrace{\|D^{\star-1}\| \|\Delta^{\star1} - \Tilde\Delta^1\|}_{A_1} +\underbrace{\|D^{\star-1} - \Tilde D^{-1}\|  \|\Tilde\Delta^1\|}_{A_2}.
\end{align*}

\textit{Bounding $A_1$. } Applying Assumption \ref{assum:conditions-D}, we can bound $A_1$ as follows: 
\begin{align*}
    A_1 &\lesssim \log (n) \norm{\Delta^{\star1} - \Tilde\Delta^1}\\
    &\leq \log (n) k\norm{\Delta^{\star1} - \Tilde\Delta^1}_{\infty}\\
    &= \log (n) k \max_{i\in[k]} \abs{\sang{Z_i, U^{\star}  V^{\star\top}- U  V^{\top}}}\\
    &\leq \log (n) k \norm{U^{\star}  V^{\star\top}- U  V^{\top}}_\F\\
    & \lesssim \frac{1}{\log^{4.5}(n)},
\end{align*}
where the last step makes use of \cref{claim: davis-khan}.

\textit{Bounding $A_2$. }
Because $\Tilde D = D^{\star} + \Delta^D = (I + \Delta^D D^{\star-1})D^{\star}$, we can write 
\begin{align*}
    \Tilde D^{-1} = D^{\star-1} \sum_{k=0}^{\infty}(-\Delta^D D^{\star-1})^k. 
\end{align*}
Note that this quantity is well-defined because 
\begin{align*}
    \norm{\Delta^D D^{\star-1}} \leq \frac{\sigma_{\max}(\Delta^D)}{\sigma_{\min}(D^{\star})} \lesssim \frac{1}{\log^{4.5} n} < 1,
\end{align*}
where the last bound is due to \eqref{eq: Delta^D} and \cref{assum:conditions-D}. 
Now, we have 
\begin{align*}
     D^{\star-1} - \Tilde D^{-1} =  D^{\star-1} \Delta^D D^{\star-1}\sum_{k=0}^{\infty}(-\Delta^D D^{\star-1})^k. 
\end{align*}
We use this to bound $A_2$. 
Using \eqref{eq: Delta^D} and \cref{assum:conditions-D}, and the fact that $\|\Tilde\Delta^1\| \leq k$ because it is a vector of length $k$ with all components less than or equal to 1, we have
\begin{align*}
    A_2\leq \frac{\sigma_{\max}(\Delta^D)}{\sigma_{\min}(D^{\star})^2} \cdot \frac{1}{1- \frac{\sigma_{\max}(\Delta^D)}{\sigma_{\min}(D^{\star})}} \cdot \|\Tilde\Delta^1\| \lesssim  \frac{1}{\log^{2.5} n}. 
\end{align*}
\end{proof}

\subsection{\texorpdfstring{Discussion of $\hat E$}{Discussion of E hat}}
\label{subsection: error}

Recall that the matrix $\hat E$ consists of two components $\hat E = E +  \delta$. $E$ is a noise matrix, and $\delta$ is the matrix of the approximation error.
Refer to \cref{assum: error assumption} for our assumptions on $E$ and $\delta$. 

% In the remainder of this subsection, we will show the following bounds:

% We first start by bounding the spectral norm of the total error matrix $\hat E$.

% \begin{lemma}
% \label{lemma: E hat 1}
% We have $\|\hat E\| \lesssim \sigma\sqrt n$. 
% \end{lemma}

% \begin{proof}[Proof of \cref{lemma: E hat 1}]
%     This is immediate from \cref{assum:delta singular value} since $\|\hat E\|\leq \norm{E}+\norm{\delta}$.
% \end{proof}

We need to ensure that the error $\hat E$ does not significantly interfere with the recovery of the treatment effects. That is, we need to ensure that $\hat E$ is not confounded with the treatment matrices $Z_i$ for $i\in[k]$. This condition is formalized and established in the following lemma.

\begin{lemma}\label{lemma: E hat 2}
 Let $(X, Y) \in \mathcal B$ be along the gradient flow of function $f$ starting from the point $(X^{\star},Y^{\star})$, and let $m, \tau$ denote the values that minimize $f(X,Y)$. 
Let $\mathbf T$ be the span of the tangent space of $XY^\top$ and $\set{\alpha \mathbf 1^\top \mid \alpha \in \mathbb  R^n}$. Then, we have
%with probability at least $1-n^{-10}$, we have
\begin{align*}
    \max_{i\in[k]}\abs{\sang{P_{{\mathbf T}^{\perp}}(Z_i), \hat{E}}} \lesssim \sigma\sqrt {nr}.
\end{align*}
\end{lemma}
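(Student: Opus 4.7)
The plan is to compare the projection $P_{{\mathbf T}^{\perp}}$ with its ``starred'' analog $P_{{\mathbf T}^{\star\perp}}$, for which we already have good control through \cref{assum: error assumption} and the general structure of $\mathbf T^{\star}$. The main decomposition I would use is
\begin{equation*}
\sang{P_{{\mathbf T}^{\perp}}(Z_i),\hat E}
= \sang{P_{{\mathbf T}^{\star\perp}}(Z_i),\hat E} + \sang{P_{{\mathbf T}^{\perp}}(Z_i) - P_{{\mathbf T}^{\star\perp}}(Z_i),\hat E},
\end{equation*}
together with $\hat E = E + \delta$, and I would bound each piece separately.

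For the first piece, I would write $P_{{\mathbf T}^{\star\perp}}(Z_i) = Z_i - P_{\mathbf T^{\star}}(Z_i)$. The inner product $\sang{Z_i, \hat E} = \sang{Z_i,E} + \sang{Z_i,\delta}$ is $O(\sigma\sqrt n)$ by \cref{assum:E Z} and \cref{assum:delta Z}. For the remaining $\sang{P_{\mathbf T^{\star}}(Z_i),\hat E}$, the key observation is that any element of $\mathbf T^{\star}$ has rank at most $2r+1$, hence
\begin{equation*}
\norm{P_{\mathbf T^{\star}}(Z_i)}_{\star} \le \sqrt{2r+1}\, \norm{P_{\mathbf T^{\star}}(Z_i)}_\F \le \sqrt{2r+1},
\end{equation*}
since projection is non-expansive in Frobenius norm and $\norm{Z_i}_\F = 1$. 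Combined with $\norm{\hat E} \le \norm{E}+\norm{\delta} = O(\sigma\sqrt n)$ from \cref{assum:delta singular value} and the duality $|\sang{A,B}|\le\norm{A}_\star\norm{B}$, this gives $|\sang{P_{\mathbf T^{\star}}(Z_i),\hat E}|\lesssim \sigma\sqrt{nr}$.

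For the second piece, I would directly apply inequality \eqref{eq:PTP-PTPstar-Z} from \cref{lem:general-conditions-small-ball} (whose hypotheses are met because $(X,Y)\in\mathcal B$ lies on the gradient flow starting from $(X^{\star},Y^{\star})$). It yields
\begin{equation*}
\norm{P_{{\mathbf T}^{\perp}}(Z_i) - P_{{\mathbf T}^{\star\perp}}(Z_i)}_{\star} \lesssim \frac{\rho \sqrt{\kappa r}}{\sqrt{\sigma_{\min}}},
\end{equation*}
and paired again with $\norm{\hat E}\lesssim \sigma\sqrt n$ via the nuclear/spectral duality, I get the crude bound $\lesssim \frac{\rho\sqrt{\kappa r}\,\sigma\sqrt n}{\sqrt{\sigma_{\min}}}$.

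The main technical point, and the step I expect to be the actual obstacle, is checking that this second piece is in fact dominated by $\sigma\sqrt{nr}$. Substituting the bound $\rho \lesssim \sigma r\kappa^{1/2}\sqrt{n}\log^{6}(n)/\sqrt{\sigma_{\min}}$ from \eqref{eq: rho bound, unsimplified} turns the requirement into $\frac{\sigma r\kappa\sqrt n \log^{6}(n)}{\sigma_{\min}}\lesssim 1$, which in turn follows from the assumption $\frac{\sigma\sqrt n}{\sigma_{\min}}\lesssim \frac{1}{\kappa^2 r^2\log^{12.5}(n)}$ in \cref{thm: convergence}. Taking the max over $i\in[k]$ preserves the bound since each $i$ is handled identically, completing the proof.
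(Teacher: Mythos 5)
Your proof is correct, but it takes a genuinely different route from the paper's. The paper never compares $P_{\mathbf T^\perp}$ to $P_{\mathbf T^{\star\perp}}$ at all: it simply writes $P_{\mathbf T^\perp}(Z_i) = Z_i - P_{\mathbf T}(Z_i)$, bounds $\sang{Z_i,\hat E}$ directly via \cref{assum:E Z} and \cref{assum:delta Z}, and then observes that elements of $\mathbf T$ itself (the tangent space at $XY^\top$, not the starred one) already have rank at most $2r+1$, so $\norm{P_{\mathbf T}(Z_i)}_\star \le \sqrt{2r+1}\,\norm{P_{\mathbf T}(Z_i)}_\F \le \sqrt{2r+1}$, and Von Neumann's inequality together with \cref{assum:delta singular value} gives $|\sang{P_{\mathbf T}(Z_i),\hat E}| \lesssim \sigma\sqrt{nr}$. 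You instead detour through $\mathbf T^\star$ and then control the discrepancy $P_{\mathbf T^\perp}(Z_i)-P_{\mathbf T^{\star\perp}}(Z_i)$ via \eqref{eq:PTP-PTPstar-Z}; this works, and your bookkeeping with $\rho$ and the standing smallness assumption $\sigma\sqrt n/\sigma_{\min} \lesssim 1/(\kappa^2 r^2\log^{12.5} n)$ checks out, but it invokes \cref{lem:general-conditions-small-ball} and the $\rho$ bound where the paper needs neither. The one observation you missed is that the low-rank structure of the tangent space holds for $\mathbf T$ just as well as for $\mathbf T^\star$ — once you notice that, the comparison to the starred space becomes unnecessary and the proof collapses to four lines.
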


\begin{proof}[Proof of \cref{lemma: E hat 2}]

Fix any $i\in[k]$. We recall that $\hat E=E+\delta$. We note that $rank(P_{\mathbf T}(A)) \leq 2r+1$ for any matrix $A$ due to the definition of $\mathbf T$. Then,
%For any $m\in[k]$:
\begin{align*}
    \abs{\sang{P_{{\mathbf T}^{\perp}}(Z_i), \hat E}}&\leq \abs{\sang{Z_i,  E}}+\abs{\sang{Z_i, \delta}} + \abs{\sang{P_{\mathbf T}(Z_i), \hat E}}\\
    &\overset{(i)}{\lesssim} \sigma\sqrt{n} +   \abs{\sang{P_{\mathbf T}(Z_i), \hat E}}\\
    &\overset{(ii)}{\leq} \sigma\sqrt{n} +  \|\hat E\|\norm{P_{\mathbf T}(Z_i)}_\star\\
    &\overset{(iii)}{\leq} \sigma\sqrt{n} + (\norm{E}+\norm{\delta})\sqrt{2r+1}\norm{P_{\mathbf T}(Z_i)}_\F\\
    &\overset{(iv)}{\lesssim} \sigma \sqrt {nr}.
\end{align*}
Here, (i) is due to \cref{assum: error assumption} $\abs{\sang{Z_i, E}} ,\abs{\sang{Z_i, \delta}} \lesssim \sigma\sqrt{n}$; (ii) is due to Von Neumann’s trace inequality; (iii) is due to the inequality $\norm{A}_\star\leq \sqrt{rank(A)} \norm{A}_\F$; and (iv) is due to the fact that  $\norm{Z_i}_\F = 1$, and our assumed upper bound on $\norm{E}$ and $\norm{\delta}$ from \cref{assum: error assumption}. 
\end{proof}

We now argue that the assumptions on the noise matrix $E$ from \cref{assum: error assumption} are mild. The following lemma shows that under standard sub-Gaussianity assumptions, these are satisfied with very high probability.

\begin{lemma}\label{lemma: i.i.d. E satisfies assumptions}
    Suppose that the entries of $E$ are independent, zero-mean, sub-Gaussian random variables, and the sub-Gaussian norm of each entry is bounded by $\sigma$, and $E$ is independent from $Z_i$ for $i\in[k]$. Then, for $n$ sufficiently large, with probability at least $1-e^{-n}$, we have
    \begin{equation*}
        \norm{E} \lesssim \sigma \sqrt n\quad \text{and}\quad \abs{\sang{Z_i, E}} \lesssim \sigma \sqrt n,\quad \forall i\in[k].
    \end{equation*}
\end{lemma}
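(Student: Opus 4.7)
The plan is to bound the two quantities separately and combine them via a union bound. Both are routine consequences of standard sub-Gaussian concentration, and the main thing to check is that the union bound still yields the $1 - e^{-n}$ probability given that we are taking a union over $k+1$ events.

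For the spectral norm bound, I would invoke a standard concentration inequality for random matrices with independent sub-Gaussian entries (e.g., Theorem 4.4.5 in Vershynin's \emph{High-Dimensional Probability}): there exists a universal constant $C$ such that, for any $t \geq 0$,
\begin{equation*}
\norm{E} \leq C\sigma\paren{\sqrt n + \sqrt T + t}
\end{equation*}
with probability at least $1 - 2e^{-t^2}$. Choosing $t = \sqrt n$ and invoking the standing assumption $n \gtrsim T$ made at the start of Section 3.2, this gives $\norm{E} \lesssim \sigma\sqrt n$ with probability at least $1 - 2e^{-n}$.

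For the inner-product bound, I would condition on $Z_i$, which is permitted since $E$ is independent of $Z_i$ by hypothesis. Conditionally on $Z_i$, the quantity
\begin{equation*}
\sang{Z_i, E} = \sum_{z,t} (Z_i)_{zt}\, E_{zt}
\end{equation*}
is a linear combination of independent, mean-zero sub-Gaussian random variables. By the standard rotation-invariance of sub-Gaussian norms, its sub-Gaussian norm is bounded (up to a universal constant) by $\sigma \norm{Z_i}_\F = \sigma$, using the normalization $\norm{Z_i}_\F = 1$. A standard sub-Gaussian tail bound therefore gives
\begin{equation*}
\Pr\sqb{\abs{\sang{Z_i, E}} > \sigma \sqrt n \,\Big|\, Z_i} \leq 2\exp(-c' n)
\end{equation*}
for some absolute constant $c' > 0$, and this bound survives taking the expectation over $Z_i$. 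Taking a union bound over $i\in[k]$, and using the hypothesis of \cref{thm: convergence} that $k = O(\log n)$, the event $\abs{\sang{Z_i, E}} \lesssim \sigma\sqrt n$ holds for every $i\in[k]$ with probability at least $1 - 2k\, e^{-c' n}$.

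Combining the two events by one further union bound, both conclusions hold simultaneously with probability at least $1 - 2e^{-n} - 2k e^{-c' n}$, which is at least $1 - e^{-n}$ once $n$ is large enough (absorbing constants into the $\lesssim$ notation). The main---rather mild---obstacle is simply ensuring that the $O(\log n)$ union bound is dominated by the exponential tails; this is immediate because $k$ grows only logarithmically in $n$.
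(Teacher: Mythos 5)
Your approach is essentially the same as the paper's: Theorem 4.4.5 of Vershynin for the spectral norm, the general sub-Gaussian Hoeffding/concentration bound for the inner products (you phrase it via rotation-invariance of sub-Gaussian norms, the paper via the generalized Hoeffding inequality — equivalent routes), and union bounds. The paper does not bother to condition on $Z_i$; your conditioning argument is slightly more careful but the conclusion is the same.

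There is, however, a small arithmetic slip at the end. With $t = \sqrt n$, your first event alone already fails with probability $2e^{-n}$, so the total failure probability $2e^{-n} + 2k e^{-c'n}$ is \emph{never} at most $e^{-n}$ — not even for large $n$ — and the proposed inequality $1 - 2e^{-n} - 2k e^{-c'n} \geq 1 - e^{-n}$ is false as stated. You gesture at the fix ("absorbing constants into the $\lesssim$"), which is the right idea: the $\lesssim$ in the conclusion gives you the freedom to enlarge the threshold, so choosing $t = \sqrt{2n}$ (as the paper does) yields failure probabilities of order $e^{-2n}$, and then $1 - 2e^{-2n}(1 + k) \geq 1 - e^{-n}$ does hold for large $n$ since $k = O(\log n)$. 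The argument is sound once that choice is made explicit, but as written the union bound does not close.
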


\begin{proof}[Proof of \cref{lemma: i.i.d. E satisfies assumptions}]
We start by proving that with probability at least $1-2e^{-n}$, we have $\norm{E}\lesssim \sigma\sqrt n$.
We use the following result bounding the norm of matrices with sub-Gaussian entries.
\begin{theorem}[Theorem 4.4.5, \cite{vershynin2018high}]
    Let $A$ be an $m\times n$ random matrix whose entries $A_{ij}$ are independent, mean zero, sub-Gaussian random variables with sub-Gaussian norm bounded by $\sigma$. Then, for any $t>0$, we have
    \begin{align*}
        \norm{A} \lesssim \sigma(\sqrt{m} +\sqrt{n} + t)
    \end{align*}
    with probability at least $1-2\exp(-t^2)$. 
\end{theorem}
As a result of the above theorem, recalling that $E$ is a $n\times T$ matrix with $T\lesssim n$ and using $t=\sqrt{2n}$, we have $\norm{ E}\lesssim \sigma\sqrt{n}$ with probability at least $1-2\exp(-2n)$.

We next turn to the second claim. 
The general version of Hoeffding's inequality states that: for zero-mean independent sub-Gaussian random variables $X_1,\dots,X_n$, we have 
\begin{align*}
    \Pr\sqb{\abs{\sum_{i=1}^n X_i}\geq t} \leq 2\exp\paren{-\frac{ct^2}{ \sum_{i=1}^n \norm{X_i}_{\psi_2}^2 }},
\end{align*}
for some absolute constant $c>0$. Hence, for any $i\in[k]$:
\begin{align*}
\Pr\sqb{\abs{\sang{Z_i, E}} \geq \sigma\sqrt{2n/c}}\leq 2\exp\paren{-\frac{2\sigma^2n}{ \sigma^2\norm{Z_i}_\F^2 }} \leq 2e^{-2n}. 
\end{align*}

Applying the Union Bound over all $i\in[k]$:
\begin{align*}\label{eq: first term of E hat 2}
\Pr\sqb{\max_{i\in[k]}\abs{\sang{Z_i, E}} \geq \sigma\sqrt{n/c}}\leq 2ke^{-2n}\lesssim 2\log(n) e^{-2n}.
\end{align*}
Applying the Union Bound shows that the two desired claims hold with probability at least $1-2e^{-2n}(1+\log n)\geq 1-e^{-n}$ for sufficiently large $n$.
\end{proof}

\subsection{\texorpdfstring{Proof of \cref{lemma: stay in B}}{Proof of Lemma \ref{lemma: stay in B}}}
\label{subsection: stay in B}

If the gradient flow of $f$ started at $(X^{\star},Y^{\star})$ never intersects the boundary of \( \mathcal{B} \), then we are done. For the rest of the proof, we will suppose that such an intersection exists.
Fix any point \((X, Y)\) at the intersection of the boundary of \( \mathcal{B} \) and the gradient flow of $f$. We aim to show that at this boundary point, the inner product of the normal vector to the region (pointing to the exterior of the region) and the gradient of \( f \) is positive. This property implies that the gradient flow of \( f \), initiated from any point inside \( \mathcal{B} \), cannot exit \( \mathcal{B} \). Proving this characteristic is sufficient to prove \cref{lemma: stay in B}.

We denote by $H$ the rotation matrix $H_{X,Y}$ that optimally aligns the fixed boundary point $(X,Y)$ to $(X^{\star},Y^{\star})$. Consider
\begin{align*}
\mathcal{B}_H = \set{(X',Y')\mid \norm{X'H-X^{\star}}_\F^2 + \norm{Y'H-Y^{\star}}_\F^2 \leq \rho^2}.
\end{align*}
Note that $\mathcal{B}_H$ differs from $\mathcal{B}$ because $H$ is fixed to be the rotation matrix associated with a given point $(X,Y)$. 
$\mathcal B_H$ and $\mathcal B$ are tangent to each other at $(X,Y)$, so the normal vector to $\mathcal B_H$ and $\mathcal{B}$ are co-linear at $(X,Y)$. Thus, it suffices to show that the normal vector to $\mathcal B_H$ at $(X,Y)$, and the gradient of $f$ at $(X,Y)$ have positive inner product. 

To simplify our notation, we define 
\begin{equation}
F:=\left[\begin{array}{c}
X\\
Y
\end{array}\right],\qquad {F}^{\star}:=\left[\begin{array}{c} 
X^{\star}\\
Y^{\star}
\end{array}\right],
\qquad\text{and}\qquad 
    \frac{\partial f}{\partial F} := \left[\begin{array}{c}
\frac{\partial f}{\partial X}\\
\frac{\partial f}{\partial Y}
\end{array}\right].
\nonumber
\end{equation}
Additionally, we define $\Delta_X := XH-X^{\star}$, $\Delta_Y:=YH-Y^{\star}$, and $\Delta_\F := FH-F^{\star}$. 

The normal vector to $\mathcal B$ at $(X,Y)$ is simply the subgradient of $\norm{XH-X^{\star}}_\F^2 + \norm{YH-Y^{\star}}_\F^2$ at $(X,Y)$, which is $2(FH-F^{\star})H^\top$. Hence, we aim to show that the following inner product is positive:
\begin{align*}
\Gamma := \sang{(FH-F^{\star})H^\top, \frac{\partial f}{\partial F}}
=&\sang{\Delta_X,  P_{\mathbf Z^\perp} \paren{XY^\top - X^{\star}Y^{\star\top}-\hat E}YH + \lambda XH} \\
+&\sang{\Delta_Y, P_{\mathbf Z^\perp} \paren{XY^\top - X^{\star}Y^{\star\top}-\hat E}^\top XH + \lambda YH},
\end{align*}
where we used the formula for the gradient of $f$ from Eq~\eqref{eq:gradient_f_formula}.
In the remainder of this subsection, we will use the following shorthand notations for simplicity: we denote \(XH\), \(YH\), and \(FH\) as \(X\), \(Y\), and \(F\) respectively. The above inner product $\Gamma$ becomes:  
\begin{align*}
\Gamma =\underbrace{\sang{\Delta_X Y^\top + X\Delta_Y^\top , P_{\mathbf Z^\perp} \paren{XY^\top - X^{\star}Y^{\star\top}} }}_{A_3}
-\underbrace{\sang{\Delta_X, P_{\mathbf Z^\perp} (\hat{E})Y }-\sang{\Delta_Y, P_{\mathbf Z^\perp} (\hat{E})^\top X}}_{A_1} \\+
\underbrace{\sang{\Delta_X, \lambda X} + \sang{\Delta_Y, \lambda Y}}_{A_2}.
\end{align*}

By \cref{assum:delta singular value}, we have $\|\hat E\|\leq \|E\|+\|\delta\| \lesssim \sigma\sqrt n$. In particular,
\begin{align*}
\abs{A_1}&\lesssim \norm{\Delta_X}_\F\|\hat{E}\|\norm{Y}_\F + \norm{\Delta_Y}_\F\|\hat{E}\|\norm{ X}_\F\\
&\lesssim \|\hat{E}\|\norm{\Delta_\F}_\F\norm{F}_\F \\
&\overset{(i)}{\lesssim}\sigma\sqrt{n}\norm{\Delta_\F}_\F\norm{F^{\star}}_\F,
\end{align*}
where (i) is because $\norm{F}_\F -\norm{F^{\star}}_\F \leq \norm{\Delta_\F}_\F$ by the Triangle Inequality, and $(X,Y)\in\mathcal B$ gives $\norm{\Delta_\F}_\F \leq \rho \lesssim \norm{F^{\star}}_\F$, where the last step is due to the assumed bound on $\frac{\sigma\sqrt{n}}{\sigma_{\min}}$. 

Recalling $\lambda = \Theta\paren{\sigma \sqrt{nr} \log^{4.5}(n)}$, we bound $\abs{A_2}$:
\begin{align}
    |A_2| &\leq \lambda \norm{\Delta_{X}}_\F\norm{X}_\F + \lambda \norm{\Delta_{Y}}_\F \norm{Y}_\F\nonumber\\
    &\leq 2 \lambda \norm{\Delta_\F}_\F  \norm{F}_\F\nonumber\\
    &\lesssim \sigma \sqrt{nr} \log^{4.5}(n)\norm{\Delta_\F}_\F\norm{F^{\star}}_\F.\nonumber
\end{align}

Finally, we will lower bound $A_3$. 
For some $  Z\in \mathbf Z$ with $\norm{ Z}_\F = 1$, we have the following:  
\begin{align*}
A_3 &= \sang{\Delta_X Y^\top + X\Delta_Y^\top , P_{\mathbf Z^\perp} \paren{XY^\top - X^{\star}Y^{\star\top}} }\\
&= \sang{\Delta_X Y^\top + X\Delta_Y^\top , P_{\mathbf Z^\perp}\paren{
\Delta_X Y^\top + X\Delta_Y^\top - \Delta_X\Delta_Y^\top
}}\\
&= \norm{P_{\mathbf Z^\perp}\paren{\Delta_X Y^\top + X\Delta_Y^\top }}_\F^2 - \underbrace{
\sang{\Delta_X Y^\top + X\Delta_Y^\top , P_{\mathbf Z^\perp}\paren{\Delta_X\Delta_Y^\top}}}_{B_0}\\
&= \norm{\Delta_X Y^\top + X\Delta_Y^\top}_\F^2 - \norm{P_{\mathbf Z}\paren{\Delta_X Y^\top + X\Delta_Y^\top }}_\F^2 - B_0\\
&= \norm{\Delta_X Y^\top + X\Delta_Y^\top}_\F^2- \sang{ Z, \Delta_X Y^\top + X\Delta_Y^\top}^2- B_0\\
&\overset{(i)}{=} \norm{ Z}_\F^2\norm{\Delta_X Y^\top + X\Delta_Y^\top}_\F^2 - \sang{P_{\mathbf T_0}( Z), \Delta_X Y^\top + X\Delta_Y^\top}^2- B_0\\
&\geq \norm{ Z}_\F^2\norm{\Delta_X Y^\top + X\Delta_Y^\top}_\F^2 - \norm{P_{\mathbf T_0}( Z)}_\F^2\norm{\Delta_X Y^\top + X\Delta_Y^\top}_\F^2- B_0\\
&= \norm{P_{{\mathbf T_0}^{\perp}}( Z)}_\F^2\norm{\Delta_X Y^\top + X\Delta_Y^\top}_\F^2 - B_0\\
&\overset{(ii)}{\geq} \frac{c_{r_1}}{2\log(n)}\norm{\Delta_X Y^\top + X\Delta_Y^\top}_\F^2 - B_0,
\end{align*}
where $(i)$ is due to the fact that $\norm{ Z}_\F = 1$ and $\Delta_X Y^\top + X\Delta_Y^\top$ is already in the subspace $\mathbf T_0$ by the definition of $\mathbf T_0 = \set{UA^\top + BV^\top \mid A\in\mathbb R^{T\times r},B\in\mathbb R^{n\times r}}$ as the tangent space of $XY^\top$; $(ii)$ is due to \cref{lem:general-conditions-small-ball}. Note that \cref{lem:general-conditions-small-ball} applies because $\mathbf T_0$ is the tangent space of $XY^\top$, and $(X, Y)$ was fixed to be on the gradient flow.  
Furthermore, we can bound $\abs{B_0}$ as follows:
\begin{align*}
\abs{B_0} &\leq \norm{\Delta_X}_\F\norm{\Delta_Y}_\F \paren{\norm{\Delta_X Y^\top}_\F + \norm{X \Delta_Y^\top}_\F}\\
&\leq \norm{\Delta_X}_\F\norm{\Delta_Y}_\F \paren{\norm{\Delta_X}_\F\norm{Y} + \norm{\Delta_Y}_\F\norm{X}}\\
&\overset{(i)}{\lesssim} \norm{\Delta_\F}_\F^3 \norm{F^{\star}}\\
&\leq \rho^2\norm{\Delta_\F}_\F \norm{F^{\star}}\\
&\overset{(ii)}{\lesssim} \frac{\sigma^2 r^2 \kappa n\log^{12}(n)}{\sigma_{\min} }\norm{\Delta_\F}_\F \norm{F^{\star}}\\
&\lesssim \sigma \sqrt n \norm{\Delta_\F}_\F \norm{F^{\star}},
\end{align*}
where $(i)$ makes use of \cref{lemma: sigma_X sigma_Y}, and $(ii)$ follows from the bound on $\rho$ from \eqref{eq: rho bound, unsimplified}, and the last step follows from the bound on $\frac{\sigma\sqrt n}{\sigma_{\min}}$.

Now, we are ready to put everything together to bound the inner product:
\begin{align*}
\Gamma & = A_3 - A_1 + A_2\\
&\geq \frac{c_{r_1}}{2\log(n)}\norm{\Delta_X Y^\top + X\Delta_Y^\top}_\F^2 - B_0 - A_1 + A_2
\end{align*}
Putting previous bounds together, we have
\begin{align*}
    \abs{A_1}+\abs{A_2}+\abs{B_0} \lesssim \sigma \sqrt{nr} \log^{4.5}(n)\norm{\Delta_\F}_\F\norm{F^{\star}}_\F.
\end{align*}

On the other hand, we can lower bound the positive term of $\Gamma$ using the following lemma.
\begin{lemma}\label{lemma: lower bound positive term}
Consider a point $(X,Y)$ on the gradient flow of function $f$ starting from the point $\paren{X^{\star},Y^{\star}}$, such that $(X,Y)\in\mathcal B$. Then,
\begin{align*}
    \norm{\Delta_X Y^\top + X\Delta_Y^\top}_\F^2 \geq \frac{\sigma_{\min}}{4}\norm{\Delta_\F}^2_\F.
\end{align*}
\end{lemma}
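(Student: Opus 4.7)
The plan is to reduce the bound to the standard ``restricted strong convexity'' inequality for low-rank matrix factorization. First I would use the algebraic identity
\begin{equation*}
\Delta_X Y^\top + X \Delta_Y^\top = \bigl(XY^\top - X^\star Y^{\star\top}\bigr) + \Delta_X \Delta_Y^\top,
\end{equation*}
obtained by expanding $XY^\top - X^\star Y^{\star\top} = (X^\star+\Delta_X)(Y^\star+\Delta_Y)^\top - X^\star Y^{\star\top} = X^\star \Delta_Y^\top + \Delta_X Y^{\star\top} + \Delta_X \Delta_Y^\top$ and comparing with $\Delta_X Y^\top + X\Delta_Y^\top = \Delta_X Y^{\star\top} + X^\star \Delta_Y^\top + 2\Delta_X \Delta_Y^\top$. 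Applying the elementary inequality $\|a+b\|_\F^2 \geq \tfrac{1}{2}\|a\|_\F^2 - \|b\|_\F^2$ then yields
\begin{equation*}
\bigl\|\Delta_X Y^\top + X\Delta_Y^\top\bigr\|_\F^2 \geq \tfrac{1}{2}\bigl\|XY^\top - X^\star Y^{\star\top}\bigr\|_\F^2 - \bigl\|\Delta_X \Delta_Y^\top\bigr\|_\F^2.
\end{equation*}

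Next I would lower-bound the dominant term by invoking a standard Procrustes-flow-style lemma. That lemma requires two ingredients: the balance condition $X^\top X = Y^\top Y$, which holds by \cref{lemma: XTX=YTY} along the entire gradient flow, and the optimal alignment of $(X,Y)$ with $(X^\star, Y^\star)$, which is baked into the shorthand via the choice $H = H_{X,Y}$. Together these give
\begin{equation*}
\bigl\|XY^\top - X^\star Y^{\star\top}\bigr\|_\F^2 \geq 2(\sqrt{2}-1)\,\sigma_r^2(X^\star)\,\|\Delta_F\|_\F^2 = 2(\sqrt 2 - 1)\, \sigma_{\min}\,\|\Delta_F\|_\F^2,
\end{equation*}
using $X^\star = U^\star (\Sigma^\star)^{1/2}$ so that $\sigma_r^2(X^\star) = \sigma_{\min}$.

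Finally I would control the quadratic remainder by Cauchy--Schwarz and AM--GM together with $(X,Y)\in\mathcal B$:
\begin{equation*}
\|\Delta_X \Delta_Y^\top\|_\F^2 \leq \|\Delta_X\|_\F^2\,\|\Delta_Y\|_\F^2 \leq \tfrac{1}{4}\|\Delta_F\|_\F^4 \leq \tfrac{\rho^2}{4}\,\|\Delta_F\|_\F^2.
\end{equation*}
Combining, the claim reduces to verifying $(\sqrt{2}-1)\sigma_{\min} - \rho^2/4 \geq \sigma_{\min}/4$, which holds for large $n$ since \eqref{eq: rho bound} gives $\rho^2 \lesssim \sigma_{\min}/(\kappa^2 \log^{13} n) = o(\sigma_{\min})$ and $\sqrt{2}-1 > 1/4$. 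The main obstacle is the middle step: it is the Procrustes-flow inequality that converts the opaque quantity $\|XY^\top - X^\star Y^{\star\top}\|_\F^2$ into a clean quadratic lower bound in $\|\Delta_F\|_\F^2$, and proving it from scratch requires exploiting the symmetric PSD structure of $X^\top X^\star + Y^\top Y^\star$ guaranteed by the optimal rotation combined with the balance condition---without either ingredient the lower bound can fail by an arbitrary factor.
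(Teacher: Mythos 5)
Your route is genuinely different from the paper's: the paper disposes of this lemma by citing Claim~11 of \cite{farias2021learning} and observing that, because \cref{lemma: XTX=YTY} gives \emph{exact} balance $X^\top X = Y^\top Y$ along the flow, the additive error term $\sigma^2/n^{13}$ in that claim disappears. Your algebraic identity $\Delta_XY^\top + X\Delta_Y^\top = (XY^\top - X^{\star}Y^{\star\top}) + \Delta_X\Delta_Y^\top$, the bound $\norm{\Delta_X\Delta_Y^\top}_\F^2 \leq \tfrac{\rho^2}{4}\norm{\Delta_\F}_\F^2$ via $(X,Y)\in\mathcal B$, and the use of \eqref{eq: rho bound} are all fine. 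The genuine gap is exactly the step you flag as the crux: the inequality $\norm{XY^\top - X^{\star}Y^{\star\top}}_\F^2 \geq 2(\sqrt{2}-1)\,\sigma_r^2(X^{\star})\norm{\Delta_\F}_\F^2$ is not an off-the-shelf lemma in the form you state, and with only your two stated ingredients (balance and optimal alignment, no proximity) it is false: take $r=1$, $X^{\star}=Y^{\star}=e_1$, $X=Y=e_2$; both pairs are balanced and optimally aligned, yet $\norm{XY^\top - X^{\star}Y^{\star\top}}_\F^2 = 2$ while $2(\sqrt{2}-1)\sigma_{\min}\norm{\Delta_\F}_\F^2 = 8(\sqrt{2}-1)\approx 3.3$. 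The standard Procrustes-flow result (Tu et al., Lemma~5.14) is a statement about the lifted symmetric difference $\norm{FF^\top - F^{\star}F^{\star\top}}_\F$, for which the relevant quantity is $\sigma_r^2(F^{\star}) = 2\sigma_{\min}$; it does not directly control the off-diagonal block. One can convert using exact balance of both factorizations via the identity $2\norm{XY^\top - X^{\star}Y^{\star\top}}_\F^2 - \norm{XX^\top - X^{\star}X^{\star\top}}_\F^2 - \norm{YY^\top - Y^{\star}Y^{\star\top}}_\F^2 = 2\norm{X^\top X^{\star} - Y^\top Y^{\star}}_\F^2 \geq 0$, which shows the lifted difference is at most $4\norm{XY^\top - X^{\star}Y^{\star\top}}_\F^2$, but this yields only $\norm{XY^\top - X^{\star}Y^{\star\top}}_\F^2 \geq (\sqrt{2}-1)\,\sigma_{\min}\norm{\Delta_\F}_\F^2$, a factor $2$ weaker than your assertion.

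This matters for your final arithmetic: with the correct constant, your split $\norm{a+b}_\F^2 \geq \tfrac12\norm{a}_\F^2 - \norm{b}_\F^2$ gives only about $\tfrac{\sqrt{2}-1}{2}\sigma_{\min} \approx 0.21\,\sigma_{\min}$, which falls short of the stated $\sigma_{\min}/4$. The argument is salvageable: since $\norm{\Delta_X\Delta_Y^\top}_\F^2 = o(\sigma_{\min})\norm{\Delta_\F}_\F^2$ inside $\mathcal B$, replacing the crude split by $\norm{a+b}_\F^2 \geq (1-\epsilon)\norm{a}_\F^2 - (\epsilon^{-1}-1)\norm{b}_\F^2$ with small $\epsilon$ recovers $\approx(\sqrt 2 -1)\sigma_{\min} > \sigma_{\min}/4$ for large $n$. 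But as written, the proof rests on a constant that is neither proved nor correct as stated (a sharper constant would require actually using the proximity $(X,Y)\in\mathcal B$, not just balance and alignment), so you need to either carry out the lifted-matrix conversion above with the tightened Young split, or prove a local off-diagonal bound directly — which is precisely the content of the expansion in Claim~11 of \cite{farias2021learning} that the paper reuses.
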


By \cref{lemma: lower bound positive term} we have
\begin{align*}
    \frac{c_{r_1}}{2\log(n)}\norm{\Delta_X Y^\top + X\Delta_Y^\top}_\F^2 \geq \frac{c_{r_1}}{2\log(n)}\cdot \frac{\sigma_{\min}}{4}\norm{\Delta_\F}^2_\F   
\end{align*}
We plug in $\norm{\Delta_\F}_\F =\rho$ because we assumed that $(X,Y)$ is at the border of region $\mathcal B$.
Hence,
\begin{align*}
    \frac{c_{r_1}}{2\log(n)}\norm{\Delta_X Y^\top + X\Delta_Y^\top}_\F^2 &\gtrsim \frac{\sigma_{\min}}{\log(n)} \norm{\Delta_\F}_\F \cdot \rho  \\
    &=\sigma\sqrt{nr}\log^5(n)\norm{\Delta_\F}_\F\norm{F^{\star}}_\F.   
\end{align*}
Finally, for large enough $n$,
\begin{align*}
    \Gamma & \gtrsim \sigma\sqrt{nr}\log^5(n)\norm{\Delta_\F}_\F\norm{F^{\star}}_\F - \sigma \sqrt{nr} \log^{4.5}(n)\norm{\Delta_\F}_\F\norm{F^{\star}}_\F >0. 
\end{align*}

\begin{proof}[Proof of \cref{lemma: lower bound positive term}]
Claim 11 in \cite{farias2021learning} states
\begin{align}
\norm{\Delta_{X}Y^{\top}+X\Delta_{Y}^{\top}}_\F^2 \geq \frac{\sigma_{\min}}{4}\norm{\Delta_\F}_\F^2  - \frac{\sigma^2}{n^{13}}. \label{eq:DeltaXY}
\end{align}
We can follow the steps of their proof to prove the desired statement. In particular, we note that the term $-\sigma^2/n^{13}$ in the right-hand side of Eq~\eqref{eq:DeltaXY} comes from the fact that they use the bound \begin{equation*}
    \norm{X^\top X-Y^\top Y}_\F\lesssim \frac{\sigma}{\kappa n^{15}}.
\end{equation*}
Because we have, by \cref{lemma: XTX=YTY} that $X^\top X=Y^\top Y$, we do not incur the term $-\sigma^2/n^{13}$ in our lower bound. 
\end{proof}

\subsection{\texorpdfstring{Proof of \cref{lemma: converges to M_hat}}{Proof of Lemma \ref{lemma: converges to M_hat}}}
\label{subsection: converges to M_hat}

Let $g(M,\tau,m)$ denote the convex function that we are optimizing in \eqref{eq:convex-program}. Recall that $(\hat{M}, \hat{\tau}, \hat{m})$ is a global optimum of function $g$. In this subsection, we prove \cref{lemma: converges to M_hat}, which relates $(\hat{M}, \hat{\tau}, \hat{m})$ to a local optimum of $f$. 

We begin by proving the following useful lemma, which is similar to Lemma 20 from \cite{chen2020noisy}, but allows $X,Y$ to have different dimensions. 
\begin{lemma}\label{lemma: X=U Sigma^1/2}
    Consider matrices $X$ and $Y$ such that $X^\top X = Y^\top Y$. There is an SVD of $XY^\top$ denoted by $U\Sigma V^\top$ such that $X = U\Sigma^{1/2}R$ and $Y = V\Sigma^{1/2}R$ for some rotation matrix $R\in\mathcal O^{r\times r}$. 
\end{lemma}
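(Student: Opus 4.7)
The plan is to exploit the common Gram matrix $Q := X^\top X = Y^\top Y$, which is positive semi-definite. First I would diagonalize $Q = R_0 D R_0^\top$ with $R_0 \in \mathcal O^{r\times r}$ and $D = \operatorname{diag}(d_1,\dots,d_r)$ where $d_1 \geq \cdots \geq d_r \geq 0$. The key observation is that the columns of $XR_0$ (and likewise of $YR_0$) are mutually orthogonal with squared norms $d_1,\dots,d_r$, since $(XR_0)^\top(XR_0) = R_0^\top Q R_0 = D$ and analogously for $YR_0$. This is already close to the structure of SVD factors, modulo column scaling, which suggests defining the SVD by normalizing these columns and identifying $\Sigma$ with $D$.

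In the generic full-rank case ($d_r > 0$), I would define $U := XR_0 D^{-1/2}$ and $V := YR_0 D^{-1/2}$. A direct calculation gives $U^\top U = D^{-1/2} R_0^\top Q R_0 D^{-1/2} = I$, and similarly $V^\top V = I$, so $U$ and $V$ have orthonormal columns. Setting $\Sigma := D$, one then verifies $U\Sigma V^\top = XR_0 D^{-1/2} D D^{-1/2} R_0^\top Y^\top = XY^\top$, so $U\Sigma V^\top$ is a valid SVD of $XY^\top$ (diagonal entries nonnegative and nonincreasing). Taking $R := R_0^\top \in \mathcal O^{r\times r}$ and rearranging the definitions of $U$ and $V$ yields $U\Sigma^{1/2}R = XR_0 D^{-1/2} D^{1/2} R_0^\top = X$ and similarly $V\Sigma^{1/2}R = Y$, as desired.

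The main obstacle is the rank-deficient case, where some $d_i = 0$ and $D^{-1/2}$ is not defined. The resolution is to note that $d_i = 0$ forces the $i$-th column of $XR_0$ to have squared norm $e_i^\top D e_i = 0$, hence to vanish (and likewise for $YR_0$). One can therefore define the columns of $U$ and $V$ corresponding to positive $d_i$ exactly as in the full-rank argument, and extend arbitrarily by orthonormal vectors on the remaining columns (orthogonal to the previously chosen ones). The identities $XR_0 = U D^{1/2}$ and $YR_0 = V D^{1/2}$ then hold column-by-column: on columns with $d_i > 0$ they reduce to the full-rank computation, while on columns with $d_i = 0$ both sides are zero regardless of the extension. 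Right-multiplying by $R_0^\top = R$ gives the claimed identities $X = U\Sigma^{1/2}R$ and $Y = V\Sigma^{1/2}R$, and the verification that $U\Sigma V^\top = XY^\top$ proceeds as before since the zero-$d_i$ columns contribute nothing to the product $UDV^\top$.
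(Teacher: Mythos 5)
Your proposal is correct and is essentially the paper's argument in a different parametrization: both rest on the shared Gram matrix $X^\top X = Y^\top Y$, whose eigendecomposition supplies the common orthogonal factor (your $R_0$ plays the role of the paper's $V_X=\tilde V_Y$, and your normalized columns $XR_0D^{-1/2}$ and $YR_0D^{-1/2}$ are the paper's $U_X$ and $\tilde U_Y$, with $\Sigma = D = \Sigma_X^2$). If anything, your treatment of the rank-deficient case (zero $d_i$) is more explicit than the paper's, which simply asserts that an SVD of $Y$ with right factor equal to $V_X$ can be chosen.
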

\begin{proof}
Let $X = U_X\Sigma_XV_X^\top$ and $Y =  U_Y\Sigma_YV_Y^\top$ be their respective SVDs, ordering the diagonal components of $\Sigma_X$ and $\Sigma_Y$ by decreasing order. Then, $X^\top X = Y^\top Y$ implies $\Sigma_X=\Sigma_Y$ and that the singular subspaces of $V_X$ and $V_Y$ coincide. 
% Precisely, let the common singular values be $\sigma_1\geq \ldots \geq \sigma_r$. We group these by unique values: let $1=i_1\leq i_1\leq\ldots\leq i_s<n=i_{s+1}$ such that $\sigma_i$ is equal for all $i\in[i_j,i_{j+1})$ for $j\in[s]$, and such that $\sigma_{i_1}>\sigma_{i_2}>\ldots >\sigma_{i_s}$. Then, $X^\top X = Y^\top Y$ shows that for all $j\in[s]$,
% \begin{equation*}
%     Span(v_{X,i},i\in[i_j,i_{j+1})) = Span(v_{Y,i},i\in[i_j,i_{j+1})),
% \end{equation*}
% where $V_X=[v_{X_1},\ldots,v_{X,r}]$ and $V_Y=[v_{Y_1},\ldots,v_{Y,r}]$. As a result, by making an adequate rotation of the matrices $U_Y$ and $V_Y$, 
Hence, there exists an SVD decomposition of $Y=\tilde U_Y \Sigma_Y\tilde V_Y^\top$ such that $V_X=\tilde V_Y$. 
Then, $XY^\top = U_X \Sigma_X^2 \tilde U_Y^\top$. This is an SVD of $XY^\top$, with $U=U_X$, $\Sigma=\Sigma_X^2$, and $V=\tilde U_Y$. Substituting these quantities into the SVD of $X$ and $Y$, we complete the proof that $X=U \Sigma^{1/2} R$ and $Y= V\Sigma^{1/2}R$, where $R=V_X=\tilde V_Y\in\mathcal O^{r\times r}$.
\end{proof}

Let $(X,Y)$ represent the limit of the gradient flow of $f$ from the initial point $(X^{\star},Y^{\star})$. Let $m$ and $\tau$ be the values that minimize $f(X,Y)$. Furthermore, let the SVD of $XY^\top$ be denoted by $U\Sigma V^\top$.

By \cref{lemma: XTX=YTY}, we have that $X^\top X = Y^\top Y$. Then, \cref{lemma: X=U Sigma^1/2} gives us 
\begin{align}\label{eq: X,Y perp}
    (I-UU^\top) X = 0\qquad\text{and}\qquad (I-VV^\top)Y=0. 
\end{align}

We claim that to prove \cref{lemma: converges to M_hat},  it suffices to prove that $XY^\top = \hat M$.
If we prove that $XY^\top = \hat M$, we would have by \cref{lemma: X=U Sigma^1/2} $X = \hat X R$ and $Y = \hat Y R$ for some rotation matrix $R\in\mathcal O^{r\times r}$. This proves \cref{lemma: converges to M_hat}.

The proof that $XY^\top = \hat M$ consists of two parts. We will first establish that $(XY^\top, \tau, m)$ is also an optimal point of $g$ by verifying the first order conditions of $g$ are satisfied. We will then show that $g$ has a \textit{unique} optimal solution $(\hat M, \hat \tau, \hat m)$. Putting these two parts together establishes that $XY^\top = \hat M$.

\subsubsection{\texorpdfstring{First order conditions of $g$ are satisfied}{First order conditions of g are satisfied}}
\label{subsubsection: FOCs of g}
We will first show that the following first order conditions of $g$ are satisfied at $(XY^\top, \tau, m)$. 
\begin{subequations}
\begin{align}
\sang{Z_l, O - XY^\top -  m\mathbf{1}^\top- \sum_{i=1}^k {\tau}_iZ_i} &= 0 \qquad \text{for }l=1,2,\dots,k \label{eq:convex-condition-tau2}\\
 O - XY^\top- m\mathbf{1}^\top - \sum_{i=1}^k \tau_iZ_i &= \lambda\paren{UV^\top  + W}\label{eq:convex-condition-M2}\\
 U^\top W &=0\label{eq:convex-condition-WU2}\\
W V&=0\label{eq:convex-condition-WV2}\\
\norm{W} &\leq 1\label{eq:convex-condition-W2}\\
m &= \frac{1}{T}\paren{O-XY^\top- \sum_{i=1}^k \tau_iZ_i} \mathbf{1}. \label{eq:convex-condition-m2}
\end{align}
\end{subequations}

We select $W := \frac{1}{\lambda} \paren{O - XY^\top- m\mathbf{1}^\top - \sum_{i=1}^k \tau_iZ_i } - UV^\top$. Note that \eqref{eq:convex-condition-tau2} and \eqref{eq:convex-condition-m2} are automatically satisfied given the definition of $\tau$ and $m$, and \eqref{eq:convex-condition-M2} is automatically satisfied by our choice of $W$. 

To show \eqref{eq:convex-condition-WU2} and \eqref{eq:convex-condition-WV2}, we use the fact that $\frac{\partial f}{\partial X}=\frac{\partial f}{\partial Y}=0$:
\begin{align*}
\paren{O - XY^\top - m\mathbf 1^\top - \sum_{i=1}^k \tau_iZ_i}Y = \lambda X \quad\text{and} \quad
\paren{O - XY^\top - m\mathbf 1^\top - \sum_{i=1}^k \tau_iZ_i}^\top X = \lambda Y
\end{align*}
Now, by \cref{lemma: X=U Sigma^1/2}, we can decompose $X=U\Sigma^{1/2}R$ and $Y=V\Sigma^{1/2}R$ where $R$ is a rotation matrix. Right-multiplying the above equations by $R^{-1}\Sigma^{-1/2}$ gives 
\begin{align*}
\paren{O - XY^\top - m\mathbf 1^\top - \sum_{i=1}^k \tau_iZ_i}V = \lambda U\quad\text{and} \quad
\paren{O - XY^\top - m\mathbf 1^\top - \sum_{i=1}^k \tau_iZ_i}^\top U = \lambda V.
\end{align*}
Rearranging, the first equation shows that $WV=0$ and the second shows $U^\top W=0$.

The last step of the proof is to verify \cref{eq:convex-condition-W2}. Using \eqref{eq:convex-condition-WU2} and \eqref{eq:convex-condition-WV2}, we have
\begin{align*}
W &= (I-UU^\top) W (I-VV^\top)\\
&=\frac{1}{\lambda} (I-UU^\top)\paren{O - m\mathbf{1}^\top - \sum_{i=1}^k \tau_iZ_i } (I-VV^\top),
\end{align*}
where the last line is obtained by plugging in our chosen value of $W$ and using \eqref{eq: X,Y perp} to get rid of the $XY^\top$ term. Plugging in $O = X^{\star}Y^{\star\top} + m^{\star}\mathbf{1}^\top + \sum_{i=1}^k \tau^{\star}_i Z_i + \hat{E}$, we have
\begin{align*}
W&=\frac{1}{\lambda} (I-UU^\top)\paren{X^{\star}Y^{\star\top}  +(m^{\star}-m)\mathbf{1}^\top + \sum_{i=1}^k (\tau^{\star}_i-\tau_i)Z_i +\hat E} (I-VV^\top).
\end{align*}
We will use substitution to get rid of the $(m^{\star}-m)\mathbf{1}^\top$ term.  
Because we have $m^{\star} = \frac{M^{\star}\mathbf 1}{T}$ and 
\begin{align*}
    m &= \frac{1}{T}\paren{O-XY^\top- \sum_{i=1}^k \tau_iZ_i} \mathbf{1} = \frac{1}{T}\paren{M^{\star}-XY^\top+ \sum_{i=1}^k (\tau^{\star}_i-\tau_i)Z_i + \hat E} \mathbf{1},
\end{align*}
this implies that
\begin{align*}
        (m^{\star}-m)\mathbf{1}^\top &=\paren{XY^\top- \sum_{i=1}^k (\tau^{\star}_i-\tau_i)Z_i -\hat E} \frac{\mathbf{1}\mathbf{1}^\top  }{T} = P_{\mathbf 1}\paren{XY^\top- \sum_{i=1}^k (\tau^{\star}_i-\tau_i)Z_i -\hat E} .
\end{align*}
Substituting this expression into our expression for $W$, we have
\begin{align*}
W = \frac{1}{\lambda} (I-UU^\top)\paren{X^{\star}Y^{\star\top}  + \sum_{i=1}^k (\tau^{\star}_i-\tau_i)P_{{\mathbf 1}^\perp}(Z_i)  + P_{{\mathbf 1}^\perp}(\hat E)  } (I-VV^\top)
\end{align*}
where the $P_{\mathbf 1}(XY^\top)$ term went away because $(I-UU^\top)X = 0$ by \eqref{eq: X,Y perp}.

By \cref{lemma: YT1=0}, we have $V^\top \mathbf 1 = 0$. This allows us to simplify the closed form expression for the projection given in \cref{lemma: projection}: $P_{{\mathbf T}^{\perp}}(A) = (I-UU^\top)P_{{\mathbf 1}^\perp}(A)(I-VV^\top)$. Hence,
\begin{align*}
W = \frac{1}{\lambda} (I-UU^\top)\ X^{\star}Y^{\star\top}  (I-VV^\top) + \frac{1}{\lambda} P_{{\mathbf T}^\perp}(\hat E)   +  \frac{1}{\lambda} \sum_{i=1}^k (\tau^{\star}_i-\tau_i)P_{{\mathbf T}^{\perp}}(Z_i) .
\end{align*}
We can upper bound its spectral norm as follows:
\begin{align*}
\lambda\norm{W}&\leq\underbrace{\norm{ (I-UU^\top)X^{\star}Y^{\star\top} (I-VV^\top)} }_{A_1} +\underbrace{\norm{\hat E}}_{A_2} + \underbrace{\norm{\tau^{\star}-\tau}\sum_{i=1}^k  \norm{P_{{\mathbf T}^{\perp}}(Z_i) }}_{A_3}. 
\end{align*}

Now, we bound each of these terms separately.

\textbf{Bounding $A_1$.} By \eqref{eq: X,Y perp}, we have $(I-UU^\top) X = 0$ and $(I-VV^\top)Y=0$. Hence,
\begin{align*}
    \norm{ (I-UU^\top)X^{\star}Y^{\star\top} (I-VV^\top)}& = \norm{ (I-UU^\top)(XH_{X,Y}-X^{\star})(YH_{X,Y}-Y^{\star\top}) (I-VV^\top)}\\
    &\leq \norm{XH_{X,Y}-X^{\star}}_\F\norm{YH_{X,Y}-Y^{\star}}_\F\\
    &\lesssim \rho^2\\
    &\overset{(i)}{\lesssim} \frac{\sigma^2 r^2 \kappa n\log^{12}(n)}{\sigma_{\min} }\\
&\lesssim \sigma \sqrt n,
\end{align*}
where $(i)$ follows from the bound on $\rho$ from \eqref{eq: rho bound, unsimplified}, and the last step follows from the bound on $\frac{\sigma\sqrt n}{\sigma_{\min}}$.
    %&\lesssim \sqrt n,
%\end{align*}
%where the last step is due to \eqref{eq: rho bound}.

\textbf{Bounding $A_2$.} $A_2$ is bounded by \cref{assum:delta singular value} which gives $\|\hat E\|\leq \norm{E}+\norm{\delta} \lesssim \sigma\sqrt{n}$.

\textbf{Bounding $A_3$.}
If $(XY^\top, \tau, m)$ satisfy \eqref{eq:convex-condition-tau2}--\eqref{eq:convex-condition-WV2} and \eqref{eq:convex-condition-m2}, then the following decomposition holds due to the same proof as in \cref{lem:tau-decomposition}.
\begin{align*}
    \Tilde D \paren{\tau-\tau^{\star}}
= \lambda\Tilde\Delta^1 +\Tilde \Delta^2 + \Tilde\Delta^3,
\end{align*}
where $\Tilde D \in \R^{k \times k}$ is the matrix with entries $\Tilde D_{ij} = \langle
P_{{\mathbf T}^{\perp}}(Z_i),P_{{\mathbf T}^{\perp}}(Z_j)
\rangle
$
and $\Tilde \Delta^1,\Tilde \Delta^2, \Tilde\Delta^3 \in \mathbb {R}^{k}$ are vectors with components
\begin{align*}
\Tilde\Delta^1_i = 
\sang{
Z_i, U  V^\top} ,\quad
\Tilde\Delta^2_i = 
\sang{Z_i, P_{{\mathbf T}^{\perp}}(\hat{E})},\quad
\Tilde\Delta^3_i = 
\sang{Z_i, P_{{\mathbf T}^{\perp}}(M^{\star})}.
\end{align*}

This leads us to have:
\begin{align*}
    A_3 &= \norm{\Tilde D^{-1}(\lambda\Tilde\Delta^1 +\Tilde \Delta^2 + \Tilde\Delta^3)} \sum_{i=1}^k  \norm{P_{{\mathbf T}^{\perp}}(Z_i) }\\
    &\leq \lambda\norm{\Tilde D^{-1}\Tilde\Delta^1}\sum_{i=1}^k  \norm{P_{{\mathbf T}^{\perp}}(Z_i) } +\paren{\norm{\Tilde D^{-1}\Tilde \Delta^2} + \norm{\Tilde D^{-1}\Tilde\Delta^3}}\sum_{i=1}^k  \norm{P_{{\mathbf T}^{\perp}}(Z_i) }\\
    &\leq \lambda\paren{1-\frac{c_{r_2}}{2\log n}} +\frac{2\log^2 n}{c_s}\paren{\norm{\Tilde \Delta^2} + \norm{\Tilde\Delta^3}},
\end{align*}
where the last inequality made use of \cref{lem:general-conditions-small-ball}.

\begin{claim}\label{claim: Tilde Delta}
    We have 
    \begin{align*}
        \norm{\Tilde\Delta^2}, \norm{\Tilde\Delta^3} \lesssim \sigma \sqrt {nr} \log n. 
    \end{align*}
\end{claim}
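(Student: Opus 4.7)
The plan is to bound $\|\Tilde\Delta^2\|$ and $\|\Tilde\Delta^3\|$ separately by first bounding their maximum component and then using the inequality $\|v\| \leq \sqrt{k}\|v\|_\infty$ together with the hypothesis $k = O(\log n)$. Throughout, I rely on the fact that $(X,Y)$ is the limit of the gradient flow starting at $(X^\star,Y^\star)$, so $(X,Y)\in\mathcal B$ by \cref{lemma: stay in B}, and $\mathbf T$ is the span of the tangent space of $XY^\top$ with $\set{\alpha \mathbf 1^\top \mid \alpha\in\mathbb R^n}$.

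For $\Tilde\Delta^2$, the observation is that since $P_{\mathbf T^\perp}$ is self-adjoint, $\Tilde\Delta^2_i = \sang{P_{\mathbf T^\perp}(Z_i), \hat E}$. Then \cref{lemma: E hat 2} applies directly and yields $\max_{i\in[k]}|\Tilde\Delta^2_i| \lesssim \sigma\sqrt{nr}$, whence $\|\Tilde\Delta^2\| \leq \sqrt k \max_i |\Tilde\Delta^2_i| \lesssim \sqrt{\log n}\cdot \sigma\sqrt{nr} \lesssim \sigma\sqrt{nr}\log n$.

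For $\Tilde\Delta^3$, I would first use \cref{lemma: projection} together with $V^\top \mathbf 1 = 0$ (from \cref{lemma: YT1=0}, since $XY^\top = U\Sigma V^\top$) to simplify $P_{\mathbf T^\perp}(M^\star) = (I-UU^\top)P_{\mathbf 1^\perp}(M^\star)(I-VV^\top) = (I-UU^\top) X^\star Y^{\star\top}(I-VV^\top)$, where the last equality uses $P_{\mathbf 1^\perp}(M^\star) = X^\star Y^{\star\top}$. Next, I would expand $X^\star = XH_{X,Y} - (XH_{X,Y}-X^\star)$ and similarly for $Y^\star$, and observe that three of the four resulting terms vanish because $(I-UU^\top)X = 0$ and $(I-VV^\top)Y = 0$ by \eqref{eq: X,Y perp}. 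This leaves
\begin{equation*}
P_{\mathbf T^\perp}(M^\star) = (I-UU^\top)(XH_{X,Y}-X^\star)(YH_{X,Y}-Y^\star)^\top(I-VV^\top),
\end{equation*}
so by submultiplicativity and $(X,Y)\in\mathcal B$, $\|P_{\mathbf T^\perp}(M^\star)\|_\F \leq \|XH_{X,Y}-X^\star\|_\F\,\|YH_{X,Y}-Y^\star\|_\F \leq \rho^2$. Thus $|\Tilde\Delta^3_i| \leq \|Z_i\|_\F \|P_{\mathbf T^\perp}(M^\star)\|_\F \leq \rho^2$ by Cauchy--Schwarz. Plugging in \eqref{eq: rho bound, unsimplified} together with the theorem's assumption $\frac{\sigma\sqrt n}{\sigma_{\min}} \lesssim \frac{1}{\kappa^2 r^2 \log^{12.5}(n)}$ gives $\rho^2 \lesssim \sigma\sqrt n/\log^{0.5}(n) \lesssim \sigma\sqrt{nr}$, so $\|\Tilde\Delta^3\| \leq \sqrt k \rho^2 \lesssim \sigma\sqrt{nr}\log n$.

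The only nontrivial step is the algebraic simplification of $P_{\mathbf T^\perp}(M^\star)$ that reduces its Frobenius norm to $\rho^2$; everything else is bookkeeping and a direct appeal to \cref{lemma: E hat 2} and the $\rho$-bound already established in the proof of \cref{thm: convergence}.
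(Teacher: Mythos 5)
Your proof is correct and follows essentially the same approach as the paper. For $\Tilde\Delta^2$ you invoke \cref{lemma: E hat 2} exactly as the paper does (your $\sqrt k$ versus the paper's $k$ is an inconsequential tightening); for $\Tilde\Delta^3$ your explicit expansion $X^\star Y^{\star\top} = (XH-(XH-X^\star))(YH-(YH-Y^\star))^\top$ with the three cross-terms killed by $(I-UU^\top)X=0$ and $(I-VV^\top)Y=0$ from \eqref{eq: X,Y perp} is precisely the computation the paper abbreviates as ``follow the same steps as \eqref{eq: bound Delta^3}, replacing $\hat U,\hat V$ by $U,V$,'' and your final bound $\sqrt k\rho^2\lesssim\sigma\sqrt{nr}\log n$ matches the paper's conclusion.
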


With the above claim, we are ready to bound $\norm{W}$.
\begin{align*}
    \norm{W} &\leq \frac{1}{\lambda} \paren{A_1 + A_2 + A_3} \\
    &\leq \frac{1}{\lambda} \paren{A_1 + A_2 +\lambda\paren{1-\frac{c_{r_2}}{2\log n}} +\frac{2\log^2 n}{c_s}\paren{\norm{\Tilde \Delta^2} + \norm{\Tilde\Delta^3}}}.
\end{align*}

%Our bound on $A_2$ holds with probability at least $1-2e^{-n}$ and the bounds from \cref{claim: Tilde Delta} hold with probability at least $1-n^{-10}$. By the Union Bound, the probability that both simultaneously hold true is at least $1-2n^{-10}$ for large enough $n$.
%Hence, we conclude that, with probability at least $1-2n^{-10}$, 
Hence, we conclude that
\begin{align*}
    \norm{W} 
    &\lesssim  1-\frac{c_{r_2}}{2\log n} +  O\paren{\frac{\sigma\sqrt {nr} \log^3 (n)}{\lambda} }\leq 1
\end{align*}
for large enough $n$ and $\lambda = \Theta\paren{\sigma \sqrt{nr}\log^{4.5}(n)}$.

\begin{proof}[Proof of \cref{claim: Tilde Delta}.]
    
\textit{Bounding $\norm{\Tilde\Delta^2}$.} Using \cref{lemma: E hat 2}, we have
\begin{align*}
    \norm{\Tilde\Delta^2} \leq k \max_{i\in[k]} \abs{\sang{P_{{\mathbf T}^{\perp}}(Z_i), \hat{E}}} \lesssim \sigma \sqrt {nr} \log n.
\end{align*}

\textit{Bounding $\norm{\Tilde\Delta^3}$.} $\norm{\Tilde\Delta^3}$ can be bounded by following the exact same steps as the bound of $\norm{\Delta^3}$ \eqref{eq: bound Delta^3}, replacing $\hat {\mathbf T}$, $\hat U$, and $\hat V$ with $\mathbf T$, $U$, and $V$, respectively. In particular, note that \cref{lemma: YT1=0} gives that $V^\top \mathbf 1 = 0$ along the gradient flow that ends at $X,Y$, which allows the same simplifications. 
\begin{align*}
\norm{\Tilde\Delta^3} \lesssim \frac{\sigma^2 r^2 \kappa  n \log^{12.5}(n)}{\sigma_{\min}}\lesssim \sigma \sqrt n,
\end{align*}
where the last step used the assumed bound on $\frac{\sigma\sqrt n}{\sigma_{\min}}$.
\end{proof}

\subsubsection{\texorpdfstring{Function $g$ has a unique minimizer}{Function g has a unique minimizer}}
\label{subsection: g unique minimizer}

In this subsection, we aim to show that the convex function $g(M,\tau,m)$ has a unique minimizer.
Throughout the proof, we fix a global minimum $(\hat M,\hat \tau,\hat m)$ of $g$, such that $\hat M = XY^\top$, where $(X,Y)$ is the limit of the gradient flow of $f$ starting from $(X^{\star},Y^{\star})$. We have already showed in \cref{subsubsection: FOCs of g} that $(XY^\top, \tau , m)$, satisfies all of the first order conditions of $g$; hence $\hat M = XY^\top$ is indeed a minimizer of $g$. First note that up to a bijective change in variables, minimizing $g$ is equivalent to minimizing the following function
\begin{equation*}
    \tilde g(N, \tau,m) = \frac{1}{2}\|O-N\|_\F^2 + \lambda  \left\| N - \sum_i \tau_i Z_i - m\mathbf 1^\top \right\|_\star.
\end{equation*}
Hence, it suffices to show that $\tilde g$ has a unique minimizer.
This function is strictly convex in $N$, because of the term $\|O-N\|_\F^2$. We can then fix $\hat N$ to be the unique value of $N$ that minimizes $\tilde g$. In particular, note that $\hat N = \hat M+\sum_i \hat \tau_i Z_i + \hat m \mathbf 1^\top$. 
It only remains to show that the following convex optimization problem
\begin{equation}\label{eq:min_pb}
    \min_{\tau,m} \left\| \hat N-\sum_i \tau_i Z_i -m\mathbf 1^\top \right\|_\star = \min_{\tau,m} \left\| \hat M-\sum_i \tau_i Z_i -m\mathbf 1^\top \right\|_\star
\end{equation}
has a unique minimizer. By the definition of $\hat M=XY^\top$, a minimum of the right-hand side of \eqref{eq:min_pb} is attained for $\tau=0$ and $m=0$ (otherwise, $\hat M$ wouldn't be an optimum of $g$). Now consider any other optimal solution to the problem in \eqref{eq:min_pb}, that is $Z\in\mathbf Z$ such that $\|XY^\top\|_\star = \|XY^\top +Z\|_\star$. Write the SVD $XY\top = U\Sigma V^\top$. Recall that the subgradients of the nuclear norm are $\set{UV^\top  + W : U^\top W=0, WV=0, \norm{W}\leq 1}$. 
By the convexity of the nuclear norm, we have
\begin{equation}\label{eq:lower_bound_convexity}
    \|XY^\top +Z\|_\star - \|XY^\top\|_\star \geq \sang{Z,UV^\top + W},
\end{equation}
for any matrix $W$ with $U^\top W=0$, $WV=0$ and $\|W\|\leq 1$. Recall that $\mathbf T_0$ is defined to be the tangent space of $XY^\top$. Defining the SVD $P_{\mathbf T_0^\perp}(Z) = \tilde U \tilde \Sigma \tilde V^\top$, we can take $W=\tilde U\tilde V^\top$, which gives
\begin{equation}\label{eq:lower_bound_nuclear_norm}
    \|XY^\top +Z\|_\star - \|XY^\top\|_\star \geq \sang{Z,UV^\top} + \|P_{\mathbf T_0^\perp}(Z)\|_\star.
\end{equation}
We now use the following lemma.

\begin{claim}\label{claim:reduction_assumptions}
    Under the same assumptions as in \cref{lem:general-conditions-small-ball}, we have
    \begin{equation*}
        \|P_{\mathbf T_0^\perp}(Z)\|_\star > |\sang{Z,UV^\top}|\qquad \forall Z\in\mathbf Z\setminus\{0\}.
    \end{equation*}
\end{claim}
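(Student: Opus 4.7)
The plan is to exploit the dual certificate $Q := UV^\top + W$ already constructed in \cref{subsubsection: FOCs of g}, where $W$ was shown to satisfy $U^\top W = 0$, $WV = 0$, and $\|W\| \leq 1$. The first-order optimality conditions \eqref{eq:convex-condition-tau2} and \eqref{eq:convex-condition-m2} imply $\langle Z_j, Q\rangle = 0$ for every $j \in [k]$ and $Q\mathbf 1 = 0$; together, these give $Q \perp \mathbf Z$. Consequently, for any $Z \in \mathbf Z$,
\[
\langle Z, UV^\top\rangle = -\langle Z, W\rangle.
\]

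Since the conditions $U^\top W = 0$ and $WV = 0$ place $W \in \mathbf T_0^\perp$, the orthogonal decomposition $Z = P_{\mathbf T_0}(Z) + P_{\mathbf T_0^\perp}(Z)$ yields $\langle Z, W\rangle = \langle P_{\mathbf T_0^\perp}(Z), W\rangle$. Applying the duality between the nuclear and spectral norms then gives
\[
|\langle Z, UV^\top\rangle| \leq \|W\| \cdot \|P_{\mathbf T_0^\perp}(Z)\|_\star.
\]
For every $Z \in \mathbf Z \setminus \{0\}$, the generalized identification bound \eqref{eq:generalized-PTP-Z-ratio} from \cref{lem:general-conditions-small-ball} provides $\|P_{\mathbf T_0^\perp}(Z)\|_\star \geq \|P_{\mathbf T_0^\perp}(Z)\|_\F \geq \sqrt{c_{r_1}/(2\log n)}\,\|Z\|_\F > 0$.

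The main obstacle is upgrading the earlier conclusion $\|W\| \leq 1$ to the strict inequality $\|W\| < 1$. Revisiting the chain of estimates at the end of \cref{subsubsection: FOCs of g}, the dominant contribution to the upper bound on $\|W\|$ is $1 - c_{r_2}/(2\log n)$, coming from the generalized spectral condition \eqref{eq: generalized spectral}, while the residual perturbation is of order $O(1/\log^{3/2} n)$, which is strictly smaller than $c_{r_2}/(2\log n)$ once $n$ is large. Hence $\|W\| \leq 1 - c/\log n$ for some $c > 0$ and all sufficiently large $n$; combining this with the strictly positive lower bound on $\|P_{\mathbf T_0^\perp}(Z)\|_\star$ delivers the claimed strict inequality.
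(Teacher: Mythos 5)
Your argument is sound at the point where the paper actually applies the claim, but it takes a genuinely different route from the paper's proof and, as written, establishes a narrower statement. The paper's proof never touches a dual certificate: it first reduces to $Z\in \mathrm{span}(Z_i, i\in[k])$ via the inequality $\|P_{\mathbf T_0^\perp}(Z)\|_\star \geq \|P_{\mathbf T^\perp}(Z)\|_\star$ and the identity $V^\top\mathbf 1=0$ from \cref{lemma: YT1=0}, then lower-bounds $\|P_{\mathbf T^\perp}(Z)\|_\star$ by $\|P_{\mathbf T^\perp}(Z)\|_\F^2/\|P_{\mathbf T^\perp}(Z)\|$ and compares the quadratic form $\alpha^\top\tilde D\alpha$ against $\alpha^\top\tilde\Delta^1\cdot\|\alpha\|\sum_{i}\|P_{\mathbf T^\perp}(Z_i)\|$, closing with \eqref{eq: generalized spectral} and \eqref{eq: generalized D}; this uses only the generalized identification conditions of \cref{lem:general-conditions-small-ball} and hence holds for every $(X,Y)$ satisfying that lemma's hypotheses, with no reference to the noise. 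Your route instead exploits the certificate $Q=UV^\top+W$: the orthogonality $Q\perp\mathbf Z$ from \eqref{eq:convex-condition-tau2}, \eqref{eq:convex-condition-M2}, \eqref{eq:convex-condition-m2}, the inclusion $W\in\mathbf T_0^\perp$, norm duality, and the upgrade of $\|W\|\leq 1$ to $\|W\|\leq 1-c/\log n$, which indeed follows from the estimates in \cref{subsubsection: FOCs of g} because the residual terms there are $O(1/\log^{1.5}n)$ against the $c_{r_2}/(2\log n)$ margin; combined with $\|P_{\mathbf T_0^\perp}(Z)\|_\star\geq\|P_{\mathbf T_0^\perp}(Z)\|_\F>0$ from \eqref{eq:generalized-PTP-Z-ratio}, this gives the strict inequality. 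What your approach buys is a shorter derivation once the certificate bound is available; what it costs is scope and logical economy: the certificate exists only because $\nabla f(X,Y)=0$, so your proof covers only the limit of the gradient flow rather than an arbitrary $(X,Y)\in\mathcal B$ rotated onto the flow as the claim (inheriting the hypotheses of \cref{lem:general-conditions-small-ball}) asserts, and it imports the full machinery of \cref{subsubsection: FOCs of g}, including the noise assumptions in \cref{assum: error assumption} and \cref{lemma: E hat 2}, which the paper's self-contained argument avoids. Since the claim is invoked in \cref{subsection: g unique minimizer} only at that limit point, your argument suffices for the paper's purposes, but you should either restrict the statement to the stationary point or adopt the paper's certificate-free argument to get the claim in its stated generality.
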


With this result, we have that if $Z\neq 0$, then $\|XY^\top +Z\|_\star > \|XY^\top\|_\star$, which contradicts the definition of $Z$. Hence, $Z=0$ which ends the proof that $g$ has a unique minimizer.

\begin{proof}[Proof of \cref{claim:reduction_assumptions}]
    Up to changing $Z$ into $-Z$, it suffices to show that for all non-zero $Z\in\mathbf Z$, we have $\|P_{\mathbf T_0^\perp}(Z)\|_\star > \sang{Z,UV^\top}$. First, using similar arguments as in \eqref{eq:lower_bound_nuclear_norm} we show that for any matrices $A$ and $B$ such that $\sang{A,B}=0$ with SVD $A=U_A\Sigma_A V_A^\top$, we have
    \begin{equation*}
        \|A+B\|_\star - \|A\|_\star \geq \sang{B,U_A V_A^\top} = 0. 
    \end{equation*}
    Hence, $ \|A+B\|_\star \geq \|A\|_\star $. In particular, we can take $A=P_{\mathbf T^\perp}(Z)$ and $B =P_{\mathbf T_0^\perp}(Z) - P_{\mathbf T^\perp}(Z)$ since they are orthogonal to each other due to the fact that $\mathbf T^\perp\subset \mathbf T_0^\perp$.
    Then, we have
    \begin{equation*}
        \|P_{\mathbf T_0^\perp}(Z)\|_\star \geq \|P_{\mathbf T^\perp}(Z)\|_\star.
    \end{equation*}
    Next, by \cref{lemma: YT1=0} we have $V^\top \mathbf 1=0$, so that $\sang{Z,UV^\top} = \sang{Z,P_{\mathbf 1^\perp}(UV^\top)}=\sang{P_{\mathbf 1^\perp}(Z),P_{\mathbf 1^\perp}(UV^\top)}$. The two previous steps essentially show that we can ignore the terms of the form $\alpha \mathbf 1^\top$ within $Z$. Formally, it suffices to show that for any non-zero $Z\in span(Z_i,i\in[k])$, we have $\|P_{\mathbf T^\perp}(Z)\|_\star > \sang{Z,P_{\mathbf 1^\perp}(UV^\top)}$. We decompose such a matrix as $Z = \sum_{i\in[k]}\alpha_iZ_i$. First, by \eqref{eq:generalized-PTP-Z-ratio} of \cref{lem:general-conditions-small-ball}, we have that $\|P_{\mathbf T^\perp}(Z)\|_\F^2\geq \frac{c_{r_1}}{2\log(n)}\|Z\|_F^2>0$, which implies $\|P_{\mathbf T^\perp}(Z)\|>0$.  Then, with $\alpha:=(\alpha_1,\ldots,\alpha_k)\neq 0 $, we have
    \begin{align*}
        \|P_{\mathbf T^\perp}(Z)\|_\star - \sang{Z,P_{\mathbf 1^\perp}(UV^\top)} &\overset{(i)}{\geq} \frac{\|P_{\mathbf T^\perp}(Z)\|_\F^2}{\|P_{\mathbf T^\perp}(Z)\|} - \sang{Z,P_{\mathbf 1^\perp}(UV^\top)}\\
        &\overset{(ii)}{=} \frac{1}{\|P_{\mathbf T^\perp}(Z)\|}\paren{\alpha^\top \tilde D \alpha - \alpha^\top \tilde \Delta^1 \cdot  \|P_{\mathbf T^\perp}(Z)\|}\\
        &= \frac{1}{\|P_{\mathbf T^\perp}(Z)\|}\paren{\alpha^\top \tilde D \alpha - \alpha^\top \tilde \Delta^1 \cdot  \norm{ \sum_{i\in[k]}\alpha_i P_{\mathbf T^\perp}(Z_i)}}\\
        &\geq \frac{1}{\|P_{\mathbf T^\perp}(Z)\|}\underbrace{\paren{ \alpha^\top \tilde D \alpha - \alpha^\top \tilde \Delta^1 \cdot \|\alpha\|\sum_{i\in[k]}\|P_{\mathbf T^\perp}(Z_i)\|}}_{a},
    \end{align*}
where $(i)$ is due to the fact that $\|P_{\mathbf T^\perp}(Z)\|>0$ and the fact that the Frobenius norm of a matrix, squared, is the sum of the squares of the singular values of that matrix; $(ii)$ uses the identity $\|P_{\mathbf T^\perp}(Z)\|_\F^2 = \sang{\sum_{i\in[k]}\alpha_i P_{\mathbf T^\perp}(Z_i),\sum_{j\in[k]}\alpha_j P_{\mathbf T^\perp}(Z_j)}=\alpha^\top \tilde D \alpha$.
    
By \eqref{eq: generalized D} of \cref{lem:general-conditions-small-ball}, the matrix $\tilde D$ is invertible. It is also symmetric by construction. We next define $\beta = \tilde D^{1/2}\alpha$. We obtain
    \begin{align*}
        a&= \|\beta\|^2 - \beta^\top \tilde D^{-1/2}\tilde \Delta^1   \cdot \|\tilde D^{-1/2}\beta\|\sum_{i\in[k]}\|P_{\mathbf T^\perp}(Z_i)\|\\
        &\geq \|\beta\|^2\paren{1-\|\tilde D^{-1/2}\| \cdot \|\tilde \Delta^1   \| \cdot \| \tilde D^{-1/2}\| \sum_{i\in[k]}\|P_{\mathbf T^\perp}(Z_i)\|}\\
        &\overset{(i)}{>}0,
    \end{align*}
    where in $(i)$ we used \eqref{eq: generalized spectral} of \cref{lem:general-conditions-small-ball} together with the fact that $\beta\neq 0$. Combining the two previous inequalities shows that
    \begin{equation*}
        \|P_{\mathbf T^\perp}(Z)\|_\star > \sang{Z,P_{\mathbf 1^\perp}(UV^\top)},\quad \forall Z\in span(Z_i,i\in[k]). 
    \end{equation*}
    This ends the proof of the claim.
\end{proof}

\subsection{\texorpdfstring{Proof of \cref{lemma: sigma min D}}{Proof of Lemma \ref{lemma: sigma min D}}}
\label{subsection: lemma sigma min D}

\LemmaSigmaMinD*

\begin{proof}
    By \cref{lemma: main}, 
    %with probability at least $1-n^{-9}$ and
    for sufficiently large $n$, we can write $\hat M = XY^\top$ where $(X,Y)$ is the limit of the gradient flow of $f$ started at $(X^{\star},Y^{\star})$, and $(X,Y)\in\mathcal B$. Hence, the conditions for applying both \cref{lem:general-conditions-small-ball} and \cref{lemma: E hat 2} with $\hat{\mathbf T}$, the tangent space of $\hat M$, are satisfied. By \eqref{eq: generalized D} from \cref{lem:general-conditions-small-ball}, we have the desired bound on $\sigma_{\min}(D)$. Furthermore, we have
\begin{align*}
    \norm{\Delta^2} \leq \sqrt k\norm{\Delta^2}_{\infty} \lesssim\sqrt{\log n} \norm{\Delta^2}_{\infty}.  
\end{align*}

We aim to bound  $\norm{\Delta^2}_{\infty} = \max_{i\in[k]}\abs{\sang{P_{\hat{\mathbf T}^{\perp}}(Z_i), \hat{E}}}$, where $\hat E=E+\delta$.
%For any $m\in[k]$, we have $\abs{\sang{P_{\hat{\mathbf T}^{\perp}}(Z_m), \hat{E}}} \leq  \abs{\sang{P_{\hat{\mathbf T}^{\perp}}(Z_m), {E}}} +  \abs{\sang{P_{\hat{\mathbf T}^{\perp}}(Z_m), \delta} }.$
%By \eqref{eq: first term of E hat 2},
%$\abs{\sang{P_{\hat{\mathbf T}^{\perp}}(Z_m), {E}}} \lesssim\sigma\log^{0.5}n$ with probability at least $1-n^{-10}$. Now it remains to bound $\abs{\sang{P_{\hat{\mathbf T}^{\perp}}(Z_m), \delta} }. $
We have
\begin{align*}
    \abs{\sang{P_{\hat{\mathbf T}^{\perp}}(Z_i), \hat E}}&\leq \abs{\sang{P_{\hat{\mathbf T}^{\perp}}(Z_i) - P_{{\mathbf T}^{\star\perp}}(Z_i) , \hat E}} + \abs{\sang{P_{{\mathbf T}^{\star\perp}}(Z_i), \hat E}}\\
    &\overset{(i)}{\leq} \norm{P_{\hat{\mathbf T}^{\perp}}(Z_i) - P_{{\mathbf T}^{\star\perp}}(Z_i)}_\star(\norm{E}+\norm{\delta}) + \abs{\sang{P_{{\mathbf T}^{\star\perp}}(Z_i),\hat E}} \\
    &\overset{(ii)}{\lesssim} \frac{\rho \sqrt{\kappa r}}{\sqrt{\sigma_{\min}} } \sigma \sqrt n + \abs{\sang{P_{{\mathbf T}^{\star\perp}}(Z_i), \hat E}}\\
    &\overset{(iii)}{\lesssim} \frac{\sigma^2 r^{1.5} \kappa  n \log^6(n) }{\sigma_{\min} } + \abs{\sang{P_{{\mathbf T}^{\star\perp}}(Z_i), P_{{\mathbf T}^{\star\perp}}(\hat E)}}
\end{align*}
where $(i)$ is due to Von Neumann's trace inequality, $(ii)$ is due to \eqref{eq:PTP-PTPstar-Z} and \cref{assum:delta singular value}, and $(iii)$ is due to \eqref{eq: rho bound, unsimplified}. 
%Applying a union bound, with probability at least $1-n^{-9}-n^{-10}$,
Hence,
\begin{align*}
     \norm{\Delta^2} &\lesssim \frac{\sigma^2 r^{1.5} \kappa  n \log^{6.5}(n) }{\sigma_{\min} } +  \log^{0.5}(n) \cdot \max_{i\in[k]}
     \abs{\sang{P_{{\mathbf T}^{\star\perp}}(Z_i), P_{{\mathbf T}^{\star\perp}}(\hat E)}}\\
     & =  \frac{\sigma^2 r^{1.5} \kappa  n \log^{6.5}(n) }{\sigma_{\min} } +  \log^{0.5}(n) \cdot \max_{i\in[k]}
     \frac{\abs{\sang{P_{{\mathbf T}^{\star\perp}}(\tilde Z_i), P_{{\mathbf T}^{\star\perp}}(E+\delta)}}}{ \|\tilde Z_i\|_\F}.  
\end{align*}

\end{proof}

\section{Details of Computational Experiments}
\label{appendix: computational}

In this section, we presents more details and results for \cref{section: experiments}. 
Our code and data are available at \href{https://github.com/emilyyzhang/PaCE}{this GitHub repository}. 

\textbf{Compute resources:}
Our experiments utilized a high-performance computing cluster, leveraging CPU-based nodes for all experimental runs. Our home directory within the cluster provides 10 TB of storage. 
Experiments were conducted on nodes equipped with Intel Xeon CPUs, configured with 5 CPUs and 20 GB of RAM. We utilized approximately 40 nodes operating in parallel to conduct 200 iterations of experiments for each parameter set. While a single experiment completes within minutes on a personal laptop, executing the full suite of experiments on the cluster required approximately one day.

\textbf{Method implementations:}
Similar to the methodology outlined in the supplemental materials of \cite{farias2021learning}, we implemented PaCE using an alternating minimization technique to solve the convex optimization problem. To tune the hyperparameter $\lambda$, we initially set it to a large value and gradually reduced it until the rank of $\hat{M}$ reached a pre-defined rank $r$. In all of our experiments, we fixed $r$ to 6. We note that we did not tune this pre-defined rank, and our method might achieve better performance with a different rank.

The following models from the Python \texttt{econml} package were implemented. The supervised learning models from \texttt{sklearn} were chosen for the parameters in our implementation:
\begin{itemize}
    \item \texttt{DML} and \texttt{CausalForestDML:}
    \begin{itemize}
        \item \texttt{model\_y:} \texttt{GradientBoostingRegressor}
        \item \texttt{model\_t:} \texttt{GradientBoostingClassifier}
        \item \texttt{model\_final:} \texttt{LassoCV} (for \texttt{DML})
    \end{itemize}

    \item \texttt{LinearDML:}
    \begin{itemize}
        \item \texttt{model\_y:} \texttt{RandomForestRegressor}
        \item \texttt{model\_t:} \texttt{RandomForestClassifier}
    \end{itemize}

    \item \texttt{XLearner} and \texttt{ForestDRLearner:}
    \begin{itemize}
        \item \texttt{models/model\_regression:} \texttt{GradientBoostingRegressor}
        \item \texttt{propensity\_model/model\_propensity:} \texttt{GradientBoostingClassifier}
    \end{itemize}
\end{itemize}
Additionally, we implemented \texttt{DRLearner} from the Python \texttt{econml} package using the default parameters. We also implemented \texttt{multi\_arm\_causal\_forest} from the \texttt{grf} package in R.

Finally, we implemented matrix completion with nuclear norm minimization (MCNNM), where the matrix of observed entries is completed using nuclear norm penalization. The regularization parameter for the nuclear norm term, $\lambda$, is chosen in the same way that it is chosen in our implementation of PaCE.

We note that LLMs facilitated our implementation of all these methods, the experiment and visualization code, and our data processing code.

\textbf{Notes about the data:}
We imputed missing data values for 2020 by calculating the average values from 2019 and 2021. Furthermore, in the supplemental code and data provided [available at \href{https://github.com/emilyyzhang/PaCE}{this GitHub repository}], we redacted a covariate column `Snap\_Percentage\_Vulnerable' that was used in our experiments because it was obtained using proprietary information.  

\textbf{Additional results:} 
\cref{tab:winners-all-methods} presents the proportion of instances in which each method achieved the lowest normalized Mean Absolute Error (nMAE) across all of our experiments. It encompasses every method we benchmarked.

\cref{tab:avg_nmae} and \cref{tab:avg_nmae_treated} detail the complete results for the average nMAE of each method, along with the corresponding standard deviations. \cref{tab:avg_nmae} reports the nMAE values for estimating heterogeneous treatment effects across all observations. In contrast, \cref{tab:avg_nmae_treated} focuses on the nMAE values for treated observations only. Note that MCNNM only appears in \cref{tab:avg_nmae_treated}, as this method does not directly give estimates of heterogeneous treatment effects for untreated observations.

In the following tables, the "All" column presents the result for all instances across all parameter settings. These settings include proportions of zip codes treated at 0.05, 0.25, 0.5, 0.75, and 1.0; either with adaptive or non-adaptive treatment application approaches; and under additive or multiplicative treatment effects. Each parameter set is tested in 200 instances, totaling 4000 instances reflected in each value within the "All" column. Subsequent columns detail outcomes when one parameter is held constant. Specifically, the "Adaptive" and "Effect" columns each summarize results from 2000 instances, while each result under the "Proportion treated" column corresponds to 800 instances. 
The standard deviations, shown in parentheses within \cref{tab:avg_nmae} and \cref{tab:avg_nmae_treated}, are calculated for each result by analyzing the variability among the instances that contribute to that particular data point.

\begin{table}[t]
\caption{Proportion of instances where each method results in the lowest nMAE, across all methods considered. }\label{tab:winners-all-methods}
\centering
\resizebox{\textwidth}{!}{
\begin{tabular}[t]{lccccccccccccc}
\toprule
 & && \multicolumn{2}{c}{Adaptive} && \multicolumn{5}{c}{Proportion treated} && \multicolumn{2}{c}{Effect}\\
\cline{4-5}\cline{7-11}\cline{13-14}
\addlinespace
 & All  && N & Y && 0.05 & 0.25 & 0.5 & 0.75 & 1.0 && Add. & Mult.\\
\midrule
SNAP \\
\cline{1-1}
\addlinespace
PaCE & 0.59 &  & 0.58 & 0.60 &  & 0.26 & 0.54 & 0.76 & 0.75 & 0.65 &  & 0.64 & 0.54\\
Causal Forest & 0.16 &  & 0.17 & 0.16 &  & 0.14 & 0.17 & 0.16 & 0.14 & 0.21 &  & 0.02 & 0.30\\
CausalForestDML & 0.00 &  & 0.00 & 0.00 &  & 0.00 & 0.00 & 0.00 & 0.00 & 0.00 &  & 0.00 & 0.00\\
DML & 0.11 &  & 0.21 & 0.00 &  & 0.12 & 0.13 & 0.08 & 0.09 & 0.11 &  & 0.17 & 0.04\\
DRLearner & 0.00 &  & 0.00 & 0.00 &  & 0.00 & 0.00 & 0.00 & 0.00 & 0.00 &  & 0.00 & 0.00\\
ForestDRLearner & 0.00 &  & 0.00 & 0.00 &  & 0.00 & 0.00 & 0.00 & 0.00 & 0.00 &  & 0.00 & 0.00\\
LinearDML & 0.14 &  & 0.04 & 0.24 &  & 0.48 & 0.15 & 0.01 & 0.02 & 0.04 &  & 0.17 & 0.11\\
XLearner & 0.00 &  & 0.00 & 0.00 &  & 0.00 & 0.00 & 0.00 & 0.00 & 0.00 &  & 0.00 & 0.00\\
\addlinespace
State \\
\cline{1-1}
\addlinespace
PaCE & 0.41 &  & 0.39 & 0.42 &  & 0.23 & 0.32 & 0.39 & 0.54 & 0.56 &  & 0.46 & 0.36\\
Causal Forest & 0.05 &  & 0.09 & 0.01 &  & 0.06 & 0.04 & 0.04 & 0.05 & 0.04 &  & 0.04 & 0.05\\
CausalForestDML & 0.07 &  & 0.07 & 0.07 &  & 0.03 & 0.07 & 0.10 & 0.07 & 0.08 &  & 0.06 & 0.09\\
DML & 0.27 &  & 0.29 & 0.26 &  & 0.41 & 0.36 & 0.27 & 0.17 & 0.16 &  & 0.31 & 0.24\\
DRLearner & 0.00 &  & 0.00 & 0.00 &  & 0.00 & 0.00 & 0.00 & 0.00 & 0.00 &  & 0.00 & 0.00\\
ForestDRLearner & 0.01 &  & 0.01 & 0.01 &  & 0.00 & 0.00 & 0.00 & 0.01 & 0.02 &  & 0.00 & 0.01\\
LinearDML & 0.02 &  & 0.01 & 0.04 &  & 0.02 & 0.06 & 0.01 & 0.01 & 0.02 &  & 0.02 & 0.02\\
XLearner & 0.17 &  & 0.13 & 0.20 &  & 0.25 & 0.14 & 0.19 & 0.15 & 0.11 &  & 0.10 & 0.23\\
\addlinespace
County &&&&&&&\\
\cline{1-1}
\addlinespace
PaCE & 0.06 &  & 0.10 & 0.02 &  & 0.16 & 0.08 & 0.04 & 0.02 & 0.01 &  & 0.07 & 0.06\\
Causal Forest & 0.29 &  & 0.38 & 0.20 &  & 0.25 & 0.28 & 0.31 & 0.31 & 0.30 &  & 0.23 & 0.34\\
CausalForestDML & 0.04 &  & 0.05 & 0.03 &  & 0.00 & 0.01 & 0.03 & 0.05 & 0.11 &  & 0.03 & 0.05\\
DML & 0.35 &  & 0.35 & 0.35 &  & 0.34 & 0.37 & 0.34 & 0.39 & 0.31 &  & 0.47 & 0.24\\
DRLearner & 0.00 &  & 0.00 & 0.00 &  & 0.00 & 0.00 & 0.00 & 0.00 & 0.00 &  & 0.00 & 0.00\\
ForestDRLearner & 0.01 &  & 0.00 & 0.01 &  & 0.00 & 0.00 & 0.00 & 0.00 & 0.02 &  & 0.00 & 0.01\\
LinearDML & 0.20 &  & 0.04 & 0.35 &  & 0.22 & 0.22 & 0.21 & 0.16 & 0.17 &  & 0.19 & 0.20\\
XLearner & 0.05 &  & 0.07 & 0.04 &  & 0.03 & 0.03 & 0.06 & 0.07 & 0.07 &  & 0.01 & 0.10\\
\bottomrule
\end{tabular}
}

\end{table}

\begin{table}[!ht]
\centering
\caption{Average nMAE across methods. Standard deviations shown in parentheses.} \label{tab:avg_nmae}
\centering
\resizebox{\textwidth}{!}{
\begin{tabular}[t]{lccccccccccccc}
\toprule
 & && \multicolumn{2}{c}{Adaptive} && \multicolumn{5}{c}{Proportion treated} && \multicolumn{2}{c}{Treatment effect}\\
\cline{4-5}\cline{7-11}\cline{13-14}
\addlinespace
 & All  && N & Y && 0.05 & 0.25 & 0.5 & 0.75 & 1.0 && Add. & Mult.\\
\midrule
\addlinespace
SNAP   &&&&&&&\\
\cline{1-1}
\addlinespace
PaCE & 0.18 &  & 0.17 & 0.18 &  & 0.4 & 0.17 & 0.12 & 0.1 & 0.09 &  & 0.13 & 0.22\\
 & (0.16) &  & (0.11) & (0.2) &  & (0.21) & (0.1) & (0.07) & (0.06) & (0.05) &  & (0.13) & (0.18)\\
 \addlinespace
Causal Forest (R) & 0.21 &  & 0.21 & 0.21 &  & 0.42 & 0.2 & 0.17 & 0.13 & 0.12 &  & 0.18 & 0.23\\
 & (0.15) &  & (0.07) & (0.19) &  & (0.19) & (0.06) & (0.04) & (0.04) & (0.06) &  & (0.12) & (0.17)\\
 \addlinespace
CausalForestDML & 40.81 &  & 5.01 & 76.6 &  & 195.95 & 6.85 & 0.8 & 0.26 & 0.17 &  & 40.93 & 40.68\\
 & (115.45) &  & (10.86) & (154.86) &  & (191.08) & (6.37) & (0.42) & (0.05) & (0.06) &  & (115.69) & (115.24)\\
 \addlinespace
DML & 0.38 &  & 0.31 & 0.45 &  & 0.72 & 0.35 & 0.32 & 0.27 & 0.26 &  & 0.28 & 0.48\\
 & (0.29) &  & (0.21) & (0.33) &  & (0.41) & (0.17) & (0.15) & (0.14) & (0.16) &  & (0.27) & (0.27)\\
 \addlinespace
DRLearner & 1186.44 &  & 371.84 & 2001.03 &  & 107.4 & 1016.66 & 2277.78 & 2445.56 & 84.78 &  & 1095.4 & 1277.47\\
 & (1764.1) &  & (462.49) & (2164.23) &  & (229.15) & (815.45) & (2036.92) & (2339.59) & (211.55) &  & (1597.24) & (1912.51)\\
 \addlinespace
ForestDRLearner & 0.94 &  & 0.84 & 1.04 &  & 2.08 & 0.9 & 0.71 & 0.57 & 0.45 &  & 0.99 & 0.89\\
 & (7.08) &  & (1.53) & (9.9) &  & (15.78) & (0.1) & (0.13) & (0.17) & (0.21) &  & (9.49) & (3.2)\\
 \addlinespace
LinearDML & 0.25 &  & 0.28 & 0.21 &  & 0.34 & 0.25 & 0.25 & 0.2 & 0.19 &  & 0.15 & 0.34\\
 & (0.16) &  & (0.16) & (0.16) &  & (0.23) & (0.13) & (0.12) & (0.13) & (0.14) &  & (0.1) & (0.16)\\
 \addlinespace
XLearner & 0.51 &  & 0.46 & 0.56 &  & 0.89 & 0.55 & 0.47 & 0.36 & 0.27 &  & 0.5 & 0.52\\
 & (0.25) &  & (0.17) & (0.3) &  & (0.29) & (0.05) & (0.04) & (0.02) & (0.02) &  & (0.23) & (0.27)\\
\addlinespace
State &&&&&&&\\
\cline{1-1}
\addlinespace
PaCE & 0.28 &  & 0.34 & 0.22 &  & 0.57 & 0.32 & 0.21 & 0.17 & 0.14 &  & 0.25 & 0.32\\
 & (0.24) &  & (0.26) & (0.2) &  & (0.3) & (0.18) & (0.12) & (0.1) & (0.08) &  & (0.21) & (0.25)\\
 \addlinespace
Causal Forest (R) & 0.33 &  & 0.39 & 0.28 &  & 0.6 & 0.35 & 0.29 & 0.23 & 0.19 &  & 0.3 & 0.36\\
 & (0.22) &  & (0.24) & (0.18) &  & (0.27) & (0.16) & (0.11) & (0.08) & (0.08) &  & (0.19) & (0.24)\\
 \addlinespace
CausalForestDML & 0.75 &  & 1.02 & 0.48 &  & 2.76 & 0.36 & 0.25 & 0.21 & 0.19 &  & 0.76 & 0.74\\
 & (3.22) &  & (4.41) & (1.1) &  & (6.85) & (0.25) & (0.11) & (0.1) & (0.08) &  & (3.31) & (3.14)\\
 \addlinespace
DML & 0.35 &  & 0.42 & 0.28 &  & 0.59 & 0.34 & 0.29 & 0.27 & 0.26 &  & 0.31 & 0.39\\
 & (0.33) &  & (0.41) & (0.19) &  & (0.6) & (0.21) & (0.17) & (0.14) & (0.14) &  & (0.37) & (0.28)\\
 \addlinespace
DRLearner & 929.38 &  & 520.49 & 1338.07 &  & 1466.8 & 2905.93 & 159.67 & 68.36 & 47.3 &  & 891.42 & 967.37\\
 & (4601.05) &  & (3268.39) & (5597.16) &  & (5776.54) & (8056.84) & (898.43) & (348.15) & (677.07) &  & (4089.69) & (5062.03)\\
 \addlinespace
ForestDRLearner & 60.29 &  & 119.37 & 1.24 &  & 299.72 & 0.87 & 0.4 & 0.28 & 0.26 &  & 52.15 & 68.43\\
 & (629.42) &  & (886.43) & (2.42) &  & (1382.62) & (1.11) & (0.21) & (0.15) & (0.14) &  & (451.79) & (767.07)\\
\addlinespace
LinearDML & 0.86 &  & 1.34 & 0.39 &  & 2.72 & 0.59 & 0.4 & 0.33 & 0.29 &  & 0.85 & 0.88\\
 & (1.56) &  & (2.06) & (0.36) &  & (2.74) & (0.41) & (0.22) & (0.16) & (0.13) &  & (1.63) & (1.48)\\
\addlinespace
XLearner & 0.29 &  & 0.37 & 0.22 &  & 0.53 & 0.31 & 0.23 & 0.2 & 0.19 &  & 0.27 & 0.31\\
 & (0.19) &  & (0.2) & (0.14) &  & (0.25) & (0.13) & (0.1) & (0.08) & (0.06) &  & (0.16) & (0.2)\\
\addlinespace
County &&&&&&&\\
\cline{1-1}
\addlinespace
PaCE & 0.34 &  & 0.35 & 0.33 &  & 0.49 & 0.37 & 0.29 & 0.28 & 0.26 &  & 0.28 & 0.39\\
 & (0.2) &  & (0.19) & (0.22) &  & (0.29) & (0.19) & (0.14) & (0.12) & (0.11) &  & (0.13) & (0.24)\\
\addlinespace
Causal Forest (R) & 0.26 &  & 0.26 & 0.27 &  & 0.43 & 0.3 & 0.22 & 0.2 & 0.18 &  & 0.22 & 0.31\\
 & (0.17) &  & (0.13) & (0.21) &  & (0.23) & (0.17) & (0.12) & (0.1) & (0.08) &  & (0.12) & (0.21)\\
 \addlinespace
CausalForestDML & 2.1 &  & 0.47 & 3.72 &  & 8.83 & 0.95 & 0.28 & 0.23 & 0.2 &  & 2.05 & 2.15\\
 & (5.24) &  & (0.68) & (7.01) &  & (8.92) & (0.78) & (0.11) & (0.09) & (0.07) &  & (5.17) & (5.31)\\
 \addlinespace
DML & 0.3 &  & 0.34 & 0.25 &  & 0.46 & 0.28 & 0.26 & 0.23 & 0.26 &  & 0.21 & 0.38\\
 & (0.25) &  & (0.25) & (0.24) &  & (0.27) & (0.2) & (0.23) & (0.22) & (0.26) &  & (0.17) & (0.28)\\
 \addlinespace
DRLearner & 717.31 &  & 260.77 & 1173.85 &  & 560.3 & 153.83 & 72.65 & 428.02 & 2371.74 &  & 396.35 & 1038.27\\
 & (5046.04) &  & (834.88) & (7058.49) &  & (1091.51) & (664.27) & (462.94) & (827.06) & (11014.13) &  & (1118.57) & (7034.13)\\
 \addlinespace
ForestDRLearner & 0.75 &  & 0.51 & 1 &  & 2.57 & 0.4 & 0.3 & 0.26 & 0.23 &  & 0.57 & 0.93\\
 & (5.99) &  & (1.35) & (8.35) &  & (13.24) & (0.15) & (0.12) & (0.1) & (0.06) &  & (2.93) & (7.94)\\
\addlinespace
LinearDML & 0.36 &  & 0.47 & 0.24 &  & 0.61 & 0.34 & 0.29 & 0.26 & 0.28 &  & 0.28 & 0.44\\
 & (0.32) &  & (0.36) & (0.24) &  & (0.42) & (0.23) & (0.27) & (0.25) & (0.27) &  & (0.25) & (0.37)\\
 \addlinespace
XLearner & 0.32 &  & 0.33 & 0.32 &  & 0.5 & 0.36 & 0.29 & 0.25 & 0.23 &  & 0.29 & 0.36\\
 & (0.14) &  & (0.13) & (0.16) &  & (0.17) & (0.11) & (0.09) & (0.07) & (0.04) &  & (0.09) & (0.18)\\
\bottomrule
\end{tabular}
}
\end{table}

\begin{table}[!ht]
\small
\centering
\caption{Average nMAE among treated units. Standard deviations shown in parentheses.} \label{tab:avg_nmae_treated}
\centering
\resizebox{\textwidth}{!}{
\begin{tabular}[t]{lccccccccccccc}
\toprule
 & && \multicolumn{2}{c}{Adaptive} && \multicolumn{5}{c}{Proportion treated} && \multicolumn{2}{c}{Treatment effect}\\
\cline{4-5}\cline{7-11}\cline{13-14}
\addlinespace
 & All  && N & Y && 0.05 & 0.25 & 0.5 & 0.75 & 1.0 && Add. & Mult.\\
\midrule
\addlinespace
SNAP   &&&&&&&\\
\cline{1-1}
\addlinespace
PaCE & 0.1 &  & 0.11 & 0.09 &  & 0.15 & 0.1 & 0.09 & 0.09 & 0.09 &  & 0.07 & 0.13\\
 & (0.08) &  & (0.04) & (0.1) &  & (0.13) & (0.06) & (0.05) & (0.05) & (0.05) &  & (0.03) & (0.09)\\
 \addlinespace
Causal Forest (R) & 0.13 &  & 0.18 & 0.07 &  & 0.18 & 0.13 & 0.12 & 0.11 & 0.1 &  & 0.11 & 0.14\\
 & (0.08) &  & (0.04) & (0.07) &  & (0.11) & (0.07) & (0.07) & (0.06) & (0.06) &  & (0.07) & (0.09)\\
 \addlinespace
CausalForestDML & 0.22 &  & 0.27 & 0.17 &  & 0.41 & 0.2 & 0.17 & 0.16 & 0.15 &  & 0.18 & 0.26\\
 & (0.24) &  & (0.11) & (0.31) &  & (0.45) & (0.09) & (0.08) & (0.07) & (0.07) &  & (0.12) & (0.31)\\
 \addlinespace
DML & 0.44 &  & 0.29 & 0.6 &  & 0.89 & 0.41 & 0.37 & 0.29 & 0.27 &  & 0.27 & 0.62\\
 & (0.72) &  & (0.19) & (0.98) &  & (1.44) & (0.37) & (0.28) & (0.17) & (0.18) &  & (0.22) & (0.97)\\
 \addlinespace
DRLearner & 2074.25 &  & 376.92 & 3771.58 &  & 170.11 & 2703.71 & 4134.22 & 3279.81 & 83.39 &  & 1405.57 & 2742.93\\
 & (4283.12) &  & (461.16) & (5542.67) &  & (263) & (5417.88) & (5543.25) & (4232.75) & (203.48) &  & (2139.72) & (5587.94)\\
 \addlinespace
ForestDRLearner & 4.86 &  & 1.19 & 8.54 &  & 20.94 & 1.39 & 0.88 & 0.63 & 0.47 &  & 4.08 & 5.64\\
 & (98.58) &  & (1.92) & (139.32) &  & (219.79) & (0.89) & (0.45) & (0.31) & (0.26) &  & (103.28) & (93.67)\\
\addlinespace
LinearDML & 0.26 &  & 0.25 & 0.26 &  & 0.31 & 0.28 & 0.27 & 0.22 & 0.22 &  & 0.12 & 0.39\\
 & (0.26) &  & (0.12) & (0.35) &  & (0.3) & (0.29) & (0.28) & (0.21) & (0.2) &  & (0.09) & (0.3)\\
 \addlinespace
MCNNM & 0.19 &  & 0.24 & 0.14 &  & 0.27 & 0.18 & 0.17 & 0.16 & 0.16 &  & 0.16 & 0.21\\
 & (0.18) &  & (0.05) & (0.24) &  & (0.34) & (0.1) & (0.09) & (0.08) & (0.08) &  & (0.09) & (0.23)\\
 \addlinespace
XLearner & 0.92 &  & 0.44 & 1.41 &  & 2.67 & 0.75 & 0.54 & 0.38 & 0.28 &  & 0.58 & 1.27\\
 & (2.77) &  & (0.19) & (3.85) &  & (5.81) & (0.62) & (0.35) & (0.15) & (0.07) &  & (0.49) & (3.85)\\
\addlinespace
State &&&&&&&\\
\cline{1-1}
\addlinespace
My Estimator & 0.21 &  & 0.23 & 0.19 &  & 0.36 & 0.24 & 0.17 & 0.15 & 0.13 &  & 0.18 & 0.24\\
 & (0.17) &  & (0.19) & (0.15) &  & (0.27) & (0.14) & (0.09) & (0.08) & (0.07) &  & (0.13) & (0.2)\\
 \addlinespace
Causal Forest (R) & 0.3 &  & 0.32 & 0.28 &  & 0.46 & 0.33 & 0.29 & 0.23 & 0.19 &  & 0.27 & 0.33\\
 & (0.22) &  & (0.24) & (0.19) &  & (0.37) & (0.17) & (0.13) & (0.09) & (0.08) &  & (0.15) & (0.26)\\
 \addlinespace
CausalForestDML & 0.26 &  & 0.33 & 0.2 &  & 0.43 & 0.27 & 0.22 & 0.21 & 0.19 &  & 0.24 & 0.28\\
 & (0.24) &  & (0.31) & (0.12) &  & (0.46) & (0.13) & (0.1) & (0.09) & (0.08) &  & (0.16) & (0.3)\\
 \addlinespace
DML & 0.34 &  & 0.38 & 0.29 &  & 0.51 & 0.34 & 0.29 & 0.27 & 0.26 &  & 0.3 & 0.37\\
 & (0.23) &  & (0.25) & (0.2) &  & (0.34) & (0.21) & (0.17) & (0.14) & (0.14) &  & (0.2) & (0.25)\\
 \addlinespace
DRLearner & 1704.15 &  & 791.94 & 2615.89 &  & 4387.86 & 3853.02 & 165.89 & 68.68 & 47.43 &  & 1413.19 & 1995.26\\
 & (11821.06) &  & (5032.31) & (15890.26) &  & (23128.14) & (11970.8) & (1085.93) & (353.98) & (662.49) &  & (6457.8) & (15418.33)\\
 \addlinespace
ForestDRLearner & 427.34 &  & 849.26 & 5.63 &  & 2134.78 & 1.45 & 0.46 & 0.29 & 0.26 &  & 329.6 & 525.13\\
 & (4126.02) &  & (5805.96) & (17.91) &  & (9032.24) & (2.32) & (0.33) & (0.17) & (0.16) &  & (2885.66) & (5071.02)\\
\addlinespace
LinearDML & 0.43 &  & 0.52 & 0.35 &  & 0.81 & 0.42 & 0.35 & 0.31 & 0.28 &  & 0.39 & 0.48\\
 & (0.53) &  & (0.65) & (0.36) &  & (1.06) & (0.24) & (0.17) & (0.14) & (0.12) &  & (0.27) & (0.7)\\
 \addlinespace
MCNNM & 0.23 &  & 0.29 & 0.18 &  & 0.25 & 0.21 & 0.21 & 0.24 & 0.25 &  & 0.22 & 0.24\\
 & (0.17) &  & (0.21) & (0.08) &  & (0.3) & (0.11) & (0.1) & (0.09) & (0.11) &  & (0.11) & (0.21)\\
 \addlinespace
XLearner & 0.28 &  & 0.33 & 0.23 &  & 0.5 & 0.29 & 0.22 & 0.2 & 0.18 &  & 0.26 & 0.3\\
 & (0.31) &  & (0.38) & (0.21) &  & (0.61) & (0.12) & (0.08) & (0.07) & (0.05) &  & (0.15) & (0.41)\\
\addlinespace
County &&&&&&&\\
\cline{1-1}
\addlinespace
PaCE & 0.31 &  & 0.29 & 0.33 &  & 0.44 & 0.34 & 0.27 & 0.26 & 0.26 &  & 0.23 & 0.39\\
 & (0.34) &  & (0.13) & (0.46) &  & (0.6) & (0.36) & (0.19) & (0.13) & (0.11) &  & (0.09) & (0.46)\\
\addlinespace
Causal Forest (R) & 0.2 &  & 0.25 & 0.15 &  & 0.3 & 0.21 & 0.18 & 0.16 & 0.16 &  & 0.16 & 0.24\\
 & (0.16) &  & (0.12) & (0.18) &  & (0.26) & (0.15) & (0.11) & (0.08) & (0.07) &  & (0.1) & (0.19)\\
 \addlinespace
CausalForestDML & 0.28 &  & 0.3 & 0.26 &  & 0.51 & 0.29 & 0.22 & 0.19 & 0.18 &  & 0.22 & 0.34\\
 & (0.35) &  & (0.18) & (0.46) &  & (0.69) & (0.21) & (0.12) & (0.09) & (0.07) &  & (0.11) & (0.48)\\
 \addlinespace
DML & 0.3 &  & 0.33 & 0.27 &  & 0.41 & 0.27 & 0.26 & 0.24 & 0.31 &  & 0.19 & 0.41\\
 & (0.34) &  & (0.24) & (0.41) &  & (0.39) & (0.26) & (0.26) & (0.24) & (0.45) &  & (0.15) & (0.42)\\
 \addlinespace
DRLearner & 1183.28 &  & 278.38 & 2088.17 &  & 1010.99 & 160.94 & 69.94 & 564.55 & 4109.95 &  & 443.32 & 1923.23\\
 & (10927.48) &  & (1008.75) & (15369.32) &  & (2371.12) & (956.66) & (448.59) & (1563.04) & (24023.26) &  & (1265.5) & (15367.99)\\
 \addlinespace
ForestDRLearner & 1.57 &  & 1.14 & 1.99 &  & 6.58 & 0.54 & 0.3 & 0.22 & 0.19 &  & 1.06 & 2.08\\
 & (12.1) &  & (2.9) & (16.86) &  & (26.48) & (0.41) & (0.17) & (0.1) & (0.08) &  & (5.01) & (16.35)\\
\addlinespace
LinearDML & 0.33 &  & 0.41 & 0.25 &  & 0.48 & 0.31 & 0.28 & 0.26 & 0.32 &  & 0.22 & 0.44\\
 & (0.39) &  & (0.29) & (0.46) &  & (0.56) & (0.27) & (0.28) & (0.27) & (0.44) &  & (0.17) & (0.5)\\
 \addlinespace
MCNNM & 0.56 &  & 0.49 & 0.62 &  & 0.73 & 0.58 & 0.51 & 0.49 & 0.48 &  & 0.45 & 0.67\\
 & (0.63) &  & (0.12) & (0.88) &  & (1.21) & (0.57) & (0.33) & (0.19) & (0.14) &  & (0.11) & (0.87)\\
 \addlinespace
XLearner & 0.36 &  & 0.32 & 0.4 &  & 0.67 & 0.39 & 0.29 & 0.24 & 0.22 &  & 0.28 & 0.44\\
 & (0.52) &  & (0.14) & (0.72) &  & (1.04) & (0.34) & (0.17) & (0.09) & (0.06) &  & (0.12) & (0.72)\\
\bottomrule
\end{tabular}
}
\end{table}

\end{document}